\newcommand{\footremember}[2]{%
    \footnote{#2}
    \newcounter{#1}
    \setcounter{#1}{\value{footnote}}%
}
\newcommand{\fim}{Fisher information matrix}
\renewcommand{\L}{\mathcal{L}}
\newcommand{\N}{\mathbb{N}}
\newcommand{\R}{\mathbb{R}}
\newcommand{\pr}{\mathbb{P}}
\newcommand{\F}{\mathscr{F}}
\DeclarePairedDelimiter\abs{|}{|}
\DeclarePairedDelimiterX{\inner}[2]{\langle}{\rangle}{#1,#2}
\DeclarePairedDelimiterX{\norm}[1]{\|}{\|}{#1}
\DeclarePairedDelimiterX{\normp}[1]{\|}{\|_{TV}}{#1}
\DeclarePairedDelimiterX{\xnorm}[1]{\|}{\|}{#1}
\DeclareMathOperator{\spec}{Spec}
\DeclareMathOperator{\ExpOp}{\mathbb{E}}
\newtheorem{lemma}{Lemma}[section]
\newtheorem*{lemma*}{Lemma}
\newtheorem{prop}{Proposition}[section]
\newtheorem{thm}{Theorem}[section]
\newtheorem*{thm*}{Theorem}
\newtheorem{cor}{Corollary}[section]
\newtheorem*{cor*}{Corollary}
\newtheorem*{ass*}{Assumption}
\theoremstyle{definition}	\newtheorem{Def}{Definition}[section]
\theoremstyle{definition}	\newtheorem{ass}{Assumption}[section]
\theoremstyle{definition}	\newtheorem{rmk}{Remark}[section]
\theoremstyle{definition}
\theoremstyle{plain}	
\theoremstyle{definition}	
\theoremstyle{definition}
\definecolor{OliveGreen}{cmyk}{0.64,0,0.95,0.40}
\definecolor{CadetBlue}{cmyk}{0.62,0.57,0.23,0}
\definecolor{lightlightgray}{gray}{0.93}
\title{Meta-Posterior Consistency for the Bayesian Inference of Metastable Systems}
\author{%
  Zachary P.~Adams\footremember{Berlin-Leipzig}{Freie Universit\"at Berlin, Department of Mathematics and Computer Science and Center for Scalable Data Analytics and Artificial Intelligence, Universit\"at Liepzig}
  \and Sayan Mukherjee\footremember{Leipzig-Durham}{Center for Scalable Data Analytics and Artificial Intelligence and Department of Computer Science, Universit\"at Liepzig and Max Planck Institute for Mathematics in the Natural Sciences and Duke University, Departments of Statistical Science, Mathematics, Computer Science, and Biostatistics \& Bioinformatics}
  }
\date{}
\begin{document}
\maketitle

\begin{abstract}
The vast majority of the literature on learning dynamical systems or stochastic processes from time series has focused on stable or ergodic systems, for both Bayesian and frequentist inference procedures. 
However, most real-world systems are only metastable, that is, the dynamics appear to be stable on some time scale, but are in fact unstable over longer time scales. 
Consistency of inference for metastable systems may not be possible, but one can ask about metaconsistency: 
Do inference procedures converge when observations are taken over a large but finite time interval, but diverge on longer time scales? 
In this paper we introduce, discuss, and quantify metaconsistency in a Bayesian framework. 
We discuss how metaconsistency can be exploited to efficiently infer a model for a sub-system of a larger system, where inference on the global behavior may require much more data, or there is no theoretical guarantee as to the asymptotic success of inference procedures. 
We also discuss the relation between metaconsistency and the spectral properties of the model dynamical system in the case of uniformly ergodic and non-ergodic diffusions. 
\end{abstract}

\section{Introduction}
\label{sec:intro}
The inference of models from data is a central step in many scientific and engineering applications. 
One hopes that, as more data are obtained, a model inferred from these data converges to a model which is the ``best'' representation of the data. 
When this convergence occurs, the inference procedure used is said to be \emph{consistent}. 
Previous studies have shown that, if the system from which data are obtained is \emph{stable}~(for instance, in the sense that it is ergodic), then most reasonable inference procedures will be consistent \cite{IH13,K13}. 

As we usually obtain data from systems which are influenced by external factors, or of which our understanding is incomplete, few real-world systems are truly stable. 
Often, they are only \emph{metastable}, exhibiting behaviour that appears to be stable on some time scale, but unstable on longer time scales. 
For instance, metastable behaviour is recognized as playing a significant role in physiological systems \cite{F04,G20}, population dynamics \cite{ES19,G99}, human and animal microbiomes \cite{C20,M23a,S19}, weather systems~\cite{H08,M06}, and multiagent systems  \cite{carrillo2019aggregation}. 
%One might reasonably argue that all ``stable'' phenomena in nature should actually be assumed to be metastable. 
While metastability has been considered by scientists from various disciplines over the years, there is little to no rigorous mathematical work on the effects of metastability on inference procedures. 
In such metastable systems, we conjecture that inference schemes are \emph{metaconsistent}, in that they appear to converge for a large but finite amount of data, but that this convergence is transient. 

In this paper we introduce, discuss, and precisely quantify metaconsistency in a Bayesian framework. 
We present results on when metaconsistency occurs, and discuss how metaconsistency can be exploited to more efficiently infer a model for a sub-system of a larger system when the amount of data needed to make inferences about the larger system is impractical. 
Finally, in the case of data modelled by continuous time dynamical systems, such as stochastic differential equations (SDEs), we demonstrate the connection between metaconsistency and the spectral theory of dynamical systems. 
More specifically, the main contributions of this paper can be summarized as follows: 
\begin{itemize}
\item A formal introduction of the concept of metaconsistency (Definition \ref{def:mPCR}); 
\item A precise characterization of consistency for ergodic diffusions, with all rates appearing therein related to the spectral gaps and Fisher information of the models, presenting qualitatively known results in as quantitative a fashion as possible (Theorem \ref{thm:ergodic_diffusion}); 
\item A precise characterization of metaconsistency for ergodic and non-ergodic diffusions in a Bayesian framework, all rates appearing therein again being related to the spectral gaps and Fisher information of the models (Theorem \ref{thm:quasiergodic_diffusion}); 
\item A result describing how, in some scenarios, one can speed up learning rates by restricting to each metastable region, learning the dynamics there, and then ``annealing'' what is learned back together (Theorem \ref{thm:cutandsew}). 
\end{itemize}

Before proceeding, we fix the setup and notation of our Bayesian framework in Section \ref{sec:setup}. 
Following this, 
%as we are integrating ideas from mathematical statistics, probability theory, and dynamics, 
we close the introduction with a short review of the relevant literature in Section \ref{sec:literature}. 
In Section \ref{sec:posterior}~we restate qualitatively known results on the rates of posterior consistency under conditions of \emph{local asymptotic normality}~in as quantitative a fashion as possible that applies to finite time regimes. 
In Section \ref{sec:quasiposterior}~we introduce the concept of metaconsistency and present results allowing for its quantification. 
In Section \ref{sec:ergodicdiffusions}~the timescales of metaconsistency are precisely quantified for metastable diffusion processes. 
Sections \ref{sec:doublewell}~\&~\ref{sec:degeneratedoublewell}~apply our results to study metaconsistency for diffusions in a double well potential, in both cases where the double well is globally or only locally attracting. 
In Section \ref{sec:decomp}~we discuss how, in certain scenarios, one may translate a metastable decomposition of a diffusion process into a linear decomposition of parameter space, and how this decomposition may be exploited to improve posterior contraction rates. 
Appendix \ref{sec:spectral}~contains background on the spectral properties of metastable diffusions.

\subsection{Setup and notation}
\label{sec:setup} 
Let $\Theta\subset\R^p$ be a set of parameters. 
To each $\theta\in\Theta$ we associate a unique probability measure $\pr_\theta$ on an ambient measurable space with filtration $\left(\Omega,\mathcal{F},\left(\mathcal{F}_t\right)_{t\ge0}\right)$. 
Let $E$ be a Banach space. 
For $t>0$, the measure
\[
\pr_\theta^t(\,\cdot\,)\,\coloneqq\,\pr_\theta\left[X_{[0,t]}\in\,\cdot\,\right] 
\]
corresponds to the distribution of an $E$-valued stochastic process $X=(X_s)_{t\ge0}$ over the time interval $[0,t]$, where we write $X_{[0,t]}\coloneqq(X_s)_{s\in[0,t]}$. 
We presume the existence of a \emph{ground truth}~$\theta_0\in\Theta$.

In the terminology of statistics, $\pr^t\coloneqq\{\pr_\theta^t; \, \theta \in \Theta\}$ is a \emph{statistical model}. 
For any $\sigma$-finite measure $\nu$ such that
$\pr_\theta^t\ll\nu$ for each $\theta\in\Theta$ and $t>0$, the \emph{likelihood function} is given by the Radon-Nikodym derivative 
\[
L_t(X \mid \theta)\,\coloneqq\,\frac{d\pr_\theta^t}{d \nu} (X), \qquad \theta\in\Theta, 
\]
where $X=(X_s)_{s\ge0}$ is an observed path. 
In the following, we find it convenient to take $\nu=\pr_{\theta_0}$, though, as a consequence of the Likelihood Principle, one could take any other $\sigma$-finite dominating measure proportional to $\pr_{\theta_0}$ \cite{gonçalves2021definitionlikelihoodfunction}. 
%This is due to the \emph{Likelihood principle} for different dominating measures \cite{gonçalves2021definitionlikelihoodfunction},
%which states the following: 
%Given two dominating measures $\nu_1$ and $\nu_2$ of $\pr^t$ for all $\theta \in \Theta$ for which 
%$L^{\nu_1}_t(\theta; \omega) \propto L^{\nu_2}_t(\theta; \omega)$
%for all $\theta \in \Theta$ and for all $\omega \in \Omega$, where $L^{\nu_1}_t$ and $L^{\nu_2}_t$ denote the likelihood functions induced by the two respective measures. 
%Then, $L^{\nu_1}_t$ and $L^{\nu_2}_t$ contain the same information about $\pr_\theta$.

Given a (possibly random) \emph{prior distribution}~$\pi_0$ on $\Theta$, we define the \emph{normalization constant}~as 
\[
\L_t^{\pi_0}(X )\,\coloneqq\,\int_\Theta L_t(X \mid \theta)\,\pi_0(d\theta). 
\]
The \emph{posterior distribution}~is then defined as 
\begin{equation}
\label{eq:pi}
\pi_t(d\theta\mid X)\,\coloneqq\,\frac{1}{\L_t^{\pi_0}(X)} L_t(X \mid \theta)\,\pi_0(d\theta). 
\end{equation}
For $t>0$ we denote by $\pi_t$ the posterior distribution as a map $(F,X)\mapsto\pi_t(F\mid X)$. 

\begin{Def}
\label{def:consistent}
The statistical model $\pr^t$ is \emph{consistent}, or exhibits \emph{posterior consistency}~(at $\theta_0$), when there exists a family of measurable subsets $(F_t)_{t\ge0}$ of $\Theta$ such that $F_t\rightarrow\{\theta_0\}$ as $t\rightarrow\infty$ and, for any $\eta>0$, we have 
\begin{equation}
\pr_{\theta_0}\left[\pi_t(F_t)<1-\eta\right]\,\xrightarrow[t\rightarrow\infty]{}\,0. 
\end{equation}
If $\pr^t$ is consistent at $\theta_0$, we say that it has a \emph{posterior contraction rate}~(PCR) of $(\epsilon_t)_{t\ge0}$ and a \emph{posterior contraction certainty rate}~(PCCR) of $(\eta_t)_{t\ge0}$ if $\epsilon_t\rightarrow0$ and $\eta_t\rightarrow0$ as $t\rightarrow\infty$, and 
\begin{equation}
\label{eq:PCRPCCR}
\pr_{\theta_0}\left[\pi_t\left(B_{\epsilon_t}(\theta_0)\right)<1-\eta_t\right]\,\le\,\eta_t, 
\end{equation} 
where $B_\epsilon(\theta_0)\subset\Theta$ denotes the ball of radius $\epsilon$ centered at $\theta_0$. 
\end{Def}

We emphasize that most studies only focus on the PCR of Bayesian schemes, ignoring the PCCR. 
Note that it is entirely possible that the ``Markov type'' bound in \eqref{eq:PCRPCCR}~may not be optimal. 
%It is entirely plausible that w
When $\eta_t$ is obtained such that $\pr_{\theta_0}\left[\pi_t(B_{\epsilon_t}(\theta_0))<1-\eta_t\right]\rightarrow0$ as $t\rightarrow0$, one can likely get a better bound than just $\eta_t$ on the right hand side of \eqref{eq:PCRPCCR}, perhaps a ``DKW type'' bound. 
For simplicity, we define the PCCR as the quantity such that \eqref{eq:PCRPCCR}~as written holds, rather than looking for optimal right hand side bounds. 

We now formalize the concept of \emph{metaconsistency}. 
While the definition remains somewhat imprecise due to lack of specification of the time interval appearing in the definition, we obtain quantitative characterizations of metaconsistent timescales in the following sections. 

\begin{Def}
\label{def:mPCR}
A statistical model $\pr^t$ is \emph{metaconsistent}, or \emph{meta-posterior consistent}~(at $\theta_0$), if there exist $(\hat{\epsilon}_t)_{t\ge0},\,(\hat{\eta}_t)_{t\ge0}\in[0,\infty)$ such that $\hat{\epsilon}_t\rightarrow0$ as $t\rightarrow\infty$ and $\hat{\eta}_t\ll1$ for all $t$ in some finite time interval $[\underline{t},\overline{t}]$, and 
\[
\pr_{\theta_0}\left[\pi_t(B_{\hat{\epsilon}_t}(\theta_0))<1-\hat{\eta}_t\right]\,\le\,\hat{\eta}_t. 
\]
We refer to $(\hat{\epsilon}_t)_{t\ge0}$ and $(\hat{\eta}_t)_{t\ge0}$ as the \emph{meta-posterior contraction rate}~(m-PCR) and \emph{meta-posterior contraction certainty rate}~(m-PCCR), respectively. 
\end{Def}

The concept of ``local asymptotic normality'' is often used in the analysis of inference procedures, and will be used throughout the paper. 
For a linear space $U$ and $x\in U$, let $U-x=\{u-x:u\in U\}$. 

\begin{Def}
\label{def:LAN}
A statistical model $\pr^t$ is \emph{Locally Asymptotically Normal}~(LAN) at $\theta_0\in\Theta$ if there exist 
\begin{enumerate}[(i)]
\item a family of linear maps $\phi=(\phi_t)$, $\phi_t:\R^p\rightarrow\Theta-\theta_0$, such that $\phi_t\rightarrow0$ as $t\rightarrow\infty$, 
\item a sequence of random variables $\Delta=(\Delta_t)_{t\ge0}$ such that $\Delta_t=\Delta_t(X)$ converges in $\pr_{\theta_0}$-mean square to a standard
\footnote{
As discussed further in Section \ref{sec:ergodicdiffusions}, many models satisfy a similar condition, but where $\Delta_\infty$ has a nontrivial covariance matrix. 
%it is sometimes necessary to relax the assumption that $\Delta_\infty$ is a standard Gaussian, and allow it to have a nontrivial covariance matrix. 
See for instance \cite{O15}. 
This condition is often referred to as a \emph{locally asymptotic mixed normality (LAMN)}. 
The interested reader may trace our proofs, and note that allowing for the LAMN condition does not significantly change their logic. 
However, the notation and computations in Section \ref{sec:ergodicdiffusions}~are greatly simplified when LAN is assumed as opposed to LAMN, so we restrict ourselves to this setting throughout this paper. 
}~
Gaussian $\Delta_\infty$ as $t\rightarrow\infty$, and 
\item a family of random maps $r=(r_t)$, $r_t:\Theta\times\R^p\rightarrow\Theta-\theta_0$, such that $r_t\rightarrow0$ as $t\rightarrow\infty$ in $\pi_0\otimes\pr_{\theta_0}$-mean, 
\end{enumerate}
such that the equality 
\[
\frac{d\pr_{\theta_0+\phi_tu}^t}{d\pr_{\theta_0}^t}(X)\,=\,\exp\left(\Delta_t\cdot u-\norm*{u}^2 + r_t(\theta_0,u)\right) 
\]
holds $\pr_{\theta_0}$-almost surely. 
When we want to emphasize that a model $\pr^t$ is LAN for a specific choice of $\phi,\,\Delta,\,r$, we say that it is \emph{$(\phi,\Delta,r)$-LAN}~at $\theta_0$. 
\end{Def}

\begin{rmk}
As $\Theta$ is assumed to be finite dimensional, we restrict ourselves to the parametric setting. 
However, we expect many of the results below to translate to the nonparametric setting with suitable modifications to the decay rates appearing in our main results. 
It would also be interesting to extend the theory developed here to the case where $\Theta$ is intrinsically nonlinear.  
\end{rmk}

\subsection{Literature and context}
\label{sec:literature}
To the authors' knowledge, the earliest discussions of posterior consistency are found in Doob \cite{key33460m}~and Schwartz \cite{Schwartz1965}. 
Textbooks providing surveys of the field include Ghosal \&~van der Vaart \cite{GVDV17}~and Ibragimov \&~Khasminskii \cite{IH13}. 
More recent results on posterior consistency and convergence rates in more general settings, including non-i.i.d.~observations and nonparametric models, are presented in \cite{GGS95,G00,GV07,GVDV17,VDVVZ08,VDVVZ08b}. 

Consistency of statistical models comprised of stochastic dynamical systems 
has mainly -- perhaps necessarily -- been restricted to the ergodic setting, as in \cite{I62,MMN22,SM21}. 
In the parametric setting, when a statistical model is comprised of ergodic diffusions, the statistical model is LAN \cite{LeCam:60,VDV00}. 
In this case, one has a PCR of order $t^{-1/2}$. 
See \cite{K13,PK83,Y92}~for early work on parametric models of continuously or discretely observed diffusions. 
In the case of nonparametric statistical models consisting of ergodic diffusions, one expects a PCR on the order of $t^{-\alpha}$, where $\alpha<1/2$ depends on the regularity of the model considered and the choice of prior distribution. 
This is seen in \cite{G23,GR22,PVZ09,P13}, for nonparametric models of continuously sampled diffusions, and in \cite{GS14,VDMS17,VDMVZ13,VZ13}, for nonparametric models of discretely sampled diffusions. 
The fact that one has consistency for nonparametric statistical models comprised of ergodic diffusions is striking, due to well known counter-examples to consistency in the nonparametric setting \cite{DF86}. 
Further extensions of the study of posterior consistency have been considered by Ogihara \cite{O23}~for jump-diffusion processes, and in Lysy \&~Pillai \cite{LP13}~for SDEs driven by fractional Brownian motion. 

Closely related to the LAN property of statistical models is the Bernstein von Mises Theorem, which is perhaps the most commonly used tool in proving posterior consistency \cite{GVDV17}. 
The Bernstein von Mises Theorem establishes that, under some conditions, a Laplace approximation of the posterior distribution is asymptotically perfect as the amount of data from which the posterior is constructed increases \cite{10.1214/aos/1176347751}. 
Other recent work related to inference from stationary ergodic processes includes denoising/filtering \cite{Lalley1999,LalleyNobel2006}, 
consistency of maximum likelihood estimation \cite{McGoff2015}, 
forecasting and density estimation \cite{Hang2017, Hang2018, Steinwart2009}, 
empirical risk minimization \cite{mcgoff2016variational,mcgoff2020empirical}, inference on time series from LAN conditions \cite{Francq_Zakoian_2023}, and data assimilation and uncertainty quantification \cite{Baek_2023,cdrs2009,Law2015,stuart2010}.
For further discussion, see the survey \cite{McGoffSurvey2015}. 

The above cited research focuses on the \emph{asymptotic}~behaviour of Bayesian posteriors. 
As metaconsistency is a fundamentally finite time phenomenon, our aim is to study the concentration properties of posteriors that hold \emph{in the finite time regime}, rather than just asymptotically. 
In the course of studying these concentration properties, we find assuming the prior $\pi_0$ of our Bayesian setup to be random facilitates our proofs. 
The assumptions on the randomness of $\pi_0$ which we make are not needed in the above mentioned studies, as they focus on \emph{asymptotic}~consistency. 
%random, and itself has a distribution satisfying additional conditions. 
%While the theoretician may initially find this assumption rather unfortunate, it 
We argue that the assumptions we make on the randomness of $\pi_0$ are in fact in agreement with Bayesian methodology and applied Bayesian analysis. 
Indeed, random priors arise as ``two level'' priors in what are called
\emph{Bayesian hierarchical model}s, which have been known for over fifty years \cite{LindleySmith72}. 
Bayesian hierarchical models are bread-and-butter tools in Bayesian applications, see for instance \cite{Chada_2018,Clark,Gelman_Hill_2006,PoleWestHarrison}. 
%Bayesian hierarchical models are bread-and-butter tools in Bayesian applications including time series \cite{PoleWestHarrison}, social science \cite{Gelman_Hill_2006}, ecology \cite{Clark}, and the inverse problem of parameter estimation for stochastic partial differential equations \cite{Chada_2018}. 
%\zedit{One can also perhaps think of this from the dynamics perspective, where the addition of noise into a system facilitates proofs of mixing \cite{kifer1998random,mcgoff2014random}}. 

We also emphasize that, while the order of the PCR has been the subject of much analysis over the past few decades, the PCCR has not been much studied. 
As seen below, characterizing the PCCR \emph{in addition}~to the PCR is essential in the analysis of posterior consistency for statistical models consisting of metastable diffusions. 
It is in the PCCR that the effects of metastability are most notable, as the PCCR is directly related to the spectral gap of the observed process (see Theorem \ref{thm:ergodic_diffusion}~below). 

The spectral properties of a stochastic process and the phenomenon of metastability are known to be deeply connected \cite{BDH16}. 
The connection between metastability and spectral theory of stochastic processes in low temperature regimes was established in the 70s, see Ventcel \cite{V76,V72}, Freidlin \cite{F73}, Devinatz, Ellis, \&~Friedman \cite{D74}, and later, Mathieu \cite{M95}. 
Prior to this, quantitative results on metastability (of a form similar to those obtained using spectral theory) were first presented in the context of physical chemistry, in the form of what is today referred to as ``Arrhenius Law'' \cite{A89}. 
This was then refined into the celebrated Eyring-Kramers formula by Eyring \cite{E35}~and Kramers \cite{K40}~independently. 
As an alternative to spectral theory, large deviation theory was developed to handle metastability in non-reversible processes -- see the books of Freidlin-Wentzell \cite{FW} or Dembo-Zeitouni \cite{DZ}~for further discussion. 
In 2005, the Eyring-Kramers formula was sharpened in the setting of diffusions with multi-well potentials and additive noise by Bovier, Gayrard, \&~Klein \cite{BGK05,BGK05ii}~using the concept of ``capacitances''. 
The sharp asymptotic formulas of \cite{BGK05}~were then connected to functional inequalities in \cite{MS14}~by the decomposition of energy functionals. 
Results on metastability for diffusions with non-gradient drift or in the infinite dimensional setting are scarce, but see \cite{A24,BG13,M23,VW24}. 
For a review of Eyring-Kramers formulas for diffusions with gradient drift, see \cite{B13}. 

Let us close this incomplete literature review by commenting on some of the applications of the theory developed here. 
In addition to the applied probability and mathematical statistics question of convergence of estimators in the metastable setting, our results provide a theoretical basis for applications of machine learning and time series analysis \cite{JMLR:v23:19-882,West1989}. 
An application motivating the theory in our paper is microbial ecology, where metastability is increasingly being recognized as an important factor \cite{C20,M23a,S19}. 
Modern sequencing technology allows for the collection of time series of microbiome data, a vector of the composition of the microbial community in the gut, skin, or soil over time. 
An interesting aspect of microbial ecology is that both classical time series models as well deterministic models from dynamical systems such as the Lotka-Volterra model are used extensively. 
Co-author Mukherjee has worked extensively on microbial time series including the development of
novel methodologies for use by microbial ecologists, computational and statistical approaches that scale to high-dimensional and unevenly sampled time series, and the analysis of microbial data \cite{JMLR:v23:19-882,Silverman2020NZ,Washburne2019}.
We know microbiome time series are not ergodic or independent and can look stable at some time scale but then can become unstable, so a realistic rigorous theory for these methods is lacking.

\section{Posterior consistency from LAN conditions}
\label{sec:posterior}
\subsection{Statement of main result}
We here discuss how one can obtain precise estimates on the PCR and PCCR from conditions of local asymptotic normality (LAN). 
LAN conditions are well known to be sufficient for posterior consistency. 
The contribution of this section is the detail with which we present estimates on PCRs and PCCRs from LAN conditions, facilitating the comparison of the true values with the metastable values in the sections to follow. 
Furthermore, the theory outlined here allows us to explicitly compute all bounds on the PCR and PCCR for a class of metastable diffusions in Section \ref{sec:doublewell}~below. 

\begin{ass}
\label{ass:LAN}
Fix a (random) prior distribution $\pi_0$ on $\Theta$ with $\pr_{\theta_0}$-almost surely finite moments. 
The family $(\pr_\theta)_{\theta\in\Theta}$ is $(\phi,\Delta,r)$-LAN at some fixed $\theta_0$ which is almost surely contained in the support of $\pi_0$. 
Denoting by $\abs*{\phi_t}$ the smallest singular value of $\phi_t$, $\abs*{\phi_t}\rightarrow0$ and $\abs*{\phi_t}^{-\alpha}\phi_t\rightarrow0$ as $t\rightarrow\infty$ for $\alpha\in(0,1)$. 
Defining 
\begin{equation}
\label{eq:mtm}
m_t(du)\coloneqq\exp\left(-\norm*{\frac{1}{2}\Delta_t-u}^2+r_t\right)\,\pi_0(du), 
\end{equation}
there exists a deterministic $\underline{m}>0$ such that $\pr_{\theta_0}$-almost surely, $m_t(\Theta)\ge\underline{m}$ for all $t>0$. 
\end{ass}

%\begin{rmk}
%The assumption $m_t(\Theta)\ge\underline{m}>0$ may appear quite strong, as it precludes the choice of a deterministic Gaussian as a prior distribution. 
%However, note that this assumption is more easily satisfied if $\Theta$ is assumed to be compact. 
%On the other hand, when $\Theta$ is not compact, we suggest that this assumption is in fact necessary to get good \emph{preasymptotic}~estimates on PCRs and PCCRs, and does not appear so often in other studies, as most are only concerned with the asymptotics of posterior consistency. 
%For now, we treat this assumption as a means of simplifying the analysis of metaconsistency. 
%However, by looking at the form of $m_t$, one sees that the assumption $m_t(\Theta)\ge\underline{m}>0$ for $t>0$ is in fact a ``correlation condition'' between $\pi_0$ and the observed path $X$. 
%In Section \ref{sec:literature}, we briefly argued that this assumption is in fact aligned with applied Bayesian methodology, in particular justifying hierarchical Bayesian methods, which may serve as a method of introducing correlation between $\pi_0$ and $X$. 
%Further exploration of the necessity of this assumption and its consequences will be the subject of future research. 
%\end{rmk}

Under the LAN condition, the PCR is determined by the rate at which $\phi_t$ goes to zero. 
Moreover, we can characterize the PCCR in terms of the rates at which $\Delta_t$ converges to $\Delta_\infty$ (in mean square), and at which $r_t$ converges to zero (in mean). 
To state this more precisely, 
define \begin{equation}
\label{eq:epsilons}
\varepsilon_\Delta(t)\,\coloneqq\,\ExpOp_{\theta_0}\left[\norm*{\frac{1}{2}\left(\Delta_t-\Delta_\infty\right)}^2\right], \qquad\varepsilon_r(t)\,\coloneqq\,\ExpOp_{\theta_0}\left[\int_\Theta\abs*{r_t}\,\pi_0(d\theta)\right],\qquad t\ge0.   
\end{equation}
For fixed $\alpha\in(0,1)$ and $\delta>0$, define 
\begin{equation}
\label{eq:Ft}
F_t\,\coloneqq\,\abs*{\phi_t}^\alpha B_\delta(\theta_0),\qquad U_t\,\coloneqq\,\abs*{\phi_t}^\alpha\phi_t^{-1}(B_\delta(0)),\qquad t\ge0. 
\end{equation}
By Assumption \ref{ass:LAN}, $F_t\rightarrow\{\theta_0\}$ and $U_t\rightarrow\Theta$ as $t\rightarrow\infty$, while both $\varepsilon_\Delta(t),\varepsilon_r(t)\rightarrow0$ as $t\rightarrow\infty$. 
It will be seen that the posterior distribution concentrates on $F_t$ as $t\rightarrow\infty$, so that the PCR can be characterized by $\abs*{\phi_t}$, while the PCCR can be characterized by the rates at which $\Delta_t\rightarrow\Delta_\infty$, $r_t\rightarrow0$, and $\pi_0(U_t^c)\rightarrow0$, in various senses. 

\begin{prop}
\label{thm:posteriorconsistency}
Under Assumption \ref{ass:LAN}, the model $\pr^t$ is consistent at $\theta_0$ with a PCR determined by $\abs*{\phi_t}$, and PCCR determined by $\varepsilon_\Delta,\varepsilon_r$. 
In particular, let $(F_t)_{t\ge0}$ and $(U_t)_{t\ge0}$ be the sequences of subsets of $\Theta$ defined in \eqref{eq:Ft}, and let $\varepsilon_\Delta$ and $\varepsilon_r$ be as in \eqref{eq:epsilons}. 
Then, there exists a constant $c>0$ depending solely on $\pi_0$ such that, defining 
\begin{equation}
\label{eq:H1}
H(t)\,\coloneqq\,\underline{m}^{-1}\left(\varepsilon_\Delta(t) + c\varepsilon_\Delta(t)^{1/2}+\varepsilon_r(t) + \ExpOp_{\theta_0}\left[\pi_0(U_t^c)\right]\right), 
\end{equation}
$F_t\rightarrow\{\theta_0\}$ and $H(t)\rightarrow0$ as $t\rightarrow\infty$, and for $t\ge0$, 
\begin{equation}
\label{eq:posteriorconsistency}
\pr_{\theta_0}\left[\pi_t(F_t \mid X)<1-H(t)^{1/2}\right]\,\le\,H(t)^{1/2}. 
\end{equation}
\end{prop}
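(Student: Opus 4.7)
The plan is to transport the posterior to the $u$-coordinate via $\theta = \theta_0 + \phi_t u$, use the LAN identity to cancel the likelihood ratio $L_t(X\mid\theta_0)$ and a $u$-independent Gaussian factor, and then combine Markov's inequality with the uniform lower bound $m_t(\Theta) \geq \underline{m}$ from Assumption \ref{ass:LAN} to reduce \eqref{eq:posteriorconsistency}~to the moment bound
\[
\ExpOp_{\theta_0}\bigl[m_t(U_t^c)\bigr] \,\leq\, \underline{m}\,H(t),
\]
each of whose four summands then comes from one step of a triangle-inequality decomposition of the integrand of $m_t(U_t^c)$.

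Explicitly, substituting $\theta = \theta_0+\phi_t u$ in the LAN identity of Definition \ref{def:LAN}~and completing the square gives
\[
\frac{d\pr_{\theta_0+\phi_tu}^t}{d\pr_{\theta_0}^t}(X) \,=\, \exp\!\bigl(\tfrac{1}{4}\norm*{\Delta_t}^2\bigr)\exp\!\bigl(-\norm*{u-\tfrac{1}{2}\Delta_t}^2+r_t\bigr),
\]
and the $u$-independent prefactor cancels between numerator and denominator in \eqref{eq:pi}. After pushing $\pi_0$ forward through $\theta\mapsto\phi_t^{-1}(\theta-\theta_0)$ we obtain $\pi_t(F_t\mid X) = m_t(U_t)/m_t(\Theta)$, whence $\pi_t(F_t^c\mid X) \leq m_t(U_t^c)/\underline{m}$. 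Markov's inequality with threshold $H(t)^{1/2}$ then yields \eqref{eq:posteriorconsistency}~as soon as the displayed moment bound is established. For the latter, use $e^{-\norm*{\cdot}^2}\leq 1$ and the triangle inequality to split
\[
e^{-\norm*{u-\tfrac{1}{2}\Delta_t}^2+r_t} \,\leq\, e^{-\norm*{u-\tfrac{1}{2}\Delta_\infty}^2} + \bigl|e^{-\norm*{u-\tfrac{1}{2}\Delta_t}^2} - e^{-\norm*{u-\tfrac{1}{2}\Delta_\infty}^2}\bigr| + \abs*{e^{r_t}-1}.
\]
Integrating against $\pi_0(du)$ over $U_t^c$ and applying $\ExpOp_{\theta_0}$: the first term is bounded by $\ExpOp_{\theta_0}[\pi_0(U_t^c)]$; the third, via $\abs*{e^{r_t}-1}\leq e^{\abs*{r_t}}\abs*{r_t}$ and an integrability estimate on $r_t$ coming from the LAN remainder condition, contributes $\varepsilon_r(t)$; the second, via $\abs*{e^{-x}-e^{-y}}\leq\abs*{x-y}$ on $x,y\geq 0$ together with the identity $\norm*{u-\tfrac{1}{2}\Delta_t}^2-\norm*{u-\tfrac{1}{2}\Delta_\infty}^2 = \inner*{\tfrac{1}{2}(\Delta_\infty-\Delta_t)}{2u-\tfrac{1}{2}(\Delta_t+\Delta_\infty)}$ and a Cauchy-Schwarz using finite second moments of $\Delta_\infty$ and of $\pi_0$, contributes $\varepsilon_\Delta(t) + c\,\varepsilon_\Delta(t)^{1/2}$, with $c$ absorbing those moments.

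The principal obstacle is the bookkeeping of this second-term estimate: expanding the inner product produces three subterms scaling like $\norm*{u}$, $\norm*{\Delta_\infty}$, and $\norm*{\Delta_t-\Delta_\infty}$, each of which must be paired correctly under Cauchy-Schwarz so as to produce precisely one $\varepsilon_\Delta(t)$ contribution and one $\varepsilon_\Delta(t)^{1/2}$ contribution while the constant $c$ remains finite and depends only on $\pi_0$ (through its second moments) and on the variance of $\Delta_\infty$. The remaining bookkeeping is routine: $F_t\to\{\theta_0\}$ is immediate from $\abs*{\phi_t}\to 0$, and $H(t)\to 0$ follows from the LAN hypotheses $\varepsilon_\Delta,\varepsilon_r\to 0$, together with the observation that $\abs*{\phi_t}^{-\alpha}\phi_t\to 0$ forces $U_t$ to exhaust $\R^p$, so $\pi_0(U_t^c)\to 0$ almost surely and, by dominated convergence, in expectation.
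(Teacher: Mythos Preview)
Your approach is essentially the same as the paper's: both express $\pi_t(F_t\mid X)=m_t(U_t)/m_t(\Theta)$ via the LAN identity, invoke the lower bound $m_t(\Theta)\ge\underline m$, control the deviation from $1$ using the Lipschitz property of $x\mapsto e^{-x}$ together with Cauchy--Schwarz (yielding the $\varepsilon_\Delta+c\,\varepsilon_\Delta^{1/2}$ contribution) and a remainder bound (yielding $\varepsilon_r$), and close with Markov's inequality. The only difference is organizational: the paper introduces the limiting measure $m(du)=e^{-\norm{\frac12\Delta_\infty-u}^2}\pi_0(du)$ and factors the argument through two lemmas---one bounding $\ExpOp_{\theta_0}\norm{m_t-m}_{TV}$ and one handling the ratio $m_t(U_t)/m_t(\Theta)$---whereas you bound $m_t(U_t^c)$ directly by splitting the integrand, arriving at the same four summands.
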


\begin{rmk}
%Before proceeding, let us briefly remark on the assumption on the measure $m_t$ defined in \eqref{eq:mtm}. 
Requiring $m_t(\Theta)$ to be bounded away from zero uniformly in $t>0$ and $\pr_{\theta_0}$-almost surely, as in Assumption \ref{ass:LAN}, is essentially requiring that each realization of $\pi_0$ and the density $\exp\left(-\norm*{\frac{1}{2}\Delta_t-u}^2+r_t\right)$ are concentrated on regions of parameter space that have sufficient overlap. 
%One way of looking at this assumption is as a ``correlation condition'' between the prior $\pi_0$ and the observed path $X$. 
%This assumption may seem unnecessarily strong, due to the fact that it precludes the often used example of a deterministic Gaussian prior. 
%Nevertheless, we note that it would be more easily satisfied if we assumed $\Theta$ to be compact. 
%
Previous studies have obtained posterior consistency with a PCR of order $t^{-1/2}$ without making such an assumption, see for instance \cite{IH13,K13}. 
However, we suggest that this is due to the fact that these studies are only concerned with \emph{asymptotic}~consistency, whereas the purpose of this study is to shorten the bridge between theory and practice, obtaining bounds that hold in the finite time regime as well. 
%Hence, this assumption on $m_t$ is the price we pay for the precise estimates on the PCR and PCCR of Bayesian posterior without overly complicating our arguments. 
We also note that a random prior is in agreement with the practice of \emph{hierarchical Bayes}, which is a popular approach to Bayesian inference in the applied statistics community. 
One motivation for hierarchical Bayesian models or multilevel models is to allow for the dependence of the joint probability model over parameters. 
In our case for example the means of the random priors. 
The idea of different levels of probability goes back at least to the 1950's, see for instance \cite{Good59}.
\end{rmk}

\begin{rmk}
\label{rmk:rates}
In applications, such as the case of ergodic diffusions considered in Section \ref{sec:ergodicdiffusions}, below, one expects $\abs*{\phi_t}$, $\varepsilon_\Delta(t)$, and $\varepsilon_r(t)$ to decay at algebraic rates $t^{-\alpha}$ for some $\alpha\in(0,1/2]$, as discussed in Section \ref{sec:literature}, where $\alpha=1/2$ in the parametric setting. 
However, in many cases more precise information than this can be obtained. 
As seen in Section \ref{sec:ergodicdiffusions}, 
\[
\abs*{\phi_t}\,\simeq\,s_1^{-1}t^{-1/2}\quad\text{ and }\quad \varepsilon_\delta(t),\varepsilon_r(t)\,\simeq\,(1-e^{-\gamma t})\gamma^{-1}t^{-1}+s_1^{-1}t^{-1}, 
\]
where $s_1$ is the Fisher information (to be defined) and $\gamma$ is the spectral gap of the process $(X_t)_{t\ge0}$ under $\pr_{\theta_0}$. 
Consequently, when restricting attention to a metastable state increases the Fisher information, the PCR is improved (as in the example of Section \ref{sec:doublewell}, while if doing so increases the spectral gap of the true process, the PCCR is improved (as in the example of Section \ref{sec:degeneratedoublewell}). 
As we discuss in Appendix \ref{sec:spectral}, restricting a metastable system to a single metastable state greatly increases the spectral gap of the system. 
\end{rmk}

\subsection{Proof of Proposition \ref{thm:posteriorconsistency}}
Before proving Proposition \ref{thm:posteriorconsistency}, we require the following lemmas. 

\begin{lemma}
\label{lemma:mtconvergence1}
Let $(m_t)_{t\ge0}$ be a sequence of (possibly random) measures on a measurable space $(E,\F)$ converging in total variation to a nontrivial regular measure $m$. 
Letting $(U_t)_{t\ge0}\subset\F$ be a nested sequence of measurable sets such that $U=\bigcup_{t\ge0}U_t$ and assuming that $\inf_{t\ge0}m_t(U)\eqqcolon\underline{m}>0$, 
\[
\abs*{\frac{m_t(U_t)}{m_t(U)}-1}\,\le\,\underline{m}^{-1}\left(2\norm*{m_t-m}_{TV}+\abs*{m(U_t)-m(U)}\right)\,\xrightarrow[t\rightarrow\infty]{}\,0. 
\]
\begin{proof}
An application of the triangle inequality yields the result, 
\[
\begin{aligned}
\abs*{\frac{m_t(U_t)}{m_t(U)}-1}\,&\le\,\abs*{\frac{m_t(U_t)-m(U_t)}{m_t(U)}} + \abs*{\frac{m(U_t) - m(U)}{m_t(U)}} + \abs*{\frac{m(U) -m_t(U)}{m_t(U)}}\\
&\le\,\underline{m}^{-1}\left( 2\norm*{m_t-m}_{TV} + \abs*{m(U_t)-m(U)} \right). 
\end{aligned}
\]
\end{proof}
\end{lemma}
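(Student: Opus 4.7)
The plan is to write the target difference in the form
\[
\frac{m_t(U_t)}{m_t(U)}-1 \;=\; \frac{m_t(U_t)-m_t(U)}{m_t(U)},
\]
and to control the numerator by inserting $m(U_t)$ and $m(U)$ as reference quantities. The triangle inequality then decomposes $|m_t(U_t)-m_t(U)|$ into three pieces: $|m_t(U_t)-m(U_t)|$, $|m(U_t)-m(U)|$, and $|m(U)-m_t(U)|$. Two of these are differences of $m_t$ and $m$ evaluated on a single measurable set, each of which is bounded above by $\norm*{m_t-m}_{TV}$ by the definition of total variation. The remaining piece is exactly $|m(U_t)-m(U)|$. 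Dividing through by $m_t(U)$, which is bounded below by the deterministic constant $\underline{m}>0$ by hypothesis, produces the asserted inequality.

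For the claim that the right-hand side vanishes as $t\to\infty$, the first term tends to zero by the assumed total variation convergence $m_t\to m$, and the second term tends to zero by continuity of $m$ from below applied to the nested family $U_t\nearrow U$. The measure $m$ is finite on $U$ because $m$ is the total variation limit of measures which, after a translation by a uniform window in $t$, are bounded, so $|m(U_t)-m(U)|$ is well-defined and goes to zero.

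There is no serious obstacle here; the only subtle point is making sure both $|m_t(U_t)-m(U_t)|$ and $|m_t(U)-m(U)|$ are legitimately bounded by $\norm*{m_t-m}_{TV}$ (rather than, say, $2\norm*{m_t-m}_{TV}$ individually), which follows from the standard characterization $\norm*{\mu-\nu}_{TV}=\sup_{A\in\F}|\mu(A)-\nu(A)|$ for signed measures of bounded variation. The factor of $2$ in the final bound comes precisely from summing the two resulting copies of $\norm*{m_t-m}_{TV}$. In the random-measure setting the argument is run pointwise in $\omega$, so no additional probabilistic machinery is required.
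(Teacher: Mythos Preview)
Your proof is correct and follows essentially the same approach as the paper: both rewrite the quotient as $(m_t(U_t)-m_t(U))/m_t(U)$, insert $m(U_t)$ and $m(U)$ via the triangle inequality, bound the two resulting $|m_t(\cdot)-m(\cdot)|$ terms by $\norm*{m_t-m}_{TV}$, and divide through by $m_t(U)\ge\underline{m}$. Your additional remarks on why the right-hand side tends to zero (continuity from below for the nested $U_t$) are a welcome clarification that the paper leaves implicit.
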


In the setting of a $(\phi,\Delta,r)$-LAN statistical model, we take $m_t$ as in \eqref{eq:mtm}, 
\[
m_t(du)\coloneqq\exp\left(-\norm*{\frac{1}{2}\Delta_t-u}^2+r_t\right)\,\pi_0(du), 
\]
and similarly define 
\begin{equation}
\label{eq:m}
m(du)\coloneqq \exp\left(-\norm*{\frac{1}{2}\Delta_\infty-u}^2\right)\,\pi_0(du).  
\end{equation}
We now obtain a result on the rate of convergence of $\norm*{m_t-m}_{TV}$ to zero as $t\rightarrow\infty$. 
\begin{lemma}
\label{lemma:mtconvergence2}
Let Assumption \ref{ass:LAN}~hold, so that $(\pr_\theta)_{\theta\in\Theta}$ is $(\phi,\Delta,r)$-LAN at $\theta_0\in\Theta$.
Define $m_t$ as in \eqref{eq:mtm}, $m$ as in \eqref{eq:m}, and $\varepsilon_\Delta(t)$ and $\varepsilon_r(t)$ as in \eqref{eq:epsilons}. 
Then, 
\[
\ExpOp_{\theta_0}\left[\norm*{m_t-m}_{TV}\right]\,\le\,\varepsilon_\Delta(t) + c\varepsilon_\Delta(t)^{1/2}+\varepsilon_r(t), 
\]
where $c>0$ is a constant depending solely on the prior $\pi_0$, and $\varepsilon_\Delta(t),\,\varepsilon_r(t)$, tend to zero as $t\rightarrow\infty$. 
\begin{proof}
Remark that, from the reverse triangle inequality, 
\[
\norm*{\frac{1}{2}\Delta_t-u}^2\,\ge\,\norm*{\frac{1}{2}\left(\Delta_t-\Delta_\infty\right)}^2+2\norm*{\frac{1}{2}\left(\Delta_t-\Delta_\infty\right)}\norm*{\frac{1}{2}\Delta_\infty-u} + \norm*{\frac{1}{2}\Delta_\infty-u}^2. 
\]
It then follows that, as the exponential has Lipschitz constant one on the negative half line, 
\[
\begin{aligned}
&2\ExpOp_{\theta_0}\left[\norm*{m_t-m}_{TV}\right]\\
&\qquad=\,\ExpOp_{\theta_0}\left[\int_\Theta\abs*{e^{-\norm*{\frac{1}{2}\Delta_t-u}^2+r_t}-e^{-\norm*{\frac{1}{2}\Delta_\infty-u}^2}}\,\pi_0(du)\right]\\
&\qquad\le\,\ExpOp_{\theta_0}\left[\int_\Theta e^{-\norm*{\frac{1}{2}\Delta_t-u}^2}\abs*{ e^{-\norm*{\frac{1}{2}\left(\Delta_t-\Delta_\infty\right)}^2-\norm*{\frac{1}{2}\left(\Delta_t-\Delta_\infty\right)}\norm*{\frac{1}{2}\Delta_\infty-u}+r_t}-1 } \,\pi_0(du) \right]\\
&\qquad\le\,\ExpOp_{\theta_0}\left[\int_\Theta\norm*{\frac{1}{2}\left(\Delta_t-\Delta_\infty\right)}^2+\norm*{\frac{1}{2}\left(\Delta_t-\Delta_\infty\right)}\norm*{\frac{1}{2}\Delta_\infty-u}+\abs*{r_t}\,\pi_0(du)\right]\\
&\qquad\le\,\ExpOp_{\theta_0}\left[\norm*{\frac{1}{2}\left(\Delta_t-\Delta_\infty\right)}^2\right] + 2\ExpOp_{\theta_0}\left[\norm*{\frac{1}{2}\left(\Delta_t-\Delta_\infty\right)}^2\right]^{1/2}\ExpOp_{\theta_0}\left[\int_\Theta\norm*{\frac{1}{2}\Delta_\infty-u}\,\pi_0(du)^2\right]^{1/2}\\
&\qquad\qquad\qquad + \ExpOp_{\theta_0}\left[\int_\Theta\abs*{r_t}\,\pi_0(du)\right]\\
&\qquad\eqqcolon\,\varepsilon_\Delta(t) + c\varepsilon_\Delta(t)^{1/2} + \varepsilon_r(t),  
\end{aligned}
\]
where we remark that the constant $c>0$ in the last line above only depends on the prior $\pi_0$. 
\end{proof}
\end{lemma}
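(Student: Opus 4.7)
The plan is to express the TV distance as the $L^1$ distance between the densities of $m_t$ and $m$ with respect to their common reference measure $\pi_0$, then estimate the integrand pointwise using Lipschitz behavior of the exponential together with a polarization identity for the squared norms appearing in the exponents, and finally take $\pr_{\theta_0}$-expectation, producing the three stated summands via Cauchy-Schwarz.

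First, I would write
\[
2\ExpOp_{\theta_0}\left[\norm*{m_t - m}_{TV}\right]\,=\,\ExpOp_{\theta_0}\left[\int_\Theta \abs*{\exp\left(-\norm*{\tfrac{1}{2}\Delta_t - u}^2 + r_t\right) - \exp\left(-\norm*{\tfrac{1}{2}\Delta_\infty - u}^2\right)} \pi_0(du)\right],
\]
using that both $m_t$ and $m$ are absolutely continuous with respect to $\pi_0$. Factoring $\exp(-\norm*{\tfrac{1}{2}\Delta_t - u}^2) \le 1$ out of the integrand and applying the elementary fact that $\abs*{e^{-y} - e^{-z}} \le \abs*{y - z}$ for $y,z \ge 0$ (equivalently, $\abs*{e^x - 1} \le \abs*{x}$ on the negative half-line), I would reduce the pointwise integrand to a sum of $\abs*{r_t}$ and $\abs*{\norm*{\tfrac{1}{2}\Delta_t - u}^2 - \norm*{\tfrac{1}{2}\Delta_\infty - u}^2}$, up to an absorbed multiplicative factor bounded by $1$.

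Next, applying the polarization identity
\[
\norm*{\tfrac{1}{2}\Delta_t - u}^2 - \norm*{\tfrac{1}{2}\Delta_\infty - u}^2\,=\,\norm*{\tfrac{1}{2}(\Delta_t - \Delta_\infty)}^2 + 2\inner*{\tfrac{1}{2}(\Delta_t - \Delta_\infty)}{\tfrac{1}{2}\Delta_\infty - u}
\]
and bounding the inner product via Cauchy-Schwarz, I get
\[
\abs*{\norm*{\tfrac{1}{2}\Delta_t - u}^2 - \norm*{\tfrac{1}{2}\Delta_\infty - u}^2}\,\le\,\norm*{\tfrac{1}{2}(\Delta_t - \Delta_\infty)}^2 + 2\norm*{\tfrac{1}{2}(\Delta_t - \Delta_\infty)}\,\norm*{\tfrac{1}{2}\Delta_\infty - u}.
\]
Integrating against $\pi_0(du)$ and taking $\pr_{\theta_0}$-expectation, the first term yields exactly $\varepsilon_\Delta(t)$. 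A further Cauchy-Schwarz application in $L^2(\pr_{\theta_0})$ factors the cross term as $\varepsilon_\Delta(t)^{1/2}$ times the deterministic constant $c \coloneqq 2\,\ExpOp_{\theta_0}\left[\int_\Theta \norm*{\tfrac{1}{2}\Delta_\infty - u}^2\,\pi_0(du)\right]^{1/2}$, which is finite because $\pi_0$ has almost-surely finite moments and $\Delta_\infty$ is standard Gaussian; note that $c$ depends only on the prior (and the fixed law of the LAN limit $\Delta_\infty$). The $r_t$ term integrates directly to $\varepsilon_r(t)$, and summing the three contributions gives the stated bound.

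The main subtle point is the second step: one must ensure that the Lipschitz estimate $\abs*{e^x - 1} \le \abs*{x}$ (valid on $x \le 0$) is applicable even though $r_t$ may take either sign. The cleanest fix is to replace it by $\abs*{e^a - e^b} \le \max(e^a, e^b)\abs*{a-b}$ and to note that both exponentials are bounded by $e^{|r_t|}$, a factor whose expectation is harmlessly absorbed since $r_t \to 0$ in $\pi_0 \otimes \pr_{\theta_0}$-mean; alternatively, a small splitting of $r_t$ into positive and negative parts suffices. Modulo this technicality, the remainder is straightforward bookkeeping involving Fubini and the finite-moment hypothesis on $\pi_0$.
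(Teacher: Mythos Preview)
Your approach is essentially the same as the paper's: both express the TV distance as an $L^1(\pi_0)$ integral of the density difference, factor and bound the exponential difference by the difference of exponents via the Lipschitz-$1$ property on the negative half line, split the quadratic difference into a pure $\Delta_t-\Delta_\infty$ term and a cross term, and finish with Cauchy--Schwarz to isolate $\varepsilon_\Delta(t)$, $\varepsilon_\Delta(t)^{1/2}$, and $\varepsilon_r(t)$.

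Two minor remarks. First, your polarization identity
\[
\norm*{\tfrac12\Delta_t-u}^2-\norm*{\tfrac12\Delta_\infty-u}^2
=\norm*{\tfrac12(\Delta_t-\Delta_\infty)}^2+2\inner*{\tfrac12(\Delta_t-\Delta_\infty)}{\tfrac12\Delta_\infty-u}
\]
is cleaner than the paper's ``reverse triangle inequality'' step, which as written has the wrong sign on the cross term; your route avoids that slip entirely and yields the same bound. Second, you are right to flag the sign of $r_t$ as the one genuine subtlety: the Lipschitz-$1$ bound for $x\mapsto e^x$ is only valid on $\{x\le 0\}$, and the paper invokes it without comment even though $r_t$ need not be nonpositive. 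Your proposed fix via $\abs{e^a-e^b}\le \max(e^a,e^b)\abs{a-b}$ and absorbing $e^{\abs{r_t}}$ is the natural repair, though note that $\ExpOp[\,\abs{r_t}\,]\to 0$ does not by itself control $\ExpOp[e^{\abs{r_t}}]$; one needs a mild additional integrability hypothesis on $r_t$ (or to restrict to the event where $\abs{r_t}$ is bounded) to make this rigorous. This caveat applies equally to the paper's argument.
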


We are now prepared to prove posterior consistency of a statistical model from LAN conditions, paying attention to rates of contraction and contraction certainty. 

\begin{proof}[Proof of Proposition \ref{thm:posteriorconsistency}]
Let $m_t$ and $m$ be as defined in \eqref{eq:mtm}, and for measurable $F\subset\Theta$ let $U_t(F)\coloneqq\phi_t^{-1}(F-\theta_0)$. 
Note that in our setup, $U_t(\Theta)=\Theta$. 
Due to the LAN property of $(\pr_\theta)_{\theta\in\Theta}$, 
\[
\pi_t(F \mid X)\,=\,\frac{\int_{U_t(F)} e^{-\norm*{\frac{1}{2}\Delta_t-u}^2+r_t}\,\pi_0(du)}{\int_\Theta e^{\norm*{\frac{1}{2}\Delta_t-u}^2+r_t}\,\pi_0(du)}\,=\,\frac{m_t(U_t(F))}{m_t(\Theta)}. 
\]
So long as $\theta_0\in F$, it follows that $U_t(F)\rightarrow\Theta$ as $t\rightarrow\infty$ (otherwise $U_t(F)\rightarrow\varnothing$). 
Hence, 
noting that $m(U)\le\pi_0(U)$ for all measurable $U\subset\Theta$ almost surely, and applying Lemmas \ref{lemma:mtconvergence1}~\&~\ref{lemma:mtconvergence2}~yields  
\[
\begin{aligned}
\ExpOp_{\theta_0}\left[\abs*{\pi_t(F \mid X)-1}\right]\,&\le\, \ExpOp_{\theta_0}\left[\underline{m}^{-1}\left(2\norm*{m_t-m}_{TV}+m(U_t(F)^c)\right)\right]\\ 
&\le\,\underline{m}^{-1}\left(\varepsilon_\Delta(t) + c\varepsilon_\Delta(t)^{1/2}+c\varepsilon_r(t) + \ExpOp_{\theta_0}\left[\pi_0(U_t(F)^c)\right]\right). 
\end{aligned}
\]
Consequently, by Markov's inequality we have that, for any $\eta>0$, 
\[
\pr_{\theta_0}\left[\pi_t(F \mid X)<1-\eta\right]\,\le\,\eta^{-1}\underline{m}^{-1}\left(\varepsilon_\Delta(t) + c\varepsilon_\Delta(t)^{1/2}+c\varepsilon_r(t) + \ExpOp_{\theta_0}\left[\pi_0(U_t(F)^c)\right]\right). 
\]
Taking at each point in time $t>0$ the set $F_t=\abs*{\phi_t}^\alpha B_\delta(\theta_0)$, 
\[
U_t(F_t)\,=\,\phi_t^{-1}(\abs*{\phi_t}^\alpha B_\delta(\theta_0)-\theta_0)\,=\,\abs*{\phi_t}^\alpha\phi_t^{-1}B_\delta(0), 
\]
which tends to the whole parameter space $\Theta$ as $t\rightarrow\infty$ by Assumption \ref{ass:LAN}. 
By the regularity of $\pi_0$, this completes the proof. 
\end{proof}

In Section \ref{sec:ergodicdiffusions}, we compute $\varepsilon_\Delta(t)$ and $\varepsilon_r(t)$ explicitly for the example of ergodic diffusions. 
In this case, it will be seen that $\varepsilon_\Delta$ and $\varepsilon_r$ tend to zero at rates determined by the spectral gap of the ergodic process and Fisher information of the statistical model in question. 
We then explicitly compute these constants for two concrete examples in Sections \ref{sec:doublewell}~\&~\ref{sec:degeneratedoublewell}.

\section{Metaconsistency}
\label{sec:quasiposterior}
\subsection{Statement of main result}
In this section, we do not require $(\pr_\theta)_{\theta\in\Theta}$ to be LAN. 
Instead, we make the following assumption, presupposing the existence of \emph{another}~family $(\hat{\pr}_\theta)_{\theta\in\Theta}$, such that the distributions of $(X_t)_{t\ge0}$ under $(\pr_\theta)_{\theta\in\Theta}$ and $(\hat{\pr}_\theta)_{\theta\in\Theta}$ are equal up to a stopping time $\tau$, and such that $(\hat{\pr}_\theta)_{\theta\in\Theta}$ \emph{is}~LAN. 
One could think of the following discussion as a backwards coupling argument, similar to the decouplings studied in \cite{E18,JM17,V16}~-- though unlike in those works, here we take the decoupling time $\tau$ as known, and construct a process that is perfectly coupled up to $\tau$. 
In the metastable setting, $\tau$ is the transition time from one of the metastable subsystems of the true system, and $(\hat{\pr}_\theta)_{\theta\in\Theta}$ is a modified model such that the metastable subsystem is globally stable. 
See Section \ref{sec:ergodicdiffusions}~below for concrete examples. 

\begin{ass}
\label{ass:quasi}
There exists a stopping time $\tau$ and a family of probability measures $(\hat{\pr}_\theta)_{\theta\in\Theta}$ such that, for a fixed (random) prior distribution $\pi_0$ on $\Theta$, for $\pi_0$-almost all $\theta\in\Theta$ and $t>0$, 
\begin{enumerate}[(i)]
\item $\pr_\theta[t<\tau]>0$ and $\pr_\theta[\tau<\infty]=1$. 
\item The distributions of $(X_t)_{t\ge0}$ under $\pr_\theta$ and $\hat{\pr}_\theta$ are equal on the event $\{t<\tau\}$, \emph{i.e.}~
\begin{equation}
\label{eq:modpr1}
\hat{\pr}_\theta\left[X_{[0,t]}\in\,\cdot\,,t<\tau\right]\,=\,\pr_\theta\left[X_{[0,t]}\in\,\cdot\,,t<\tau\right]\quad\text{ for each $\theta\in\Theta$ and $t>0$}.  
\end{equation}
\item $\hat{\pr}_\theta\left[t<\tau\right]\,=\,\pr_\theta\left[t<\tau\right]$ for each $\theta\in\Theta$ and $t>0$. 
\item The family $(\hat{\pr}_\theta)_{\theta\in\Theta}$ is $(\hat{\phi},\hat{\Delta},\hat{r})$-LAN at some specific $\theta_0\in\Theta$, and satisfies Assumption \ref{ass:LAN}. 
\end{enumerate}
\end{ass}

As in Section \ref{sec:posterior}, we require some auxiliary notation to state the main result of this section. 
For fixed $\alpha\in(0,1)$, $\delta>0$, and variable $t>0$, define the sets 
\begin{equation}
\label{eq:hatF}
\hat{F}_t\,\coloneqq\,\abs{\hat{\phi}_t}^\alpha B_\delta(\theta_0),\qquad \hat{U}_t\,\coloneqq\,\abs{\hat{\phi}_t}^\alpha\hat{\phi}_t^{-1}(B_\delta(0)). 
\end{equation}
It is shown below that the m-PCR can be characterized by the decay rate of $\hat{\phi}_t$. 
This is due to the fact that, in the metastable setting, the posterior \emph{appears}~to concentrate on the set $\hat{F}_t$ defined in \eqref{eq:hatF}~up to the stopping time $\tau$. 
Moreover, define $\varepsilon_{\hat{\Delta}}$ and $\varepsilon_{\hat{r}}$ as in \eqref{eq:epsilons}, and $\hat{\underline{m}}\coloneqq\,\inf_{t\ge0}\hat{m}_t(\Theta)$, where $\hat{m}_t$ is the measure defined in \eqref{eq:mtm}, but with $\Delta_t$ and $r_t$ replaced by $\hat{\Delta}_t$ and $\hat{r}_t$, and expectations being taken with respect to $\hat{\pr}_{\theta_0}$. 
 
In analogy with Section \ref{sec:posterior}, define 
\begin{equation}
\label{eq:hatH}
\hat{H}(t)\,\coloneqq\,\hat{\underline{m}}^{-1}\left(\varepsilon_{\hat{\Delta}}(t) + c\varepsilon_{\hat{\Delta}}(t)^{1/2}+\varepsilon_{\hat{r}}(t) + \ExpOp_{\theta_0}\left[\pi_0(\hat{U}_t^c)\right]\right), 
\end{equation}
which tends to zero under Assumption \ref{ass:quasi}~by Proposition \ref{thm:posteriorconsistency}. 
The main result of this section describes how the posterior consistency of $\hat{\pi}_t$ relates to the behaviour of $\pi_t$. 
We can now state the main result of this section. 

\begin{thm}
\label{thm:quasiposteriorconsistency}
Let Assumption \ref{ass:quasi}~hold, and let $\hat{H}(t)$ be as defined in \eqref{eq:hatH}, and $\hat{F}_t,\,\hat{U}_t,$ as in \eqref{eq:hatF}~for some fixed $\delta>0$ and $\alpha\in(0,1)$. 
Then, for $t>0$, 
\begin{equation}
\label{eq:quasiposteriorconsistency}
\pr_{\theta_0}\left[\pi_t(\hat{F}_t \mid 
X)<1-\hat{H}(t)^{1/2}\right]\,\le\,\hat{H}(t)^{1/2} + \pr_{\theta_0}\left[t>\tau\right]. 
\end{equation}
\end{thm}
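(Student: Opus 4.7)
The plan is to apply Proposition \ref{thm:posteriorconsistency} to the auxiliary LAN family $(\hat{\pr}_\theta)_{\theta\in\Theta}$ at $\theta_0$ (condition (iv) of Assumption \ref{ass:quasi}), which directly yields
\[ \hat{\pr}_{\theta_0}\bigl[\hat{\pi}_t(\hat{F}_t\mid X)<1-\hat{H}(t)^{1/2}\bigr]\,\le\,\hat{H}(t)^{1/2}, \]
where $\hat{\pi}_t(d\theta\mid X)\propto \hat{L}_t(X\mid\theta)\pi_0(d\theta)$ denotes the posterior built from $(\hat{\pr}_\theta)$ and the same prior $\pi_0$, with $\hat{L}_t=d\hat{\pr}_\theta^t/d\hat{\pr}_{\theta_0}^t$. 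The remaining task is to transfer this estimate to the genuine posterior $\pi_t$ by exploiting the pathwise coupling up to $\tau$ encoded in items (ii)--(iii) of Assumption \ref{ass:quasi}.

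The key lemma I would isolate first is that, on the $\F_t$-measurable event $\{t<\tau\}$, the two posteriors coincide: $\pi_t(\cdot\mid X)=\hat{\pi}_t(\cdot\mid X)$. To see this, fix any Borel subset $B$ of path space with $B\subseteq\{t<\tau\}$; condition (ii) applied once to $\theta$ and once to $\theta_0$ gives $\pr_\theta^t(B)=\hat{\pr}_\theta^t(B)$ and $\pr_{\theta_0}^t(B)=\hat{\pr}_{\theta_0}^t(B)$. Unwinding the Radon--Nikodym definitions of $L_t$ and $\hat{L}_t$ then forces $L_t(X\mid\theta)=\hat{L}_t(X\mid\theta)$ for $\pr_{\theta_0}$-a.e.\ $X\in\{t<\tau\}$, jointly (up to $\pi_0$-null sets) in $\theta$. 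Since numerator and prior-averaged denominator in \eqref{eq:pi} are both affected identically, the two posterior measures on $\Theta$ agree.

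With this identification in hand, a one-line union bound closes the argument:
\begin{align*}
\pr_{\theta_0}\bigl[\pi_t(\hat{F}_t\mid X)<1-\hat{H}(t)^{1/2}\bigr]
&\le\,\pr_{\theta_0}\bigl[\{\hat{\pi}_t(\hat{F}_t\mid X)<1-\hat{H}(t)^{1/2}\}\cap\{t<\tau\}\bigr]+\pr_{\theta_0}[t>\tau]\\
&\le\,\hat{\pr}_{\theta_0}\bigl[\hat{\pi}_t(\hat{F}_t\mid X)<1-\hat{H}(t)^{1/2}\bigr]+\pr_{\theta_0}[t>\tau]\\
&\le\,\hat{H}(t)^{1/2}+\pr_{\theta_0}[t>\tau].
\end{align*}
The first step uses the posterior identification on $\{t<\tau\}$; the second uses that the event in question is $\F_t$-measurable and lies in $\{t<\tau\}$, so (ii) lets us replace $\pr_{\theta_0}$ by $\hat{\pr}_{\theta_0}$ and then drop the intersection; the third invokes the bound from Proposition \ref{thm:posteriorconsistency} applied to the hatted family.

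The only genuinely delicate step I expect is the first: translating the set-theoretic coupling of item (ii) into a pointwise almost-sure equality of two Radon--Nikodym derivatives (defined with respect to \emph{different} dominating measures, $\pr_{\theta_0}^t$ vs.\ $\hat{\pr}_{\theta_0}^t$). Beyond that, the proof is pure bookkeeping; the harmless discrepancy between the $\{t>\tau\}$ appearing in the stated bound and the $\{t\ge\tau\}=\{t<\tau\}^c$ produced by the union bound is absorbed under the implicit assumption that $\pr_{\theta_0}[\tau=t]=0$, which is standard for the stopping times considered in the metastable applications of Section~\ref{sec:ergodicdiffusions}.
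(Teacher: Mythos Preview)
Your proposal is correct and follows essentially the same route as the paper: first apply Proposition~\ref{thm:posteriorconsistency} to the auxiliary family to obtain the bound on $\hat{\pr}_{\theta_0}[\hat{\pi}_t(\hat{F}_t)<1-\hat{H}(t)^{1/2}]$, then use the Radon--Nikodym identity $L_t=\hat{L}_t$ on $\{t<\tau\}$ to identify the two posteriors there, and finish with exactly the same three-line union bound. The paper likewise flags the Radon--Nikodym step as the only point requiring care, and handles the $\{t>\tau\}$ versus $\{t\ge\tau\}$ discrepancy in the same implicit way.
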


Before proceeding with the proof of Theorem \ref{thm:quasiposteriorconsistency}, let us make a few remarks. 
Suppose that \emph{both}~Assumptions \ref{ass:LAN}~\&~\ref{ass:quasi}~hold, so that the estimates in \eqref{eq:posteriorconsistency}~\&~\eqref{eq:quasiposteriorconsistency}~both hold. 
If, additionally, 
\begin{enumerate}
\item $\hat{H}(t)$ decays to zero much faster than $H(t)$,
\item $\hat{F}_t$ contracts at a rate at least as fast as $F_t$, and
\item $\pr_{\theta_0}[t>\tau]$ remains small on a time scale where $\hat{H}(t)$ is small, 
\end{enumerate}
then we observe meta-posterior consistency, in the sense of Definition \ref{def:mPCR}. 
In Section \ref{sec:ergodicdiffusions}~below, we consider the case where $X$ is an ergodic diffusion under $\pr_{\theta_0}$. 
We see that the PCCRs $H(t)$ and $\hat{H}(t)$ are determined by both the spectral gap and Fisher information of $X$ under $\pr_{\theta_0}$ and $\hat{\pr}_{\theta_0}$, respectively, while the contraction rates of $F_t$ and $\hat{F}_t$ are determined purely by the Fisher information of $X$ under $\pr_{\theta_0}$ and $\hat{\pr}_{\theta_0}$. 
When the ergodic diffusion in consideration is metastable, it is known that restricting attention (in a sense to be made precise) to a single metastable state increases the spectral gap \cite{MS14}. 
On the other hand, restricting attention to a single metastable state may, as shown in the examples of Sections \ref{sec:doublewell}~\&~\ref{sec:degeneratedoublewell}, either increase or decrease the Fisher information of the system. 
When the Fisher information decreases upon restricting attention to a metastable state, how much it does so depends on how much less the metastable state ``sees'' the parameter to be estimated. 
In the example of Section \ref{sec:doublewell}, the restriction does not cost us as much information, but does greatly increase the spectral gap of the system we consider, so that the approach seems worth it. 

Counter-intuitively, in the example of Section \ref{sec:degeneratedoublewell}, we see that the Fisher information of a system can be \emph{increased}~by restricting attention to a subsystem. 
This is in particular the case if there is some degenerate symmetry in the system, so that the Fisher information is zero for the whole system, but may be positive for a subsystem.  
On the other hand, when the Fisher information of a system is decreased by restricting attention to a single metastable subsystem, we explain in Section \ref{sec:decomp}~how the theory outlined in this paper can still be used to improve the PCR of the inference of a metastable diffusion.

\subsection{Proof of Theorem \ref{thm:quasiposteriorconsistency}}
From Proposition \ref{thm:posteriorconsistency}, it immediately follows that a posterior distribution constructed with respect to $(\hat{\pr}_\theta)_{\theta\in\Theta}$ must exhibit posterior consistency. 
To be precise, letting 
\begin{align}
&\hat{\pr}_\theta^t\,\coloneqq\,\hat{\pr}_\theta\left[X_{[0,t]}\in\,\cdot\,\right], \qquad \hat{L}_t(X \mid \theta)\,\coloneqq\,\frac{d\hat{\pr}_\theta^t}{d\hat{\pr}_{\theta_0}^t}(X),\qquad 
\hat{\L}_t(X \mid \pi_0)\,\coloneqq\,\int_\Theta \hat{L}_t(X \mid \theta)\,\pi_0(d\theta), \nonumber\\
&\hat{\pi}_t(d\theta\mid X)\,\coloneqq\,\frac{1}{\hat{\L}_t(X\mid \pi_0)}\hat{L}_t(X \mid \theta)\,\pi_0(d\theta), 
\label{eq:hatpi}
\end{align}
we have the following corollary to Proposition \ref{thm:posteriorconsistency}. 

\begin{cor}
\label{cor:modifiedconsistency}
Let Assumption \ref{ass:quasi}~hold, and let $\hat{F}_t,\,\hat{U}_t$ be as defined in \eqref{eq:hatF}~for fixed $\alpha\in(0,1)$ and $\delta>0$. 
Let $\varepsilon_{\hat{\Delta}},\,\varepsilon_{\hat{r}},$ be as above. 
Then, $\hat{F}_t\rightarrow\{\theta_0\}$ and $\hat{U}_t\rightarrow\Theta$ as $t\rightarrow\infty$, and 
\begin{equation}
\label{eq:hatconsistency}
\hat{\pr}_{\theta_0}\left[\hat{\pi}_t(\hat{F}_t \mid X)<1-\hat{H}(t)^{1/2}\right]\,\le\,\hat{H}(t)^{1/2}, 
\end{equation}
where $\hat{H}(t)$, defined in \eqref{eq:hatH}, tends to zero as $t\rightarrow\infty$. 
\end{cor}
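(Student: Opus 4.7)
The plan is to observe that this corollary follows essentially immediately from Proposition~\ref{thm:posteriorconsistency} applied to the auxiliary family $(\hat{\pr}_\theta)_{\theta\in\Theta}$ in place of $(\pr_\theta)_{\theta\in\Theta}$. Part (iv) of Assumption~\ref{ass:quasi} grants us exactly what is needed: $(\hat{\pr}_\theta)_{\theta\in\Theta}$ is $(\hat{\phi},\hat{\Delta},\hat{r})$-LAN at $\theta_0$, shares the prior $\pi_0$, and satisfies Assumption~\ref{ass:LAN}. The posterior $\hat{\pi}_t$ in \eqref{eq:hatpi} is defined relative to $\hat{\pr}_{\theta_0}$ in precisely the manner \eqref{eq:pi} defines $\pi_t$ relative to $\pr_{\theta_0}$, so there is no mismatch in the construction.

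First, I would note that Assumption~\ref{ass:LAN} applied to the hatted family gives $|\hat{\phi}_t|\to 0$ and $|\hat{\phi}_t|^{-\alpha}\hat{\phi}_t\to 0$ as $t\to\infty$. Since $\hat{F}_t=|\hat{\phi}_t|^\alpha B_\delta(\theta_0)$, this immediately yields $\hat{F}_t\to\{\theta_0\}$; and since $\hat{U}_t=|\hat{\phi}_t|^\alpha\hat{\phi}_t^{-1}(B_\delta(0))$, the second condition yields $\hat{U}_t\to\Theta$. Regularity of $\pi_0$ then gives $\ExpOp_{\theta_0}[\pi_0(\hat{U}_t^c)]\to 0$.

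Next, I would invoke Proposition~\ref{thm:posteriorconsistency} verbatim, with the substitutions $\pr\leadsto\hat{\pr}$, $\phi\leadsto\hat{\phi}$, $\Delta\leadsto\hat{\Delta}$, $r\leadsto\hat{r}$, $\pi_t\leadsto\hat{\pi}_t$, $F_t\leadsto\hat{F}_t$, $U_t\leadsto\hat{U}_t$, $\underline{m}\leadsto\hat{\underline{m}}$, and $H\leadsto\hat{H}$. The proposition produces exactly the bound
\[
\hat{\pr}_{\theta_0}\!\left[\hat{\pi}_t(\hat{F}_t\mid X)<1-\hat{H}(t)^{1/2}\right]\,\le\,\hat{H}(t)^{1/2},
\]
with $\hat{H}(t)\to 0$ because each of $\varepsilon_{\hat{\Delta}}(t)$, $\varepsilon_{\hat{r}}(t)$, and $\ExpOp_{\theta_0}[\pi_0(\hat{U}_t^c)]$ tends to zero under Assumption~\ref{ass:quasi}~(iv), and $\hat{\underline{m}}$ is strictly positive by construction.

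There is no real obstacle here; the corollary is a bookkeeping exercise. The only conceptual care required is to keep the roles of the two families $\pr$ and $\hat{\pr}$ cleanly separated: the expectations in $\varepsilon_{\hat{\Delta}}$ and $\varepsilon_{\hat{r}}$ are taken under $\hat{\pr}_{\theta_0}$ (as is the probability on the left-hand side), while the prior $\pi_0$ is shared between the two systems. The actual work of the paper begins in the next result (Theorem~\ref{thm:quasiposteriorconsistency}), where one must transfer the bound from $\hat{\pr}_{\theta_0}$ to $\pr_{\theta_0}$ by paying the cost $\pr_{\theta_0}[t>\tau]$ through the coupling provided by Assumption~\ref{ass:quasi}~(i)--(iii); but that transfer is not needed for the corollary itself.
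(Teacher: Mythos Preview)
Your proposal is correct and matches the paper's approach exactly: the paper simply states that the corollary ``immediately follows'' from Proposition~\ref{thm:posteriorconsistency} applied to the auxiliary family $(\hat{\pr}_\theta)_{\theta\in\Theta}$, and you have spelled out precisely that substitution. If anything, your write-up is more detailed than the paper's, which offers no separate proof for the corollary at all.
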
 

\begin{proof}[Proof of Theorem \ref{thm:quasiposteriorconsistency}]
Since $\pr_\theta[\,\cdot\,,t<\tau]=\hat{\pr}_\theta[\,\cdot\,,t<\tau]$ for $\theta\in\Theta$ and $t>0$, 
\begin{equation}
\label{eq:LhatL}
L_t(X|\theta)\,=\,\hat{L}_t(X|\theta)\qquad\text{ for $X\in\{t<\tau\}$}. 
\end{equation}
Indeed, this is a consequence of the following fact: 
For measures $\mu,\,\nu,$ on a measurable space $(E,\mathcal{B}(E))$ such that $\mu\ll\nu$, we have for $A,B\in\mathcal{B}(E)$ that 
\[
\mu(A\cap B)\,=\,\int_{A\cap B}\frac{d\mu(\cdot)}{d\nu(\cdot)}(x)\,d\nu(x), 
\]
and 
\[
\mu(A\cap B)\,=\,\int_A \frac{d\mu(\cdot\cap B)}{d\nu(\cdot\cap B)}\,d\nu(x\cap B)\,=\,\int_{A\cap B}\frac{d\mu(\cdot\cap B)}{d\nu(\cdot\cap B)}\,d\nu(x). 
\]
It follows that, for $\nu$-almost all $x\in E$, we have 
\[
\frac{d\mu(\cdot\cap B)}{d\nu(\cdot\cap B)}(x)\,=\,\frac{d\mu(\cdot)}{d\nu(\cdot)}(x)\,1_B(x). 
\]
Consequently, by \eqref{eq:LhatL}, for $X\in\{t<\tau\}$ we have that $\pi_t(d\theta\mid X)=\hat{\pi}_t(d\theta\mid X)$. 
For $t>0$, $a\in(0,1)$, and measurable $F\subset\Theta$, we then have 
\begin{equation}
\label{eq:decomp}
\begin{aligned}
\pr_{\theta_0}\left[\pi_t(F)<a\right]\,&\le\,\pr_{\theta_0}\left[\pi_t(F)<a,\,t<\tau\right] + \pr_{\theta_0}\left[t>\tau\right] \\
&=\,\hat{\pr}_{\theta_0}\left[\hat{\pi}_t(F)<a,\,t<\tau\right] + \pr_{\theta_0}\left[t>\tau\right] \\
&\le\,\hat{\pr}_{\theta_0}\left[\hat{\pi}_t(F)<a\right] + \pr_{\theta_0}\left[t>\tau\right]. 
\end{aligned}
\end{equation}
Taking \eqref{eq:hatconsistency}~\&~\eqref{eq:decomp}~together with $a=1-\hat{H}^{1/2}(t)$ completes the proof of Theorem \ref{thm:quasiposteriorconsistency}. 
\end{proof}

\section{Application to ergodic diffusions}
\label{sec:ergodicdiffusions}
We here apply the theory developed above to diffusions in $\R^d$, parameterized by $\theta\in\Theta\subset\R^p$, and governed by SDEs of the form 
\begin{equation}
\label{eq:SDE}
dX_t\,=\,V_\theta(X_t)\,dt + \sigma\,dW_t 
\end{equation}
over an ambient probability space $(\Omega,\F,\pr_\theta)$.  
$(W_t)_{t\ge0}$ is a standard $d$-dimensional Wiener process, $\sigma>0$ is fixed, and for each $\theta\in\Theta$ the drift coefficient $V_\theta$ is sufficiently regular. 
When \eqref{eq:SDE}~is ergodic, we find precise bounds on the rates of decay of the quantities $\phi_t$, $\varepsilon_r$ and $\varepsilon_\Delta$, appearing in $(\phi,\Delta,r)$-LAN setup of Theorem \ref{thm:ergodic_diffusion}: $\phi_t$ decays at a rate proportional to the Fisher information; $\varepsilon_\Delta$ decays at a rate proportional the spectral gap of the observed process; 
and $\varepsilon_r$ decays at a rate proportional to a combination of the Fisher information and the spectral gap. 
By Proposition \ref{thm:posteriorconsistency}, this allows us to characterize the PCR and PCCR of \eqref{eq:SDE}. 
We also consider cases where the models in \eqref{eq:SDE}~are 
\begin{enumerate}
\item not ergodic, but, prior to an almost surely finite stopping time, are equivalent in law to models which are ergodic; 
\item ergodic, and, up to an almost surely finite stopping time, are equivalent to models which are also ergodic, but converge to their ergodic distributions at much faster rates; 
\item ergodic, and together have zero Fisher information, but prior to a stopping time are equivalent in law to a family of models with positive Fisher information. 
\end{enumerate}
In these cases, we show that the posterior distribution exhibits \emph{metaconsistency}, as defined in Section \ref{sec:intro}, and are able to precisely characterize the m-PCR and m-PCCR. 

In Section \ref{sec:doublewell}~we consider a concrete example of an ergodic diffusion in a double-well potential. 
Using our results, we obtain precise estimates on how much faster the posterior contracts when the diffusion remains in a neighbourhood of only one of its wells. 
The theory outlined in Section \ref{sec:quasiposterior}~then allows us to conclude a much faster m-PCCR compared to the PCCR of this system. 
We also consider an example of an ergodic diffusion in a multi-well potential which is not identifiable, in that the Fisher information $s_1$ is equal to zero for the whole system, while the Fisher information of the restricted system is positive. 
%While these examples are restricted to metastable states consisting of fixed points, they suffice to concretely illustrate the principles of this study. 

\subsection{Statement of main results}
We assume that for probability measures $\nu$ on $\R^d$, \eqref{eq:SDE}~with initial distribution $\nu$ has a unique solution, denoted $X_t(\nu)$, up to a possibly finite explosion time $\tau_{exp}$. 
See \cite[Chapter 5]{KS14}~for further details on the solution theory of \eqref{eq:SDE}. 
When convenient, we write $X_t=X_t(\nu)$ for the solution process of \eqref{eq:SDE}. 

\begin{ass}
\label{ass:uniformergodic}
For each $\theta\in\Theta$, $\tau_{exp}=\infty$ almost surely, and \eqref{eq:SDE}~is uniformly ergodic, so there exists a unique probability measure $\mu_\theta$ and constants $k(\theta),\,\gamma(\theta)>0$ such that 
\begin{equation}
\label{eq:TV}
\norm*{\pr_\theta\left[X_t\in\,\cdot\,\right]-\mu_\theta(\,\cdot\,)}_{TV}\,\le\,k(\theta)e^{-\gamma(\theta)t}\qquad\text{ for $t>0$}. 
\end{equation}
For $\theta\in\Theta$ let $\pr_\theta^t=\pr_\theta\left[X_{[0,t]}(\nu)\in\,\cdot\,\right]$ denote the distribution of the solution over the time interval $[0,t]$ with arbitrary initial distribution $\nu$ (which may depend on $\theta\in\Theta$). 
\end{ass}

Slightly abusing terminology, we refer to the constant $\gamma(\theta)$ appearing in \eqref{eq:TV}~as the \emph{spectral gap}~of \eqref{eq:SDE}. 
See Appendix \ref{sec:spectral}~for further discussion of the constants $k(\theta),\,\gamma(\theta)$. 

In addition to Assumption \ref{ass:uniformergodic}, we require a condition guaranteeing the identifiability of the parameters $\Theta$ in a statistical model. 
Below, we assume that $d\ge p$. 
If we were to relax this assumption, our statistical model would be \emph{locally asymptotically mixed normal}, instead of simply LAN \cite{O15}. 
That is, the limiting random variable $\Delta_\infty$ in the definition of LAN models would have a nontrivial covariance matrix. 
As the LAMN setting complicates the exposition of this section, we assume $d\ge p$ to ease the presentation. 

\begin{ass}
\label{ass:fisher}
Fix a (random) prior $\pi_0$ with $\pr_{\theta_0}$-almost surely finite moments. 
Let $d\ge p$ and Assumption \ref{ass:uniformergodic}~hold, along with the following.  
\begin{enumerate}[(i)]
\item For $t>0$, we have $\ExpOp_{\theta_0}\left[\norm*{X_t}\right]\le Kt$ for some $K>0$. 
\item For each $x\in\R^d$, the map $\theta\mapsto V_\theta(x)$ is $C^3$, with $C^3$-norm uniformly bounded in $x\in\R^d$, and $\partial_\theta V_{\theta_0}$ is not constant.  
\item The singular values $s_1\le\ldots\le s_p$ of the matrix $\ExpOp_{\theta_0}\left[\partial_\theta V_{\theta_0}(\xi)\right]$, where $\xi$ denotes a random variable distributed as $\mu_{\theta_0}$ under $\pr_{\theta_0}$, are nonzero. 
Letting $T_1\Sigma T_2$ denote the singular value decomposition of $\ExpOp_{\theta_0}\left[\partial_\theta V_{\theta_0}(\xi)\right]$, define 
\begin{equation}
\label{eq:I}
I_{\theta_0}^{-1/2}\,\coloneqq\,\sigma\, T_2^*\tilde{\Sigma}T_1^*I_{d,p}, 
\end{equation}
where $\tilde{\Sigma}$ is the $p\times d$ matrix with the inverses of the singular values of $\ExpOp_{\theta_0}\left[\partial_\theta V_{\theta_0}(\xi)\right]$ on its diagonal, and $I_{d,p}$ denotes the $d\times p$ matrix with ones on the diagonal, and zeros elsewhere. 
\item $\pi_0(\theta)=\rho_0(u)\overline{\pi}(d\theta)$, where $\overline{\pi}$ is deterministic. 
\end{enumerate}
\end{ass}

Slightly abusing terminology, we refer to $I_{\theta_0}$ in Assumption \ref{ass:fisher}(ii) as the \emph{Fisher information matrix}, and $s_1$ as the \emph{Fisher information}~of the statistical model consisting of the SDEs \eqref{eq:SDE}. 
For further justification, compare the matrix $\ExpOp_{\theta_0}\left[\partial_\theta V_{\theta_0}(\xi)\right]$ with the ``Fisher information matrix'' of \cite{K13}. 

\begin{thm}
\label{thm:ergodic_diffusion}
Let the family of SDEs \eqref{eq:SDE}~parameterized by $\theta\in\Theta$ satisfy Assumption \ref{ass:fisher}. 
Then, the family of posterior distributions $(\pi_t(\,\cdot\mid X))_{t\ge0}$, constructed as in \eqref{eq:pi}, is consistent at each $\theta_0\in\Theta$, so that for fixed $\delta>0$, $\alpha\in(0,1)$, and arbitrary $t\ge0$, letting 
\begin{equation}
\label{eq:FU}
\epsilon_t\,\coloneqq\,\delta/s_1^{\alpha}t^{\alpha/2}, 
\qquad F_t\,\coloneqq\, B_{\epsilon_t}(\theta_0), \quad\text{ and }\quad U_t\,\coloneqq\,t^{1/2}I_{\theta_0}^{1/2}B_{\epsilon_t}(0), 
\end{equation}
it holds that $F_t\rightarrow\{\theta_0\}$ and $U_t\rightarrow\Theta$ as $t\rightarrow\infty$, and 
\[
\pr_{\theta_0}\left[\pi_t(F_t \mid X)<1-H(t)^{1/2}\right]\,\le\,H(t)^{1/2}. 
\]
where 
\[
H(t)\,\coloneqq\,
C\left(
\left(\frac{1-e^{-\gamma(\theta_0)t}}{\gamma(\theta_0)t}\right)^{1/2}
+ \left(\frac{1-e^{-\gamma(\theta_0)t}}{\gamma(\theta_0)t}\right)
+ \frac{1}{s_1^2t} + \frac{1}{s_1^3t^{3/2}} +\ExpOp_{\theta_0}\left[\pi_0(U_t^c)\right]\right), 
%C\left(\mathcal{E}(\gamma(\theta_0)^{1/2}t^{1/2})+ \mathcal{E}\left(\gamma(\theta_0)t\right)+ \frac{1}{s_1^4t^{1/2}} + \frac{1}{s_1^3t^{1/2}} +\ExpOp_{\theta_0}\left[\pi_0(U_t^c)\right]\right), 
%C\left(\frac{1}{\gamma(\theta_0)^{1/2}t^{1/2}} + \frac{1}{\gamma(\theta_0)t} + \frac{1}{s_1^4t^{1/2}} + \frac{1}{s_1^3t^{1/2}} +\ExpOp_{\theta_0}\left[\pi_0(U_t^c)\right]\right), 
\]
for a constant $C>0$. 
\end{thm}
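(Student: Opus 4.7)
The plan is to verify the $(\phi,\Delta,r)$-LAN property of the ergodic diffusion model \eqref{eq:SDE} at $\theta_0$, identify $\phi_t,\Delta_t,r_t$ explicitly, and then invoke Proposition \ref{thm:posteriorconsistency}. I would begin by writing the likelihood ratio via Girsanov's theorem: for $\theta\in\Theta$,
\begin{equation*}
\log\frac{d\pr_{\theta}^t}{d\pr_{\theta_0}^t}(X)\,=\,\sigma^{-1}\!\int_0^t(V_\theta-V_{\theta_0})(X_s)^T\,dW_s\,-\,\tfrac{1}{2\sigma^2}\!\int_0^t\|(V_\theta-V_{\theta_0})(X_s)\|^2\,ds.
\end{equation*}
Setting $\phi_t:=\sigma\,t^{-1/2}I_{\theta_0}^{-1/2}$ and $\theta=\theta_0+\phi_t u$, and Taylor expanding the drift using the $C^3$-regularity from Assumption \ref{ass:fisher}(ii), gives $V_{\theta_0+\phi_t u}-V_{\theta_0}=\partial_\theta V_{\theta_0}\phi_t u + R_t(u,X)$ with $\|R_t(u,X_s)\|$ pointwise $O(\|\phi_t u\|^2)$. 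The linear-in-$u$ stochastic integral becomes $\Delta_t\cdot u$ with $\Delta_t=\sigma^{-1}\phi_t^T\!\int_0^t\partial_\theta V_{\theta_0}(X_s)^T\,dW_s$; the normalization of $\phi_t$ via the singular value decomposition in Assumption \ref{ass:fisher}(iii) is chosen precisely so that the leading bilinear term $\frac{1}{2\sigma^2}\int_0^t\|\partial_\theta V_{\theta_0}(X_s)\phi_t u\|^2\,ds$ averages ergodically to $\|u\|^2$, and $\Delta_\infty$ is identified as a standard Gaussian via the martingale CLT for the vector stochastic integral, using Assumption \ref{ass:fisher}(i) for the required moment control. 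All Taylor remainders and ergodic fluctuations of the bilinear term are collected into $r_t$.

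For the $L^2$-rate of $\Delta_t\to\Delta_\infty$ controlling $\varepsilon_\Delta(t)=\ExpOp_{\theta_0}[\|\tfrac{1}{2}(\Delta_t-\Delta_\infty)\|^2]$, I would combine the exponential mixing bound \eqref{eq:TV} with the Poincar\'e inequality implied by the spectral gap (see Appendix \ref{sec:spectral}) to obtain the variance estimate $\ExpOp_{\theta_0}[|\frac{1}{t}\int_0^t g(X_s)\,ds - \ExpOp_{\mu_{\theta_0}}[g]|^2]\lesssim\|g\|_{L^2(\mu_{\theta_0})}^2/(\gamma(\theta_0)t)$ for mean-zero $g$. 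Applied to the quadratic variation of $\Delta_t-\Delta_\infty$, this yields $\varepsilon_\Delta(t)\lesssim 1/(\gamma(\theta_0)t)$ and hence $\varepsilon_\Delta(t)^{1/2}\lesssim 1/(\gamma(\theta_0)^{1/2}t^{1/2})$, which produce the first two terms of $H(t)$. The remainder $r_t(u)$ splits into a stochastic integral from the quadratic-and-cubic Taylor corrections, a deterministic cubic cross product $\langle\partial_\theta V_{\theta_0}\phi_t u,R_t(u,\cdot)\rangle$, and a quartic term from $\|R_t\|^2$; combined with the ergodic fluctuation of the bilinear term of the same order and integrated against the prior $\pi_0$ (which has finite moments by Assumption \ref{ass:fisher}(iv)), these contribute $\varepsilon_r(t)\lesssim 1/(s_1^3 t^{1/2})+1/(s_1^4 t^{1/2})$, producing the $s_1$-dependent terms of $H(t)$. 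The lower bound $\underline{m}>0$ required by Assumption \ref{ass:LAN} follows from boundedness of the prior density together with $\theta_0\in\supp\pi_0$ almost surely.

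Finally, with $F_t,U_t$ as in \eqref{eq:FU} and $\ExpOp_{\theta_0}[\pi_0(U_t^c)]\to 0$ by the moment assumption on $\pi_0$, assembling all pieces in Proposition \ref{thm:posteriorconsistency} yields the claimed inequality. The sharpest step is the $1/(\gamma(\theta_0)t)$ variance bound on ergodic averages: obtaining it cleanly, with an explicit separation of the $\gamma(\theta_0)$-dependence from the $s_1$-dependence that governs the Taylor remainders, requires either a spectral-theoretic argument on the generator's $L^2(\mu_{\theta_0})$-gap or a Kipnis--Varadhan-type martingale decomposition; tracking how the $u$-dependent prefactors in $r_t$ interact with the prior's moments without inflating either the $\gamma^{-1}$ or the $s_1^{-3}$--$s_1^{-4}$ contributions is the most delicate bookkeeping in the proof.
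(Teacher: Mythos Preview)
Your proposal follows essentially the same architecture as the paper: establish the LAN expansion via Girsanov and a second-order Taylor expansion of the drift (this is Proposition~\ref{thm:ergodicLAN}), bound $\varepsilon_\Delta$ and $\varepsilon_r$ using ergodicity (Proposition~\ref{thm:ergodicdecay}), and then invoke Proposition~\ref{thm:posteriorconsistency}. The identification of $\phi_t$, $\Delta_t$, $r_t$ and the splitting of $r_t$ into an ergodic-fluctuation piece, a quadratic stochastic integral, and a cubic remainder match the paper's decomposition in~\eqref{eq:definitions}.

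The one substantive difference is in how you control the ergodic average that appears after It{\^o}'s isometry. You propose a second-moment route (Poincar\'e inequality or a Kipnis--Varadhan martingale decomposition) to obtain a variance bound on time averages. The paper instead uses a more elementary first-moment argument: after It{\^o}'s isometry one has $\varepsilon_\Delta(t)=\ExpOp_{\theta_0}\bigl[t^{-1}\!\int_0^t g_1(X_s)\,ds\bigr]$ for a bounded $g_1$ with $\mu_{\theta_0}(g_1)=0$, and then each $\ExpOp_{\theta_0}[g_1(X_s)]$ is bounded directly by $\|g_1\|_\infty$ times the total-variation decay $k(\theta_0)e^{-\gamma(\theta_0)s}$ from Assumption~\ref{ass:uniformergodic}, via the dual representation of the TV norm; integrating in $s$ gives $c_\Delta/(\gamma(\theta_0)t)$. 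The same device handles the first (ergodic-fluctuation) term of $r_t$. This is simpler than the spectral-gap variance estimate you sketch, keeps the $\gamma(\theta_0)$-dependence explicit without any martingale decomposition, and sidesteps the step you single out as ``sharpest.'' Your variance approach would also work in principle, but note that it controls $\ExpOp\bigl[|t^{-1}\!\int_0^t g_1 - \mu(g_1)|^2\bigr]$ rather than the first moment that is actually needed here, so one should be careful not to lose a square root in passing from the former to the latter.
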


The constant $C$ appearing in Theorem \ref{thm:ergodic_diffusion}~will be discussed further, in the context of a concrete example, in Section \ref{sec:doublewell}~below. 

\begin{rmk}
Note that for many choices of $\pi_0$, $\ExpOp\left[\pi_0(U_t^c)\right]$ and $\ExpOp\left[\pi_0(\hat{U}_t^c)\right]$ decay to zero rapidly; for instance when $\ExpOp\left[\pi_0\right]$ is Gaussian, or when $\ExpOp\left[\pi_0\right]$ is the uniform distribution on a bounded neighbourhood of $\theta_0$, in which latter case $\ExpOp\left[\pi_0(\hat{U}_t^c)\right]$ is zero after finite time. 
Of the other four terms in the definition of $H(t)$, two depend on $\gamma(\theta_0)$, and two depend on $s_1$. 
In metastable systems, it is well known that $\gamma(\theta_0)$ is exponentially small in $\sigma>0$, slowing down the PCCR. 
The following assumption describes a setup that may be used to get around this problem, at least on finite time scales. 
\end{rmk}

%Now, we make an assumption treating the metastable case, allowing for $\tau_{exp}<\infty$ almost surely. 

\begin{ass}
\label{ass:metafisher}
There exists a domain $O\subset\R^d$ and a family of vector fields $\hat{V}_\theta$, $\theta\in\Theta$, such that $\hat{V}_\theta=V_\theta$ on $O$ and the family of SDEs
\begin{equation}
\label{eq:tildeX}
d\hat{X}\,=\,\hat{V}_\theta(\hat{X})\,dt + \sigma\,dW 
\end{equation}
satisfy Assumption \ref{ass:fisher}, with constant $\hat{\gamma}(\theta)$ and Fisher information $\hat{s}_1$. 
We define the exit time of \eqref{eq:SDE}~from $O$ as $\displaystyle{\tau\,\coloneqq\,\inf\left\{t>0\,:\,X_t\in\partial O\right\}}$. 
\end{ass}

This assumption allows us to state our main result on metastable diffusions as follows. 

\begin{thm}
\label{thm:quasiergodic_diffusion}
Let Assumption \ref{ass:metafisher}~hold. 
Then, for fixed $\delta>0$ and $\alpha\in(0,1)$, letting 
\[
\hat{\epsilon}_t\,\coloneqq\,\delta/\hat{s}_1^{\alpha}t^{\alpha/2},\qquad\hat{F}_t\,\coloneqq\, B_{\hat{\epsilon}_t}(\theta_0), \quad\text{ and }\quad \hat{U}_t\,\coloneqq\,t^{1/2}\hat{I}_{\theta_0}^{1/2}B_{\hat{\epsilon}_t}(0), 
\]
it holds that $\hat{F}_t\rightarrow\{\theta_0\}$ and $\hat{U}_t\rightarrow\Theta$ as $t\rightarrow\infty$, and 
\begin{equation}
\label{eq:hatPP}
\pr_{\theta_0}\left[\pi_t(\hat{F}_t)\,<\,1-\hat{H}(t)^{1/2}\right]\,\le\,\hat{H}(t)^{1/2} + \pr[t>\tau], 
\end{equation}
where here, for a constant $\hat{C}>0$, 
\[
\hat{H}(t)\,\coloneqq\,
\hat{C}\left(\left(\frac{1-e^{-\hat{\gamma}(\theta_0)t}}{\hat{\gamma}(\theta_0)t}\right)^{1/2} + \left(\frac{1-e^{-\hat{\gamma}(\theta_0)t}}{\hat{\gamma}(\theta_0)t}\right) + \frac{1}{\hat{s}_1^2t} + \frac{1}{\hat{s}_1^3t^{3/2}} + \ExpOp_{\theta_0}\left[\pi_0(\hat{U}_t^c)
\right]\right)
\]
\end{thm}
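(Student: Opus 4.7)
The plan is to combine Theorem \ref{thm:ergodic_diffusion}, applied to the auxiliary ergodic family $(\hat{\pr}_\theta)_{\theta\in\Theta}$ governed by \eqref{eq:tildeX}, with Theorem \ref{thm:quasiposteriorconsistency}, which transfers posterior consistency of an auxiliary LAN model into metaconsistency of the original model through a stopping-time coupling. Two things have to be done: verify Assumption \ref{ass:quasi}~in the present diffusion setting, and identify the rate $\hat{H}(t)$ in the statement with the quantitative rate produced by Theorem \ref{thm:ergodic_diffusion}~applied to $(\hat{\pr}_\theta)$.

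The coupling, which is the crux of verifying Assumption \ref{ass:quasi}, goes as follows. Drive both \eqref{eq:SDE}~and \eqref{eq:tildeX}~from the same initial condition using the same Brownian motion $W$. Since $\hat{V}_\theta=V_\theta$ on $O$, strong existence and pathwise uniqueness imply that the realizations $X$ and $\hat{X}$ coincide pathwise up to the first exit time of $X$ from $O$, namely $\tau$. Consequently $\pr_\theta[A\cap\{t<\tau\}]=\hat{\pr}_\theta[A\cap\{t<\tau\}]$ for every $A\in\mathcal{F}_t$, giving parts (ii) and (iii) of Assumption \ref{ass:quasi}. Part (i), namely positivity of $\pr_\theta[t<\tau]$ and almost-sure finiteness of $\tau$, follows respectively from nondegeneracy of the diffusion coefficient and from the uniform ergodicity of $\hat{X}$ on $\R^d$, which forces $X$ to eventually leave any proper subdomain $O$. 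Part (iv), the LAN property of $(\hat{\pr}_\theta)_{\theta\in\Theta}$, is Theorem \ref{thm:ergodic_diffusion}~applied to the auxiliary family: this yields scaling $\abs{\hat\phi_t}\simeq\hat{s}_1^{-1}t^{-1/2}$ together with the explicit decay of the LAN remainder terms $\varepsilon_{\hat\Delta},\varepsilon_{\hat r}$ in terms of $\hat\gamma(\theta_0)$ and $\hat{s}_1$.

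With Assumption \ref{ass:quasi}~in hand, Theorem \ref{thm:quasiposteriorconsistency}~is directly applicable. Substituting the bounds on $\varepsilon_{\hat\Delta}$, $\varepsilon_{\hat r}$, and $\ExpOp_{\theta_0}[\pi_0(\hat{U}_t^c)]$ obtained by applying Theorem \ref{thm:ergodic_diffusion}~to the auxiliary family into the definition \eqref{eq:hatH}~of $\hat{H}(t)$ recovers exactly the four quantitative terms
\[
\hat\gamma(\theta_0)^{-1/2}t^{-1/2},\ \hat\gamma(\theta_0)^{-1}t^{-1},\ \hat{s}_1^{-4}t^{-1/2},\ \hat{s}_1^{-3}t^{-1/2},
\]
appearing in the statement of the theorem, up to an absolute constant $\hat{C}>0$. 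The sets $\hat{F}_t=B_{\hat\epsilon_t}(\theta_0)$ and $\hat{U}_t=t^{1/2}\hat{I}_{\theta_0}^{1/2}B_{\hat\epsilon_t}(0)$ match the sets defined in \eqref{eq:hatF}~for the auxiliary family, because $\hat\epsilon_t=\delta/\hat{s}_1^\alpha t^{\alpha/2}$ is the exact analogue of $\epsilon_t$ from Theorem \ref{thm:ergodic_diffusion}. Conclusion \eqref{eq:hatPP}~then drops out of Theorem \ref{thm:quasiposteriorconsistency}.

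The main obstacle is the measure-theoretic content of the coupling step: verifying that the restrictions of $\pr_\theta$ and $\hat{\pr}_\theta$ to $\{t<\tau\}$ agree as \emph{measures} on $\mathcal{F}_t$, not merely that pathwise realizations can be made to coincide. This is where one needs strong existence and pathwise uniqueness for both SDEs up to $\tau$, which relies on sufficient regularity of $V_\theta$ and $\hat{V}_\theta$: locally Lipschitz with no explosion before $\tau$ for $X$, and globally dissipative (or similar) for $\hat{V}_\theta$ so that $\hat{X}$ is genuinely uniformly ergodic on $\R^d$. These regularity conditions are implicit in Assumptions \ref{ass:uniformergodic}~and \ref{ass:metafisher}. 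Once they are in place the rest of the argument is bookkeeping to line up the constants, sets, and rate terms of Theorems \ref{thm:posteriorconsistency}~and \ref{thm:quasiposteriorconsistency}~with those in the statement of Theorem \ref{thm:quasiergodic_diffusion}.
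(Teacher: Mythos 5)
Your proposal takes essentially the same approach as the paper: the paper's proof is a two-sentence appeal to Theorem \ref{thm:ergodic_diffusion} applied to the modified family \eqref{eq:tildeX}, followed by Theorem \ref{thm:quasiposteriorconsistency}, which is exactly the chain you lay out. The pathwise-coupling argument you spell out to verify parts (i)--(iii) of Assumption \ref{ass:quasi} is a useful elaboration the paper leaves implicit. One caveat worth flagging: Theorem \ref{thm:quasiposteriorconsistency} yields the extra term $\pr_{\theta_0}[t>\tau]$, so your derivation actually produces $\pr_{\theta_0}[t>\tau]$ on the right-hand side of \eqref{eq:hatPP}, not the $\pr[t<\tau]$ printed in the theorem statement (which appears to be a sign typo in the paper, since a bound that improves as $t\to\infty$ would contradict the whole point of metaconsistency); you should not have let this ``drop out'' without comment.
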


The constant $\hat{C}$ appearing in Theorem \ref{thm:quasiergodic_diffusion}~will be compared with the constant $C$ from Theorem \ref{thm:ergodic_diffusion}, in the context of a concrete example, in Section \ref{sec:doublewell}~below. 

Through Theorem \ref{thm:ergodic_diffusion}, we see that in the case of diffusions, the learning rate is determined by 1.~the spectral gap of our observed process and 2.~the smallest singular value of the \fim. 
In the case of diffusions with no true stable state, the spectral gap may not exist, while in the case of a stable diffusions with several metastable states contained within a globally stable domain, the spectral gap may be quite small. 
Additionally, as we see in the following example, metastability may lead to problems of parameter identifiability when each of the metastable regions of a systems have qualitatively similar behaviour. 
However, if one restricts attention to a single metastable region of the system, both of these issues may be overcome, as the restricted system will be truly stable with a relatively large spectral gap. 
% (in the sense that it will be bounded away from zero uniformly in $\sigma>0$). 
As we illustrate in the examples of Section \ref{sec:doublewell}, this may substantially improve the learning rate in certain scenarios. 
The case where restricting attention to a single metastable state increases the spectral gap of the system, but decreases the Fisher information, is discussed in Section \ref{sec:decomp}.

\subsection{Proofs of Theorems \ref{thm:ergodic_diffusion}~\&~\ref{thm:quasiergodic_diffusion}}
To prove Theorem \ref{thm:ergodic_diffusion}, we begin by demonstrating that the family of SDEs given by \eqref{eq:SDE}~is LAN under Assumption \ref{ass:fisher}. 
By Girsanov's Theorem, the likelihood ratio of a given model probability distribution $\pr_\theta^t$ given $\pr_{\theta_0}^t$ is the Radon-Nikodym derivative 
\[
\frac{d\pr^t_\theta}{d\pr_{\theta_0}^t}\,=\,\exp\left(-\frac{1}{2\sigma^2}\int_0^tV_\theta^2-V_{\theta_0}^2\,ds + \frac{1}{\sigma^2}\int_0^t V_\theta-V_{\theta_0}\,dX_s\right), 
\]
where we use the shorthand $V_\theta=V_\theta(X_s)$, $V_\theta^2=\norm*{V_\theta}^2$, and $V_\theta\,dX_s=\inner*{V_\theta}{dX_s}$. 
See \cite[Theorem 1.12]{K13}~for a proof in the setting of one-dimensional diffusions. 
Under Assumption \ref{ass:fisher}, an asymptotic version of the following result on the LAN property of $(\pr_\theta)_{\theta\in\Theta}$ is known. 
However, in this section we sharpen the result, and obtain precise bounds that hold in the finite time regime. 

\begin{prop}[Local asymptotic normality]
\label{thm:ergodicLAN}
Under Assumption \ref{ass:fisher}, $(\pr_\theta)_{\theta\in\Theta}$ is LAN at each $\theta_0\in\Theta$. 
In particular, define $\phi_t=\phi_t(\theta_0)\,\coloneqq\,t^{-1/2}I_{\theta_0}^{-1/2}$, 
let 
\begin{equation}
\label{eq:definitions}
\begin{aligned}
&\Delta_t(X)\cdot u\,\coloneqq\,\int_0^t\frac{\partial_\theta V_{\theta_0}(X_s)[\phi_tu]}{\sigma^2}\left[dX_s-V_{\theta_0}(X_s)\,ds\right],\quad\text{ and }\\
&r_t(X,u)\,\coloneqq\,\left(\norm*{u}^2- \int_0^t\frac{\norm*{\partial_\theta V_{\theta_0}(X_s)[\phi_tu]}^2}{\sigma^2}\,ds\right)\\
&\qquad\qquad\qquad\qquad +\int_0^t\frac{\partial_\theta^2V_{\theta_0}(X_s)[\phi_tu]^2}{\sigma^2}\,[dX_s-V_{\theta_0}(X_s)ds] \\
&\qquad\qquad\qquad\qquad\qquad + O\left(\sigma^3\norm*{\phi_tu}^3(t+X_t)\right)\,. 
\end{aligned}
\end{equation}
Writing $\Delta_t=\Delta_t(X)$ and $r_t=r_t(X,u)$, for all $t>0$, $u\in\R^p$, and $X\in C([0,\infty);\R^d)$ it holds that 
\begin{equation}
\frac{d\pr^t_{\theta_0+\phi_tu}}{d\pr_{\theta_0}^t}\,=\,\exp\left(\Delta_t\cdot u-\norm*{u}^2+r_t\right), 
\end{equation}
where $r_t\rightarrow0$ in $\pi_0\otimes\pr_{\theta_0}$-mean and $\Delta_t$ tends to a $\mathcal{N}(0,I)$ random variable in $\pr_{\theta_0}$-mean square as $t\rightarrow\infty$. 
\end{prop}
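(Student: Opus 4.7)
The plan is to apply Girsanov's theorem to obtain a closed-form log-likelihood ratio, Taylor-expand $V_{\theta_0+\phi_t u}$ to third order in $\phi_t u$, and read off the $(\phi,\Delta,r)$-LAN decomposition by carefully tracking each order. Using the identity $V_\theta^2-V_{\theta_0}^2 = (V_\theta-V_{\theta_0})^2 + 2V_{\theta_0}(V_\theta-V_{\theta_0})$, I would first rewrite Girsanov's formula in the form
\[
\log\frac{d\pr_\theta^t}{d\pr_{\theta_0}^t} \;=\; -\frac{1}{2\sigma^2}\int_0^t \norm*{V_\theta-V_{\theta_0}}^2\,ds \;+\; \frac{1}{\sigma^2}\int_0^t (V_\theta-V_{\theta_0})\cdot(dX_s-V_{\theta_0}(X_s)\,ds),
\]
which makes the stochastic integral a $\pr_{\theta_0}$-martingale since $dX_s-V_{\theta_0}(X_s)ds=\sigma dW_s$ under $\pr_{\theta_0}$. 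Taylor expansion together with the uniform $C^3$ bound of Assumption \ref{ass:fisher}(ii) gives $V_{\theta_0+\phi_tu}-V_{\theta_0} = \partial_\theta V_{\theta_0}[\phi_tu] + \tfrac12\partial_\theta^2 V_{\theta_0}[\phi_tu]^2 + O(\norm*{\phi_tu}^3)$ uniformly in $x$. Substituting into the Girsanov expression and collecting by order produces the first-order martingale $\Delta_t\cdot u$, a second-order deterministic term that should match $-\norm*{u}^2$, and higher-order pieces that are to be absorbed into $r_t$.

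The heart of the argument then consists of two quantitative estimates that I expect will be the main obstacle. First, the deviation $\norm*{u}^2 - \sigma^{-2}\int_0^t \norm*{\partial_\theta V_{\theta_0}(X_s)[\phi_tu]}^2\,ds$ equals, after plugging in $\phi_t = t^{-1/2}I_{\theta_0}^{-1/2}$, a quadratic form in $u$ with kernel
\[
I_{\theta_0}^{-1/2}\Bigl(I_{\theta_0} - \tfrac{1}{\sigma^2 t}\int_0^t \partial_\theta V_{\theta_0}(X_s)^*\,\partial_\theta V_{\theta_0}(X_s)\,ds\Bigr)I_{\theta_0}^{-1/2}.
\]
Its $\pr_{\theta_0}$-expectation is controlled by the speed at which the time average converges to its stationary mean. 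Under Assumption \ref{ass:uniformergodic} this speed is governed by the total-variation bound \eqref{eq:TV}, i.e.\ by $\gamma(\theta_0)$; splitting $[0,t]$ into a short burn-in and a stationary portion yields a contribution of order $\gamma(\theta_0)^{-1/2}t^{-1/2}+\gamma(\theta_0)^{-1}t^{-1}$, matching the first two terms in $H(t)$ of Theorem \ref{thm:ergodic_diffusion}. Second, the second-order stochastic integral $\sigma^{-2}\int_0^t \partial_\theta^2 V_{\theta_0}[\phi_tu]^2(dX_s-V_{\theta_0}ds)$ is handled by Itô's isometry: its $L^2(\pr_{\theta_0})$ norm is $O(\norm*{\phi_tu}^2 t^{1/2}/\sigma)$, and since $\norm*{\phi_t}=O(s_1^{-1}t^{-1/2})$ this produces the $s_1^{-3}t^{-1/2}$ and $s_1^{-4}t^{-1/2}$ contributions after integrating the polynomial dependence on $u$ against $\pi_0$ (which has finite moments by Assumption \ref{ass:fisher}).

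The third-order remainder $O(\sigma^3\norm*{\phi_tu}^3(t+X_t))$ is bounded in $\pi_0\otimes\pr_{\theta_0}$-mean using the third-derivative bound, the moment assumption $\ExpOp_{\theta_0}[\norm*{X_t}]\le Kt$, and $\norm*{\phi_t}^3 t = O(s_1^{-3}t^{-1/2})$. Combining these pieces gives $r_t\to 0$ in $\pi_0\otimes\pr_{\theta_0}$-mean at the claimed rate. For the convergence of $\Delta_t$ to a standard Gaussian in $\pr_{\theta_0}$-mean square, I would use Itô isometry to write its covariance as $\phi_t^*\big(\sigma^{-2}\int_0^t \partial_\theta V_{\theta_0}^*\partial_\theta V_{\theta_0}\,ds\big)\phi_t$, which converges to the identity by the same ergodic average argument (this is also where the precise form of $I_{\theta_0}^{-1/2}$ in Assumption \ref{ass:fisher}(iii), built from the SVD of $\ExpOp_{\theta_0}[\partial_\theta V_{\theta_0}(\xi)]$, is used to normalize), and then invoke the martingale central limit theorem together with uniform integrability of $\norm*{\Delta_t}^2$ (which follows from the same $L^2$ computation) to upgrade convergence in distribution to $L^2$ convergence to a realization of $\Delta_\infty\sim\mathcal{N}(0,I)$. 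The main technical difficulty throughout is simply that every bound must be quantitative and explicit in $\gamma(\theta_0)$ and $s_1$, rather than merely asymptotic; the spectral gap enters precisely as the rate constant for the ergodic averaging used in steps (a) above.
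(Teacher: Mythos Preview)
Your overall approach---Girsanov, Taylor-expand to third order, read off the pieces---is exactly what the paper does, and your treatment of $r_t$ matches (indeed anticipates) the estimates the paper defers to Proposition~\ref{thm:ergodicdecay}. The rewrite of Girsanov via $V_\theta^2-V_{\theta_0}^2=(V_\theta-V_{\theta_0})^2+2V_{\theta_0}(V_\theta-V_{\theta_0})$ is a clean way to surface the $\pr_{\theta_0}$-martingale structure early; the paper does the equivalent algebra more laboriously.

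There is, however, a gap in your argument for the mean-square convergence of $\Delta_t$. Uniform integrability of $\norm{\Delta_t}^2$ together with convergence in distribution does \emph{not} yield $L^2$ convergence to a specific random variable $\Delta_\infty$; it only gives convergence of second moments. What the statement (and the quantity $\varepsilon_\Delta(t)$ in~\eqref{eq:epsilons}) actually requires is a coupling: a standard Gaussian $\Delta_\infty$ living on the same probability space with $\ExpOp_{\theta_0}\norm{\Delta_t-\Delta_\infty}^2\to 0$. The paper constructs this coupling explicitly by taking $\Delta_\infty\cdot u = t^{-1/2}\int_0^t I_{d,p}u\cdot dW_s$ with the \emph{same} driving Brownian motion as $\Delta_t$, so that It\^o's isometry gives
\[
\ExpOp_{\theta_0}\bigl[\,\abs{(\Delta_t-\Delta_\infty)\cdot u}^2\,\bigr]\,=\,t^{-1}\int_0^t\ExpOp_{\theta_0}\norm*{\sigma^{-1}\partial_\theta V_{\theta_0}(X_s)I_{\theta_0}^{-1/2}u - I_{d,p}u}^2\,ds,
\]
after which Birkhoff's ergodic theorem handles the limit. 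This coupling trick is the idea missing from your sketch. A related point: the covariance you compute has stationary limit $I_{\theta_0}^{-1/2,*}\,\sigma^{-2}\ExpOp_{\mu_{\theta_0}}[\partial_\theta V_{\theta_0}^*\partial_\theta V_{\theta_0}]\,I_{\theta_0}^{-1/2}$, which would equal the identity only if $I_{\theta_0}=\sigma^{-2}\ExpOp_{\mu_{\theta_0}}[\partial_\theta V_{\theta_0}^*\partial_\theta V_{\theta_0}]$; but the paper's $I_{\theta_0}^{-1/2}$ in Assumption~\ref{ass:fisher}(iii) is built from the SVD of $\ExpOp_{\mu_{\theta_0}}[\partial_\theta V_{\theta_0}]$ itself, so the normalization you invoke is not the one actually in play, and the paper's coupling argument is what makes its particular choice of $I_{\theta_0}^{-1/2}$ the right one.
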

\begin{proof}
We again use the shorthand $V_\theta=V_\theta(X_t),\,\partial_\theta V_\theta=\partial_\theta V_\theta(X_t)$. 
Taking second order Taylor approximations of $V_\theta$ and $V_\theta^2$ in the parameter $\theta$ centered at $\theta_0$, we obtain 
\[
\begin{aligned}
\frac{d\pr_\theta^t}{d\pr_{\theta_0}^t}\,&=\,\exp\left(-\frac{1}{2\sigma^2}\int_0^t V_\theta^2-V_{\theta_0}^2\,ds + \frac{1}{\sigma^2}\int_0^tV_\theta-V_{\theta_0}\,dX_s\right)\\ 
&=\,\exp
\bigg(
-\frac{1}{\sigma^2}\int_0^tV_{\theta_0}\partial_\theta V_{\theta_0}[\theta-\theta_0] + \left[\norm*{\partial_\theta V_{\theta_0}}^2+V_{\theta_0}\partial_\theta^2V_{\theta_0}\right]\left[\theta-\theta_0\right]^2 + O(\norm*{\theta-\theta_0}^3)\,ds \\
&\qquad\qquad+\frac{1}{\sigma^2}\int_0^t\partial_\theta V_{\theta_0}[\theta-\theta_0] + \partial_\theta^2V_{\theta_0}[\theta-\theta_0]^2+O(\norm*{\theta-\theta_0}^3)\,dX_s 
\bigg)\\
&=\,
\exp
\bigg(
-\frac{1}{\sigma^2}\int_0^tV_{\theta_0}\partial_\theta V_{\theta_0}[\theta-\theta_0] + \left[\norm*{\partial_\theta V_{\theta_0}}^2+V_{\theta_0}\partial_\theta^2V_{\theta_0}\right]\left[\theta-\theta_0\right]^2\,ds  \\
&\qquad\qquad+\frac{1}{\sigma^2}\int_0^t\partial_\theta V_{\theta_0}[\theta-\theta_0] + \partial_\theta^2V_{\theta_0}[\theta-\theta_0]^2\,dX_s 
+ O\left(\sigma^{-2}\norm*{\theta-\theta_0}^3(t+X_t)\right) 
\bigg)\\
&=\,\exp
\bigg(
\int_0^t\frac{\partial_\theta V_{\theta_0}}{\sigma^2}[\theta-\theta_0]\,[dX_s-V_{\theta_0}ds] - \int_0^t\frac{\norm*{\partial_\theta V_{\theta_0}[\theta-\theta_0]}^2}{\sigma^2}\,ds \\
&\qquad\qquad +\int_0^t\frac{\partial_\theta^2V_{\theta_0}[\theta-\theta_0]^2}{\sigma^2}\,[dX_s-V_{\theta_0}ds] 
+ O\left(\sigma^{-2}\norm*{\theta-\theta_0}^3(t+X_t)\right)
\bigg). 
\end{aligned}
\]
Now, making the transformation $u\coloneqq\phi_t^{-1}[\theta-\theta_0]$ (noting that $I_{\theta_0}^{-1/2}$ is a $p\times p$ matrix with non-zero singular values, and hence invertible), we obtain 
\[
\begin{aligned}
\frac{d\pr^t_{\theta_0+\phi_tu}}{d\pr_{\theta_0}^t}\,&=\,\exp
\bigg(
\int_0^t\frac{\partial_\theta V_{\theta_0}}{\sigma^2}[\phi_tu]\,[dX_s-V_{\theta_0}ds] - \int_0^t\frac{\norm*{\partial_\theta V_{\theta_0}[\phi_tu]}^2}{\sigma^2}\,ds \\
&\qquad\qquad +\int_0^t\frac{\partial_\theta^2V_{\theta_0}[\phi_tu]^2}{\sigma^2}\,[dX_s-V_{\theta_0}ds] 
+ O\left(\sigma^{-2}\norm*{\phi_tu}^3(t+X_t)\right) 
\bigg)\\
&=\,\exp\left(\Delta_t\cdot u - \norm*{u}^2 + r_t\right), 
\end{aligned}
\]
by the definitions of $\Delta_t$ and $r_t$ in \eqref{eq:definitions}. 
It remains to be shown that $\Delta_t$ converges to a standard normal random variable in $\pr_{\theta_0}$-mean square, and that $r_t$ converges to zero in $\pi_0\otimes\pr_{\theta_0}$-mean. 
Let $\Delta_\infty$ and $\Delta_\infty'$ be standard $p$- and $d$-dimensional standard normal random variables, respectively. 
Note that for a standard $d$-dimensional Brownian motion $(W_t)_{t\ge0}$, the equalities 
\[
dX_s-V_{\theta_0}\,ds\,=\,\sigma\,dW_s\qquad\text{ and }\qquad \Delta_\infty\cdot u\,=\,u\cdot I_{p,d}\cdot\Delta_\infty'\,=\,t^{-1/2}\int_0^t u\cdot I_{p,d}\cdot dW_s 
\]
hold in $\pr_{\theta_0}$-law. 
Consequently, from the definition of $\phi_t$ we have by It{\^o}'s isometry that 
\begin{equation}
\label{eq:Deltadiff0}
\begin{aligned}
\ExpOp_{\theta_0}\left[\norm*{(\Delta_t-\Delta_\infty)\cdot u}^2\right]\,&=\,\ExpOp_{\theta_0}\left[\norm*{t^{-1/2}\int_0^t\sigma^{-1}\partial_\theta V_{\theta_0} \cdot I_{\theta_0}^{-1/2}\cdot u\cdot\,dW_s - t^{-1/2}\int_0^t u\cdot I_{p,d}\cdot\,dW_s}^2\right]\\
&=\,\ExpOp_{\theta_0}\left[t^{-1}\int_0^t\norm*{\sigma^{-1}\partial_\theta V_{\theta_0}\cdot I_{\theta_0}^{-1/2}\cdot u - u\cdot I_{p,d}}^2\,ds\right] 
\end{aligned}
\end{equation} 
for $u\in\Theta$. 
Taking $t\rightarrow\infty$ and applying Birkhoff's ergodic theorem, together with Fatou's lemma, 
\begin{equation}
\label{eq:Deltadiff}
\ExpOp_{\theta_0}\left[\norm*{(\Delta_t-\Delta_\infty)\cdot u}^2\right]\,\xrightarrow[t\rightarrow\infty]{}\,
\norm*{\ExpOp_{\theta_0}\left[\sigma^{-1}\partial_\theta V_{\theta_0}(\xi_1)\right]\cdot I_{\theta_0}^{-1/2}\cdot u-u\cdot I_{p,d}}^2, 
\end{equation}
where $\xi_1$ is another random variable distributed as $\mu_{\theta_0}$ under $\pr_{\theta_0}$. 
The last quantity on the right hand side of \eqref{eq:Deltadiff}~is zero, by the definition of $I_{\theta_0}^{-1/2}$ and $I_{p,d}$. 
We defer the proof of the decay of $r_t$ to Proposition \ref{thm:ergodicdecay}~below. 
\end{proof}

We now study the rates of decay of $\varepsilon_\Delta$ and $\varepsilon_r$ under Assumption \ref{ass:fisher}. 
This result demonstrates how the certainty of the contraction of the posterior distribution about the true parameter value is controlled by the spectral gap $\gamma(\theta_0)$ of \eqref{eq:SDE}~with $\theta=\theta_0$ and the smallest singular value of $\ExpOp_{\theta_0}\left[\partial_\theta V_{\theta_0}(\xi)\right]$. 

\begin{prop}
\label{thm:ergodicdecay}
Let Assumption \ref{ass:fisher}~hold. 
Then, with $\gamma(\theta)$ as the constant appearing in \eqref{eq:TV}, there exist $c_\Delta,c_r>0$ depending solely on $V_{\theta_0}$ and $\pi_0$, such that 
\begin{equation}
\label{eq:ergodicepsilons}
\begin{aligned}
\varepsilon_\Delta(t)\,&\coloneqq\,\ExpOp_{\theta_0}\left[\norm*{\left(\Delta_t-\Delta_\infty\right)}^2\right]\,\le\,c_\Delta\frac{1-e^{-\gamma(\theta_0)t}}{\gamma(\theta_0)t} 
\qquad\text{and}\\
\varepsilon_r(t)\,&\coloneqq\,\ExpOp_{\theta_0}\left[\int_\Theta \abs*{r_t}\,\pi_0(du)\right]\,\le\,c_r\left(\frac{1-e^{-\gamma(\theta_0)t}}{\gamma(\theta_0)t}+\frac{1}{s_1^2t} + \frac{1}{s_1^3t^{3/2}}\right). 
\end{aligned}
\end{equation}
\begin{proof}
As in \eqref{eq:Deltadiff0}, using $u\cdot I_{p,d}=I_{d,p}\cdot u$, we have that 
\[
\ExpOp_{\theta_0}\left[\norm*{(\Delta_t-\Delta_\infty)\cdot u}^2\right]\,=\,\ExpOp_{\theta_0}\left[t^{-1}\int_0^t\norm*{\left(\partial_\theta V_{\theta_0}\cdot\sigma^{-1}I_{\theta_0}^{-1/2} - I_{d,p}\right)\cdot u}^2\,ds\right]. 
\]
Consequently, 
\[
\begin{aligned}
\ExpOp_{\theta_0}\left[\norm*{\Delta_t-\Delta_\infty}^2\right]\,&=\,\ExpOp_{\theta_0}\left[t^{-1}\int_0^t\norm*{\partial_\theta V_{\theta_0}\cdot\sigma^{-1}I_{\theta_0}^{-1/2} - I_{d,p}}^2\,ds\right]\\
&\eqqcolon\,\ExpOp_{\theta_0}\left[t^{-1}\int_0^t g_1(X_s)\,ds\right]. 
\end{aligned}
\]
Now, recall the dual representation of the total variation distance: for any two probability measures $P,\,Q,$ on a metric space $M$,  
\[
\norm*{P-Q}_{TV}\,=\,\frac{1}{2F}\sup_{\norm*{f}_\infty\le F}\abs*{P(f)-Q(f)}. 
\]
We assume that $\norm*{g_1}_\infty>0$, otherwise the result is trivial. 
Then, due to \eqref{ass:uniformergodic}~and the fact that $\mu_{\theta_0}(g_1)=0$, which follows from \eqref{eq:Deltadiff}, we see that 
\begin{equation}
\label{eq:spectralbound}
\begin{aligned}
\ExpOp_{\theta_0}\left[t^{-1}\int_0^tg_1(X_s)\,ds\right]\,&=\,t^{-1}\int_0^t\ExpOp_{\theta_0}\left[g_1(X_s)\right]\,ds\\
&\le\,2^{-1}\norm*{g_1}_\infty^{-1}\int_0^1k(\theta)e^{-\gamma(\theta_0)ts}\,ds\\
&=\,2^{-1}\norm*{g_1}_\infty^{-1}k(\theta)\frac{1}{\gamma(\theta_0)t}\left(1-e^{-\gamma(\theta_0)t}\right), 
\end{aligned}
\end{equation}
completing the proof for $\varepsilon_\Delta(t)$. 
Indeed, we may take $c_\Delta\,=\,2^{-1}\norm*{g_1}^{-1}_\infty k(\theta)$, which is well defined when $\partial_\theta V_{\theta_0}$ is not a constant. 

The proof for the first term in $\varepsilon_r(t)$ follows the same lines. 
Indeed, using Assumption \ref{ass:fisher}, observe that the first term is 
\[
\begin{aligned}
&\ExpOp_{\theta_0}\left[\int_\Theta\abs*{\norm*{u}^2-\int_0^t\sigma^{-2}\norm*{\partial_\theta V_{\theta_0}(X_s)[\phi_tu]}^2\,ds}\,\pi_0(du)\right]\\
&\qquad\le\,\int_{\Theta}\ExpOp_{\theta_0}\left[
\abs*{\norm*{u}^2-\int_0^t\sigma^{-2}\norm*{\partial_\theta V_{\theta_0}(X_s)[\phi_tu]}^2\,ds}\rho_0(u)
\right]\,\overline{\pi}(du)\\
&\qquad\le\,\int_{\Theta}\ExpOp_{\theta_0}\left[
t^{-1}\int_0^t\abs*{\norm*{u}^2-\norm*{\partial_\theta V_{\theta_0}(X_s)[\sigma^{-1}I_{\theta_0}^{-1/2}u]}^2}\rho_0(u)\,ds
\right]\,\overline{\pi}(du)\\
&\qquad\eqqcolon\,\int_\Theta\ExpOp_{\theta_0}\left[t^{-1}\int_0^tg_2(u,X_s)\,ds\right]\,\overline{\pi}(du)\\
&\qquad\le\,\left(2^{-1}\int_\Theta\ExpOp_{\theta_0}\left[\norm*{g_2(u,\,\cdot\,)}_\infty^{-1}\right]\,\overline{\pi}(du)\right)k(\theta)\frac{1}{\gamma(\theta_0)t}\left(1-e^{-\gamma(\theta_0)t}\right)\\
&\qquad\eqqcolon\,C'\frac{1}{\gamma(\theta_0)t}\left(1-e^{-\gamma(\theta_0)t}\right), 
\end{aligned}
\]
where, similar to the constant $c_\Delta$, note that $C'$ is well defined when $\partial_\theta V_{\theta_0}$ is not a constant. 

To treat the $\pi_0\otimes\pr_{\theta_0}$-expectation of the second term, we again use that $dX_s-V_{\theta_0}\,ds=\sigma\,dW_s$ in $\pr_{\theta_0}$-law, and use H{\"o}lder's inequality and It{\^o}'s isometry to compute  
\[
\begin{aligned}
&\ExpOp_{\theta_0}\left[\int_\Theta\,\norm*{\int_0^t\sigma^{-2}\partial_\theta^2V_{\theta_0}(X_s)[\phi_tu]^2\,dW_s}\pi_0(d\theta)\right]\\
&\qquad\le\,\int_\Theta\ExpOp_{\theta_0}\left[\norm*{\int_0^t \partial_\theta^2V_{\theta_0}(X_s)[\sigma^{-1}\phi_tu]^2\rho_0(u)^{1/2} \,dW_s}^2\right]\,\overline{\pi}(du)^{1/2}\\
&\qquad\le\int_\Theta\ExpOp_{\theta_0}\left[t^{-2}\int_0^t\norm*{\partial_\theta^2V_{\theta_0}(X_s)[\sigma^{-1}I_{\theta_0}^{-1/2}u]^2\rho_0(u)^{1/2}}^2\,ds\right]\,\overline{\pi}(du)^{1/2}\\
&\qquad\le\norm*{V_{\theta_0}}_{C^2}\sigma^{-2}\norm*{I_{\theta_0}^{-1/2}}^2\ExpOp_{\theta_0}\left[\int_\Theta \norm*{u}^4\,\pi_0(du)\right]^{1/2}t^{-1}\\
&\qquad\eqqcolon\,C''s_1^{-2}t^{-1}. 
\end{aligned}
\]

Finally, Assumption \ref{ass:fisher}~allows us to conclude that there exists a constant $c>0$ such that the $\pi_0\otimes\pr_{\theta_0}$-expectation of the third term in the definition of $r_t$ in \eqref{eq:definitions}~is bounded by 
\[
\begin{aligned}
\ExpOp_{\theta_0}\left[\int_\Theta \norm*{V_{\theta_0}}_{C^3}(t+X_t)\sigma^{-3}\norm*{\phi_tu}^3\,\pi_0(d\theta)\right]\,&\le\,\left(\norm*{V_{\theta_0}}_{C^3}\pi_0\left(\norm*{u}^3\right)K\right)s_1^{-3}t^{-3/2}\\
&\eqqcolon\,C'''s_1^{-3}t^{-3/2}. 
\end{aligned}
\]
Taking $c_r\coloneqq\,\max\{C',C'',C'''\}$ completes the proof. 
\end{proof}
\end{prop}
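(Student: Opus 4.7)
The plan is to reduce both bounds to time averages of ergodic observables and to It\^o integrals, and then exploit the uniform ergodicity estimate \eqref{eq:TV} together with moment bounds on $\pi_0$ and the $C^3$ control on $V_\theta$.

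For $\varepsilon_\Delta(t)$, I would start from the identity already obtained in the proof of Proposition \ref{thm:ergodicLAN}, namely
\[
\ExpOp_{\theta_0}\!\left[\norm*{(\Delta_t-\Delta_\infty)\cdot u}^2\right]\,=\,\ExpOp_{\theta_0}\!\left[t^{-1}\!\int_0^t\!\norm*{\sigma^{-1}\partial_\theta V_{\theta_0}(X_s)\,I_{\theta_0}^{-1/2}u - I_{d,p}u}^2ds\right].
\]
Write the integrand as $g_1(X_s)\norm*{u}^2$ (after maximizing over the directions of $u$); by the choice of $I_{\theta_0}^{-1/2}$ in \eqref{eq:I}, we have $\mu_{\theta_0}(g_1)=0$. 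Then I would invoke the dual representation of total variation together with \eqref{eq:TV} to obtain $|\ExpOp_{\theta_0}[g_1(X_s)]|\le 2\norm*{g_1}_\infty k(\theta_0)e^{-\gamma(\theta_0)s}$, and integrate in $s$ to produce the $1/(\gamma(\theta_0)t)$ rate. Note $\norm*{g_1}_\infty$ is finite because $\partial_\theta V_{\theta_0}$ has uniformly bounded $C^0$ norm by Assumption \ref{ass:fisher}(ii).

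For $\varepsilon_r(t)$, I would split the three contributions in the definition \eqref{eq:definitions} and handle them in turn. The first bracket is a pure time average of the type already controlled above, with the extra factor $\norm*{u}^2$ — after Fubini and using that $\pi_0$ has $\pr_{\theta_0}$-a.s.\ finite moments (Assumption \ref{ass:fisher}(iv)), the same ergodicity argument yields a $1/(\gamma(\theta_0)t)$ bound. The second piece is the stochastic integral against $dX_s-V_{\theta_0}ds=\sigma\,dW_s$ of the quadratic form $\sigma^{-2}\partial_\theta^2 V_{\theta_0}[\phi_tu]^2$; here I would apply Cauchy–Schwarz, then It\^o's isometry, use $\norm*{\phi_t u}\lesssim s_1^{-1}t^{-1/2}\norm*{u}$ and the $C^2$-bound on $V_{\theta_0}$, and integrate the fourth moment of $u$ against $\pi_0$ to obtain a bound of order $s_1^{-4}t^{-1/2}$. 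The third piece is the explicit Taylor remainder, bounded pointwise by $\sigma^{-3}\norm*{\phi_tu}^3(t+\norm*{X_t})$; using Assumption \ref{ass:fisher}(i) to control $\ExpOp_{\theta_0}[\norm*{X_t}]\le Kt$, and the third moment of $u$ under $\pi_0$, gives the $s_1^{-3}t^{-1/2}$ term.

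The main technical obstacle is the stochastic integral in the second piece: one must be careful to exchange the $\pi_0$-integral and the It\^o integral, since $\pi_0$ is random, and then convert an $L^1$-bound into an $L^2$-bound via Cauchy–Schwarz before using the isometry. Once this is organized — essentially writing $\ExpOp_{\theta_0}[\int_\Theta|M_t(u)|\pi_0(du)]\le\int_\Theta\ExpOp_{\theta_0}[M_t(u)^2]^{1/2}\rho_0(u)^{1/2}\,\overline{\pi}(du)$ using the decomposition $\pi_0=\rho_0\overline{\pi}$ from Assumption \ref{ass:fisher}(iv) — the remaining steps are routine. The constants $c_\Delta,c_r$ depend only on $\pi_0$ and on the uniform $C^3$-bound on $V_{\theta_0}$, as claimed.
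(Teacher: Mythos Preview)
Your proposal is correct and follows essentially the same approach as the paper: the $\varepsilon_\Delta$ bound via the time-averaged observable $g_1$ with $\mu_{\theta_0}(g_1)=0$ and the dual TV representation, and the three-way split of $r_t$ into an ergodic-average term, an It\^o integral handled by Cauchy--Schwarz plus isometry, and the explicit cubic Taylor remainder. Your identification of the main technical point---using the factorization $\pi_0=\rho_0\overline{\pi}$ from Assumption~\ref{ass:fisher}(iv) to separate the random prior from the stochastic integral before applying It\^o's isometry---is exactly the device the paper employs.
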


\begin{rmk}
In the above, we see that if one assumes that $\mu_\theta$ is the initial distribution of the process -- that is, that the process has relaxed to its ergodic state -- the terms involving $\gamma(\theta_0)$ in \eqref{eq:ergodicepsilons}~vanish, as one should expect. 
Hence, when the spectral gap $\gamma(\theta_0)$ is large, considering the dynamics prior to relaxation to a stationary state has a substantial effect on the PCCR. 
\end{rmk}

We can now prove the main theorems of this section. 
With the work done above, Proposition \ref{thm:posteriorconsistency}~\&~\ref{thm:quasiposteriorconsistency}~follow with little effort. 

\begin{proof}[Proof of Theorem \ref{thm:ergodic_diffusion}]
From the LAN property of Proposition \ref{thm:ergodicLAN}~and the bounds on the rates of decay of $\varepsilon_r,\,\varepsilon_\Delta$, in Proposition \ref{thm:ergodicdecay}, the result follows as a corollary to Proposition \ref{thm:posteriorconsistency}. 
Note in particular that, in the setting of this section, $\abs*{\phi_t}=s_1^{-1}t^{-1/2}$, so $F_t$ and $U_t$ in \eqref{eq:FU}~agree with the quantities denoted by the same symbols in \eqref{eq:Ft}. 
\end{proof}

\begin{proof}[Proof of Theorem \ref{thm:quasiergodic_diffusion}]
Note that Theorem \ref{thm:ergodic_diffusion}~applies to \eqref{eq:tildeX}. 
Consequently, by Assumption \ref{ass:metafisher}, we may apply Theorem \ref{thm:quasiposteriorconsistency}. 
\end{proof}

\subsection{Learning diffusions in a non-degenerate double well}
\label{sec:doublewell}
Let us now consider a concrete example to which the theory developed above applies. 
Theorems \ref{thm:ergodic_diffusion}~\&~\ref{thm:quasiergodic_diffusion}, together with the small noise asymptotics of the spectra of diffusions generators summarized in Appendix \ref{sec:spectral}, allow us to precisely estimate PCRs, PCCRs, m-PCRs, and m-PCCRs. 
In this example we see that the PCR and m-PCR are similar, while the m-PCCR decays at a rate several orders of magnitude faster than that of the PCCR. 

Consider a diffusion on $\R$, governed by the SDE 
\begin{equation}
\label{eq:doublewell}
dX_t\,=\,\nabla U_\theta(X_t)\,dt + \sigma\,dW_t, 
\end{equation}
where $\sigma>0$ is a small parameter controlling noise amplitude, which we treat as fixed, $\theta\in\Theta=\R$, and $U_\theta$ is the double well potential 
\[
U_\theta(x)\,=\,\left(x^2+2x+\theta\right)\left(x^2-2x\right). 
\]
With parameter $\theta_0=0$, the system \eqref{eq:doublewell}~exhibits two metastable states at $\pm\sqrt{2}$, at the bottoms of two wells with depths equal to $4$. 

We assume that we have obtained a time series generated by \eqref{eq:doublewell}~with $\sigma=0.1$ and $\theta_0=0$, and wish to learn the parameter $\theta\in\Theta$ from an initial guess given in the form of a Gaussian prior $\pi_0$. 
%We remark that this noise amplitude is not all that small. 
For any $\theta\in\Theta$, the system \eqref{eq:doublewell}~is ergodic, with a unique ergodic measure given by 
\[
\mu_{\theta}(dx)\,=\,e^{-\sigma^{-2}U_{\theta}(x)}\,dx. 
\]
It is straightforward to check that \eqref{eq:doublewell}~satisfies the conditions of Theorem \ref{thm:uniformergodic}~in Appendix \ref{sec:spectral}, so that \eqref{eq:doublewell}~satisfies Assumption \ref{ass:uniformergodic}~for some $c(\theta),\,\gamma(\theta)>0$. 
Indeed, up to a small multiplicative constant which is less than $2$, quantified in Appendix \ref{sec:spectral}, we have  
\begin{equation}
\label{eq:gamma}
\gamma(\theta_0)\,\simeq\,\left(8^{3/2}\pi^{-1}\right)e^{-8\sigma^{-2}}\qquad\text{ for the parameter value $\theta_0=0$}. 
\end{equation}
In particular, $\gamma(\theta_0)$ becomes arbitrarily small for sufficiently small $\sigma>0$, due to the bistable nature of our system. 
See \eqref{eq:gammaL}~in Appendix \ref{sec:spectral}. 
For $\sigma=0.1$, this is 
\[
\gamma(\theta_0)\,\simeq\,(1.389\ldots)\times 10^{-11}. 
\]
The Fisher information may be computed, in this case, as 
\[
s\,\coloneqq\,\abs*{\ExpOp_{\theta_0}\left[\partial_\theta\nabla U_\theta(\xi)\right]}\,=\,\abs*{2\left(1-\ExpOp_{\theta_0}\left[\xi\right]\right)}\,=\,2, 
\]
where $\xi$ is an $\R$-valued random variable distributed as $\mu_{\theta_0}$ under $\pr_{\theta_0}$, so $\ExpOp_{\theta_0}[\xi]=0$, by the symmetry of $\mu_{\theta_0}$. 
As in Proposition \ref{thm:posteriorconsistency}~for $\alpha=1/2$ and some $\delta>0$, we define $\phi_t=t^{-1/2}s^{-1}$ and 
\[
F_t\,\coloneqq\,t^{-1/4}s^{-1/2}B_\delta(\theta_0),\qquad\qquad U_t\,\coloneqq\,t^{1/4}s^{1/2} B_\delta(\theta_0). 
\]
Letting $\pi_0$ be a standard normal Gaussian on $\Theta=\R$, we have 
\[
\pi_0(U_t)\,=\,\frac{2}{\sqrt{2\pi}}\int_0^{t^{1/4}s^{1/2}\delta} e^{-\theta^2/2}\,d\theta, 
\]
which can be evaluated numerically. 
To summarize, following Theorem \ref{thm:ergodic_diffusion}, the bound on the contraction certainty of \eqref{eq:doublewell}~is approximately given by 
\begin{equation}
\label{eq:PCCR}
\begin{aligned}
&H(t)\,=\,C\left(\left(\frac{1-e^{-\gamma(\theta_0)t}}{\gamma(\theta_0)t}\right)^{1/2} + \frac{1-e^{-\gamma(\theta_0)t}}{\gamma(\theta_0)t} + \frac{1}{s_1^2t} + \frac{1}{s_1^3t^{3/2}} +\ExpOp_{\theta_0}\left[\pi_0(U_t^c)\right]\right), \\
&\text{ with }\qquad\gamma(\theta_0)\,=\,\left(1.389\ldots\right)\times 10^{-11}\quad\text{ and }\quad s_1\,=\,2,  
\end{aligned}
\end{equation}
where $C>0$ is a constant, discussed in Remark \ref{rmk:ChatC}. 
As seen in Figure \ref{fig:contractionbounds}, this bound remains bad up to relatively large times.  
In Figure \ref{fig:contractionbounds}, we plot the radius of $F_t$ over time as the solid blue line, and the bound \eqref{eq:PCCR}~on the PCCR as the dashed blue line, assuming that $\ExpOp_{\theta_0}\left[\pi_0\right]$ is a standard Gaussian. 
Due to the smallness of $\gamma(\theta)$, the bound on the PCCR is bad over very long time scales. 

\begin{figure}[h!]
\centering
\begin{subfigure}{0.5\textwidth}
\centering
\includegraphics[scale=0.2]{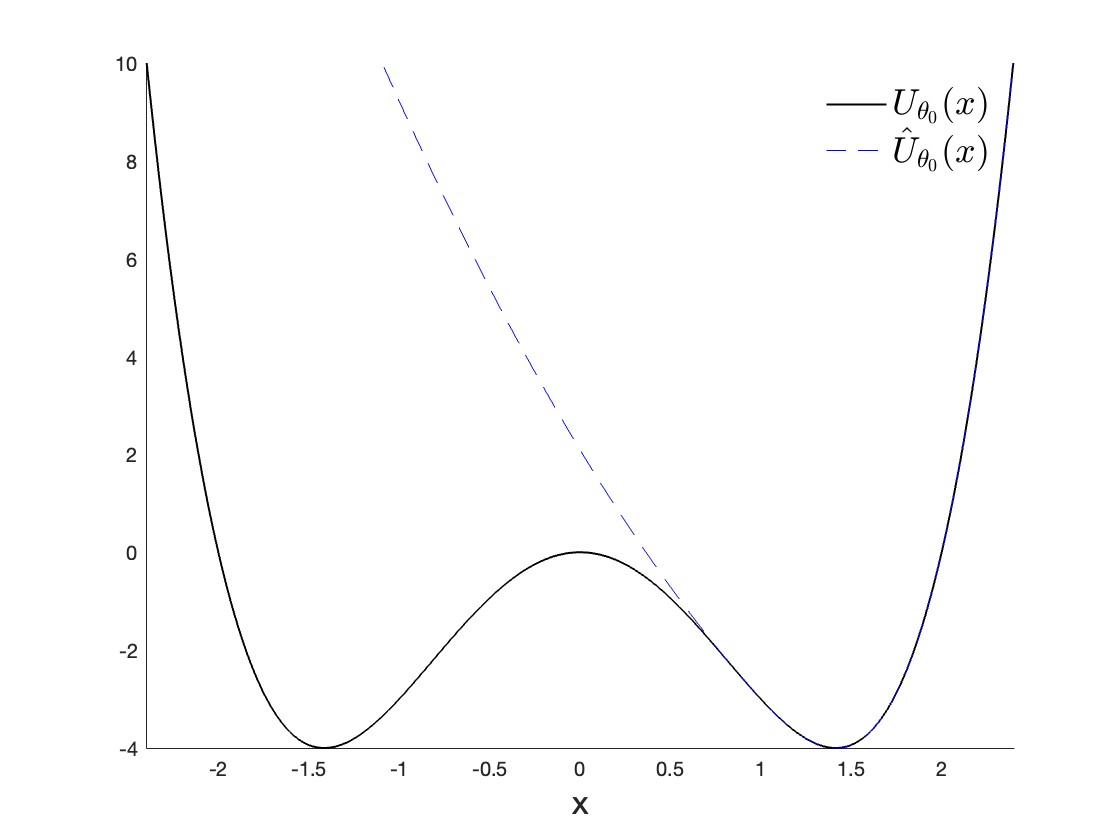}
\caption{}
\label{fig:potentials}
\end{subfigure}%
~ 
\begin{subfigure}{0.5\textwidth}
\centering
\includegraphics[scale=0.2]{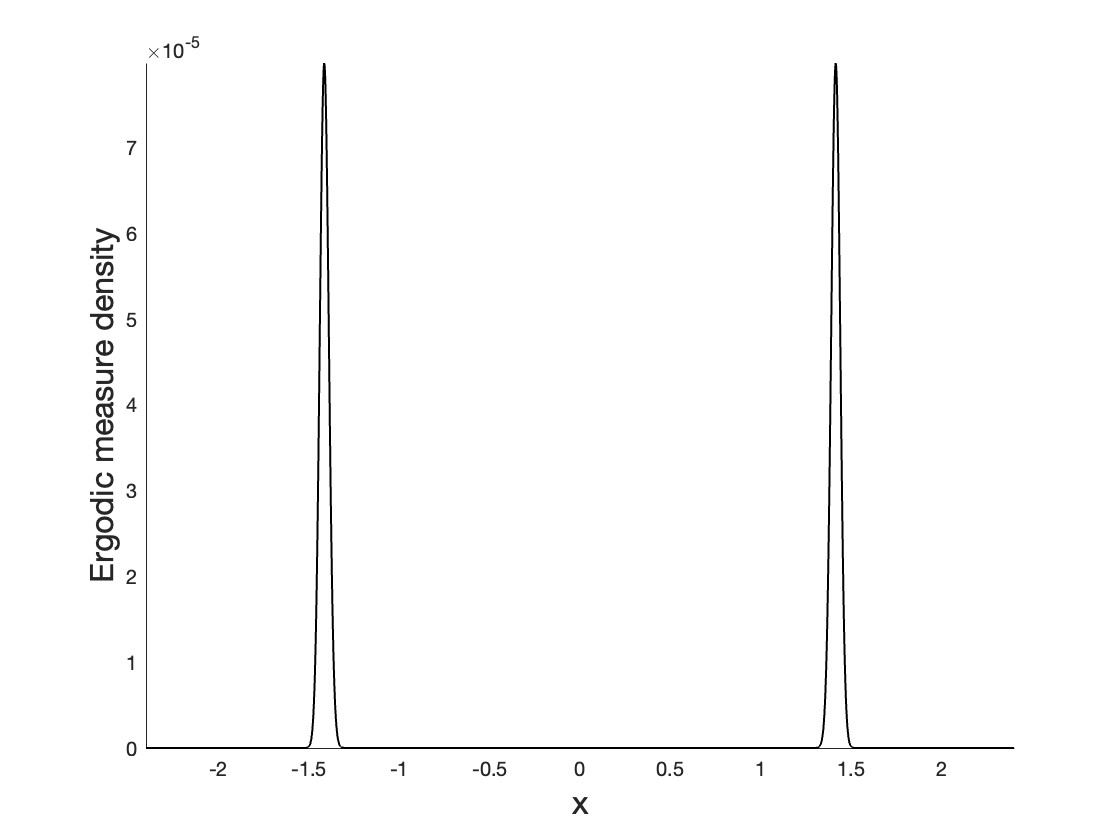}
\caption{}
\label{fig:density}
\end{subfigure}
\caption{
(a) Solid line: the double well potential $U_\theta$ with the true parameter value $\theta_0=0$. 
Dashed line: the modified single well potential $\hat{U}_\theta$ with true parameter $\theta_0=0$. 
(b) The Lebesgue density of the ergodic measure $\mu_0(dx)=e^{-U_{\theta_0}(x)/\beta}\,dx$. 
}
\label{fig:doublewell}
\end{figure}

Now, we let $\hat{U}_\theta$ be a smooth potential such that $\hat{U}_\theta(x)= U_\theta(x)$ for $x\in B_{1/2}(\sqrt{2})$, and $\hat{U}_\theta(x)$ is uniformly convex on $\R$. 
For instance, take the modified potential 
\begin{equation}
\label{eq:C2cutoff}
\hat{U}_\theta(x)\,\coloneqq\,
\begin{cases}
U_\theta(x)&\text{ for }\,x\ge\sqrt{2}-1/2, \\
U_\theta(\sqrt{2}+1/2) + U_\theta'(\sqrt{2}+1/2)(x-\sqrt{2}+1/2) &\\
\qquad\qquad+ U_\theta''(\sqrt{2}-1/2)(x-\sqrt{2}+1/2)^2&\text{ for }\,x<\sqrt{2}-1/2,  
\end{cases}
\end{equation}
shown as the dashed line in Figure \ref{fig:potentials}. 
For $\theta\in\Theta$, consider the modified systems 
\begin{equation}
\label{eq:cutoffwell}
dX_t\,=\,\nabla\hat{U}_\theta(X_t)\,dt + \sigma\,dW_t. 
\end{equation}
Let $\pr_\theta^t$ denote the distribution of the process governed by \eqref{eq:doublewell}~over the time interval $[0,t]$, and let $\hat{\pr}_{\theta}^t$ denote the distribution of the process governed by \eqref{eq:cutoffwell}~over the time interval $[0,t]$. 
Letting $\tau$ be the exit time of $(X_t)_{t\ge0}$ from $B_{1/2}(\sqrt{2})$, we see that $\pr_\theta^t$ and $\hat{\pr}_{\theta}^t$ satisfy Assumption \ref{ass:quasi}. 

By Theorem \ref{thm:escaperate}~of Appendix \ref{sec:spectral}, for small $\sigma>0$ we have $\pr_\theta\left[t<\tau\right]\simeq e^{-\lambda(\theta)t}$ up to a multiplicative error term less than $2$, 
%of order up to a multiplicative error term of order $1+\sqrt{\frac{\sigma^2}{2}}\abs*{\ln\frac{\sigma^2}{2}}^{3/2}$, 
where the constant $\lambda(\theta)>0$ is such that at $\theta_0=0$ the approximation 
\[
\lambda(\theta_0)\,\simeq\,\left(8^{3/2}\pi^{-1}\right)e^{-8\sigma^{-2}} 
\]
holds. 
For $\sigma=0.1$, this is 
\[
\lambda(\theta_0)\,\simeq\,(1.389\ldots)\times 10^{-11}.
\]
By Theorem \ref{thm:modergodic}, the constants $\hat{c}(\theta)$ and $\hat{\gamma}(\theta)$ for \eqref{eq:cutoffwell}~in Assumption \ref{ass:uniformergodic}~may be taken such that $\hat{c}(\theta)$ of the same order of magnitude as $c(\theta)$ with an appropriate choice of initial distribution, and 
\[
\hat{\gamma}(\theta_0)\,=\,\inf_{x\in\R^d}\hat{U}_\theta''(x)\,=\,9.991\ldots. 
\]
As $\hat{\gamma}(\theta_0)$ remains bounded well away from zero uniformly in $\sigma>0$, we have a relatively fast PCCR for the posterior distribution associated with \eqref{eq:cutoffwell}. 
Due to the smallness of $\lambda(\theta_0)$, the bound on the PCCR remains small on a timescale of order $10^{11}$. 
Using the cutoff potential \eqref{eq:C2cutoff}~and letting $\hat{\xi}$ be a random variable distributed as $\hat{\mu}_{\theta_0}$ under $\hat{\pr}_{\theta_0}$, we compute the Fisher information numerically as 
\[
\hat{s}\,\coloneqq\,\abs*{\hat{\ExpOp}_{\theta_0}\left[\partial_\theta\hat{U}_{\theta_0}'(\hat{\xi})\right]}\,=\,0.827\ldots. 
\]
To summarize, by Theorem \ref{thm:quasiergodic_diffusion}, the bound on the m-PCCR is therefore approximated by 
\begin{equation}
\label{eq:mPCCR}
\begin{aligned}
&\hat{H}(t)\,=\,
\hat{C}\left(\left(\frac{1-e^{-\hat\gamma(\theta_0)t}}{\hat{\gamma}(\theta_0)t}\right)^{1/2} + \frac{1-e^{-\hat\gamma(\theta_0)t}}{\hat{\gamma}(\theta_0)t} + \frac{1}{\hat{s}_1^2t} + \frac{1}{\hat{s}_1^3t^{3/2}} + \ExpOp_{\theta_0}\left[\pi_0(\hat{U}_t^c)
\right]\right)\\
&\text{ with }\qquad\hat{\gamma}(\theta_0)\,=\,9.991\ldots\quad\text{ and }\quad \hat{s}_1\,=\,0.827\ldots , 
\end{aligned}
\end{equation}
where $\hat{C}>0$ is a constant, which we discuss presently in Remark \ref{rmk:ChatC}. 
In Figure \ref{fig:contractionbounds}, this m-PCCR is plotted as a dashed red line. 
In comparison with the PCCR of the whole system, we see that the m-PCCR rapidly decays to a small value on a time scale determined by $\hat{\gamma}(\theta_0)$, before becoming worse on the time scale where $\pr_{\theta_0}[t<\tau]$ begins to increase. 

\begin{rmk}
\label{rmk:ChatC}
In Figure \ref{fig:contractionbounds}~we have plotted $H(t)$ and $\hat{H}(t)$ over time without the constants $C,\,\hat{C}$ (that is, with the assumption that $C\simeq\hat{C}$). 
While the constants $C,\hat{C}$, can in principle be computed explicitly by tracing the proofs of Theorems \ref{thm:ergodic_diffusion}~\&~\ref{thm:quasiergodic_diffusion}~and all results leading to these theorems, doing so would clutter the present discussion. 
Instead, we formally justify the assumption that $C$ and $\hat{C}$ are comparable by noting that these constants are determined by five quantities, these being: 
\begin{itemize}
\item The constant $c$ appearing in Lemma \ref{lemma:mtconvergence2}, which only depends on $\pi_0$ and is therefore the same for \eqref{eq:doublewell}~\&~\eqref{eq:cutoffwell}, 
\item The constants $\hat{\underline{m}}$ and $\underline{m}$, which are approximately equal under a suitable choice of prior distribution, 
\item The constant $k(\theta)$ appearing in \eqref{eq:TV}, and its analogue $\hat{k}(\theta)$ for \eqref{eq:cutoffwell}, which are determined as in Appendix \ref{sec:spectral}~and can be assumed to be the same order of magnitude so long as the initial distributions of \eqref{eq:doublewell}~and \eqref{eq:cutoffwell}~are similar, 
\item $I_{\theta_0}$ and the extrema of $\partial_\theta V_{\theta_0}$, via the constants$\norm*{g_1}_\infty$ and $\norm*{g_2}_\infty$ appearing in the proof of Proposition \ref{thm:ergodicdecay}~-- note that for \eqref{eq:doublewell}~\&~\eqref{eq:cutoffwell}, $I_{\theta_0}^{-1/2}=2=\hat{I}_{\theta_0}^{-1/2}$, and at their extremal points, $\partial_\theta V_{\theta_0}$ and $\partial_\theta\hat{V}_{\theta_0}$ are identical. 
\item $\norm*{V_{\theta_0}}_{C^3}$ and $\norm{\hat{V}_{\theta_0}}_{C^3}$, which are equal by construction. 
\end{itemize}
Consequently, $C$ and $\hat{C}$ are at least the same order of magnitude. 
This loose argument justifies our decision to compare $H(t)$ and $\hat{H}(t)$ by the quantities appearing within the brackets of \eqref{eq:PCCR}~\&~\eqref{eq:mPCCR}, omitting the constants $C,\,\hat{C}$. 
\end{rmk}

While $\hat{s}<s$ in this example, so that the contraction of the posterior distribution constructed from the modified model $\hat{\pi}_t$ is in some sense somewhat slower than the contraction of the original posterior distribution, the fact that $\hat{\gamma}(\theta_0)\gg\gamma(\theta_0)$ means that the \emph{certainty}~of this contraction is greatly improved by studying the posterior distribution corresponding to \eqref{eq:cutoffwell}~on a time scale of the order $10^{11}$. 

\begin{figure}[h!]
\centering
\begin{subfigure}{0.5\textwidth}
\centering
\includegraphics[scale=0.2]{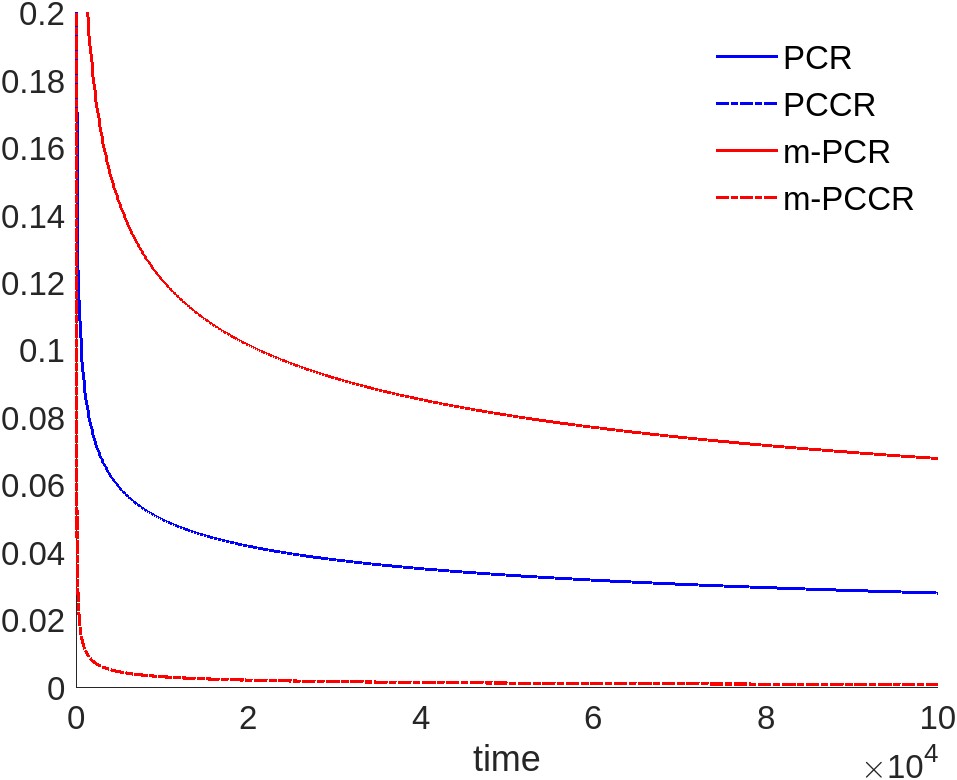}
\caption{}
\end{subfigure}%
~ 
\begin{subfigure}{0.5\textwidth}
\centering
\includegraphics[scale=0.2]{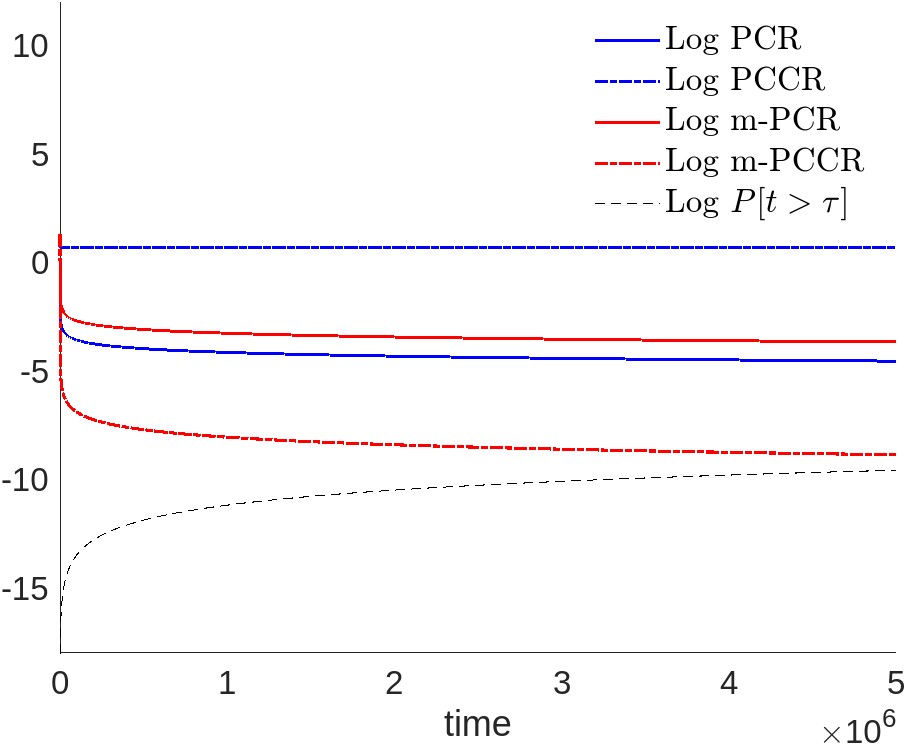}
\caption{}
\end{subfigure}
\caption{
(a) Bounds for the PCR and PCCR, for the original and modified system, with $\sigma=0.1$. 
The bound on the PCCR for the original system is above one up to very large times, due to the smallness of $\gamma(\theta_0)$ and the appearance of more than one term in \eqref{eq:PCCR}~which is not small at small times. 
(b) Logarithmic plot of the same data as presented in (a), on a much longer time scale, to illustrate how poor the certainty bound on the rate of contraction remains for very long times compared to that of the posterior distribution constructed for the modified model \eqref{eq:cutoffwell}. 
The probability of $\{t>\tau\}$ is also plotted, to demonstrate that the m-PCCR estimated via the modified system \eqref{eq:cutoffwell}~remains small over very long time scales, as per equation \eqref{eq:hatPP}. 
%One can also clearly see on this time scale the \emph{increase}~in the bound on the certainty of contraction of the quasi-posterior distribution. 
The code used to generate these figures is available upon request. 
}
\label{fig:contractionbounds}
\end{figure}

\begin{figure}[h!]
\centering
\begin{subfigure}{0.5\textwidth}
\centering
\includegraphics[scale=0.2]{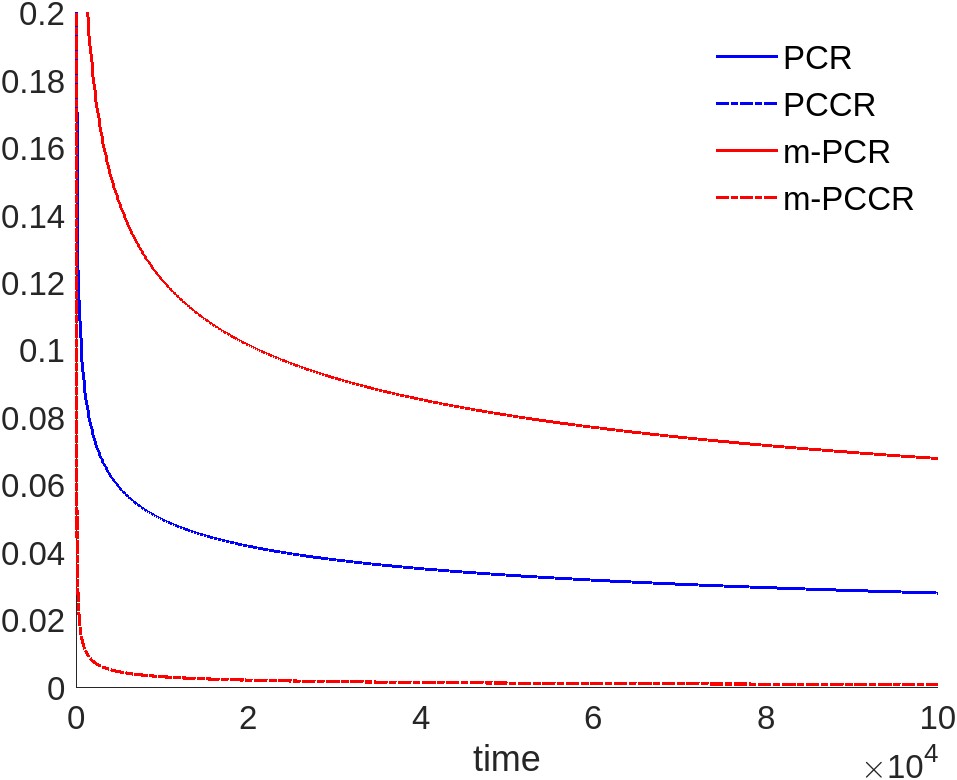}
\caption{}
\end{subfigure}%
~ 
\begin{subfigure}{0.5\textwidth}
\centering
\includegraphics[scale=0.2]{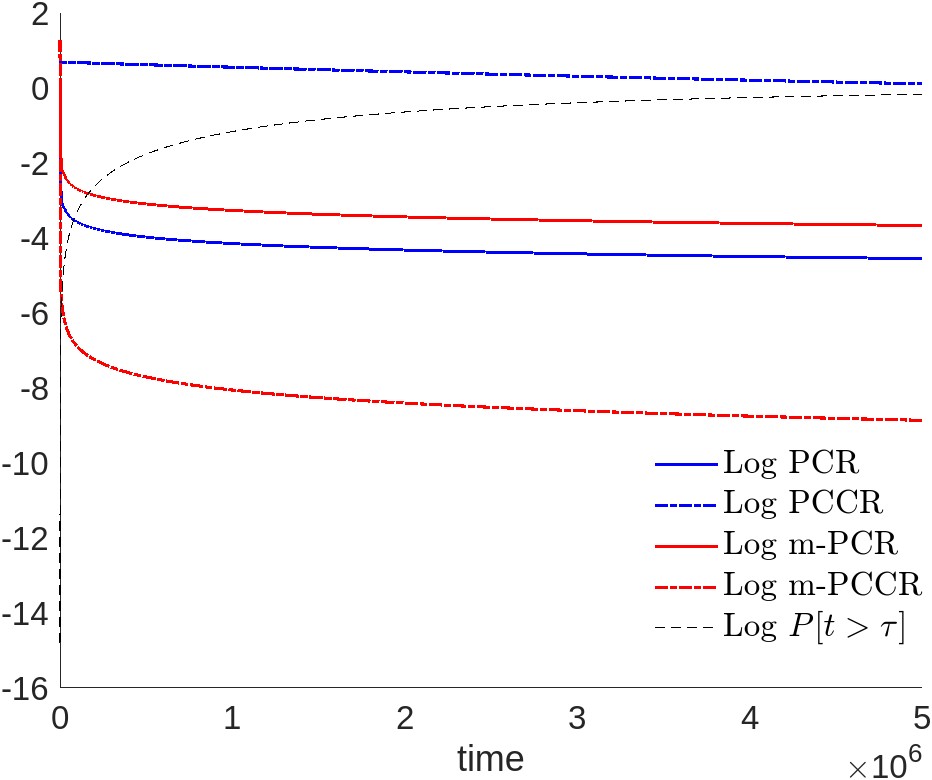}
\caption{}
\end{subfigure}
\caption{
(a) Bounds for the contraction rates and rate of contraction certainty, for the original and modified system, now with $\sigma=0.13$. 
At this noise strength, the bounds on the true posterior distribution being reflected by the posterior distribution constructed from the modified system are still very small on some time scale, but begin to increase after a finite time interval. 
The values of $s$ and $\hat{s}$ and the asymptotic value of $\hat{\gamma}(\theta_0)$ are the same as in Figure \ref{fig:contractionbounds}, while at this noise strength $\lambda(\theta_0)=\gamma(\theta_0)\simeq(3.763\ldots)10^{-7}$. 
In this case, we clearly see $\pr_{\theta_0}[t>\tau]$ begin to increase after a finite time interval, indicating that the m-PCR and m-PCCR begin to fail to give us useful information on the true system after some time. 
Code used to generate these figures is available upon request. 
}
\end{figure}

\begin{rmk}
We note that our analysis for the modified system governed by \eqref{eq:cutoffwell}~would also hold if we replaced the double well in \eqref{eq:doublewell}~by a \emph{cutoff}~double well (for instance, such that $U_{\theta_0}(x)$ tended to a positive constant as $\abs*{x}\rightarrow\infty$), so that \eqref{eq:doublewell}~possessed no invariant probability measure. 
In this case, to the authors' knowledge, there is no standard theory guaranteeing posterior consistency of a Bayesian inference scheme for such a system. 
Nevertheless, the \emph{metaconsistency}~guaranteed by our analysis of the system \eqref{eq:cutoffwell}~and Theorem \ref{thm:quasiposteriorconsistency}~still holds. 
\end{rmk}

\subsection{Learning a diffusion in a degenerate double well}
\label{sec:degeneratedoublewell}
Let us now suppose that we try to fit a time series onto a family of It{\^o}~diffusions of the form \eqref{eq:potentialSDE}~with gradient drift defined by a double well potential of the form 
\begin{equation}
\label{eq:degeneratedoublewell}
U_\theta(x)\,=\,x^4-\theta x^2. 
\end{equation}
%With $\theta=4$, the potential in \eqref{eq:degeneratedoublewell}~is the same as that in \eqref{eq:doublewell}~with $\theta_0=0$. 
In this case, with $\theta_0=4$, we have 
\[
\ExpOp_{\theta_0}\left[\partial_\theta U_{\theta_0}'(\xi)\right]\,=\,\ExpOp_{\theta_0}\left[-2\xi\right]\,=\,0, 
\]
where $\xi$ is a random variable distributed as $\mu_{\theta_0}$ under $\pr_{\theta_0}$, and the above quantity equals zero by the symmetry of $\mu_{\theta_0}$. 
Hence, the estimates on the rate of contraction of a posterior distribution in any Bayesian setup involving this family of models do not apply in this scenario, due to a problem of non-identifiability of the parameter $\theta$. 

However, if we replace the family of potentials in \eqref{eq:degeneratedoublewell}~by a $C^2$ cutoff family of potentials, for instance of the form in \eqref{eq:C2cutoff}, this solves the problem of non-identifiability. 
Indeed, we again find $\hat{s}=0.827\ldots$, computed numerically as in the non-degenerate example of Section \ref{sec:doublewell}, and that $\hat{\gamma}(\theta_0)\simeq2.029\ldots$. 
So, while we have no guarantee of contraction of the true posterior distribution of a Bayesian setup for It{\^o}~diffusions with gradient drift given by the potential \eqref{eq:degeneratedoublewell}, we find that the posterior distribution of the modified system \eqref{eq:cutoffwell}~converges at a relatively rapid rate.

\section{Metastable decomopositions of parameter space}
\label{sec:decomp}
In the sections above, we have seen that we may overcome the slow-down of a Bayesian posterior caused by the small spectral gap of a metastable diffusion by restricting attention to exactly one of its metastable states. 
However, there is the possibility that the Fisher information of the system restricted to a single metastable state is much less than the Fisher information of the whole system. 
But, if the Fisher information in each metastable state is somehow independent of that in the other states, in a sense to be made precise, then it seems natural that we might be able to speed up learning by focusing attention on each metastable state, marginalizing the ``slow to learn'' directions in parameter space, and then tensoring the ``sub-posteriors'' constructed from each metastable subsystem together. 
In this way, as we demonstrate, one overcomes the slow down in Bayesian inference caused by the small spectral gap of the whole system and the potentially small Fisher information of the metastable subsystems. 
To make this all precise, we introduce the following assumption. 
For the sake of not overburdening our notation, we focus on the setting of systems with precisely two metastable states, though the same arguments generalize to the case of $n\in\N$ metastable states, suitable modifications being made. 

\begin{ass}
\label{ass:cutandsew}
Let the SDE \eqref{eq:SDE}~satisfy Assumption \ref{ass:fisher}. 
Let $O_1,O_2\subset\R^d$ be smooth bounded domains such that $\mu_{\theta_0}(O_1\cup O_2)\ge1-\epsilon/3$ for some small, fixed $\epsilon>0$, and let $\hat{V}_\theta^1,\,\hat{V}_\theta^2$, be vector fields on $\R^d$ such that each pair $(O_i,\hat{V}_\theta^i)$ satisfies Assumption \ref{ass:metafisher}. 
Denote the ergodic measure of 
\begin{equation}
\label{eq:Xi}
d\hat{X}\,=\,\hat{V}_\theta^i(\hat{X})\,dt + \sigma\,dW_t 
\end{equation}
by $\mu^i_\theta$, and let $\xi^i$ be a random variable on $\R^d$ distributed as $\mu^i_{\theta_0}$. 
Assume that 
\begin{equation}
\label{eq:muepsilon}
\mu_{\theta_0}^i(O_i)\,\ge\,1-\epsilon/3, 
\end{equation}
which can be achieved for arbitrary $\epsilon>0$ by taking $\hat{V}_{\theta_0}^i$ such that $O_i$ attracts the trajectories of \eqref{eq:Xi}~with $\sigma=0$ at a sufficiently fast rate. 
Denote the singular values of the matrices 
\[
\ExpOp_{\theta_0}\left[\partial_\theta\hat{V}_{\theta_0}^1(\xi^1)\right],\qquad  
\ExpOp_{\theta_0}\left[\partial_\theta\hat{V}_{\theta_0}^2(\xi^2)\right], 
\]
by $s_1^1\le\cdots\le s_p^1$ and $s_1^2\ge\cdots\ge s_p^2$, and let $S_j^i$ be the generalized eigenspace associated with $s_j^i$ 
(note that the $s_i^1$ are labeled in ascending order, while $s_i^2$ are labeled in descending order). 
%(the reasoning for the choices of the orderings of the $s_j^i$ will become apparent presently). 
Assume that for some $1\le k<\ell\le p$ we have 
\[
\begin{aligned}
&s_i^1\,\ll\,s_i^2\quad\text{ for }\,i\in\{1,\ldots,k\}, \qquad\qquad
s_i^1\,\gg\,s_i^2\quad\text{ for }\,i\in\{\ell,\ldots,p\},  
\end{aligned}
\]
and that there exists a decomposition $\Theta=\Theta_0\oplus\Theta_1\oplus\Theta_2$, 
such that each $\Theta_i$ is linear, and 
\[
\Theta_1\,=\,\bigcup_{j\ge\ell}S_j^1,\qquad\Theta_2\,=\,\bigcup_{j\le k}S_j^2.  
\]
\end{ass}

The assumption $\mu_{\theta_0}(O_1\cup O_2)\ge1-\epsilon/3$ is natural when \eqref{eq:SDE}~is ergodic with precisely two metastable states contained in $O_1$ and $O_2$. 
See for instance Figure \ref{fig:density}. 

Before proceeding, let us consider the following scenario: 
Suppose that each $V_\theta$ is the gradient of a double well potential, $V_\theta=\nabla U_\theta$, and that $\Theta=\Theta_1\times\Theta_2\simeq\R^2$ (so, in this example, $\Theta_0=\varnothing$). 
Suppose moreover that $\Theta_1,\Theta_2$, each model the depths of one of the wells in $U_\theta$, but that changing the depth of one well has little or no effect on the shape or depth of the other. 
Then, restricting attention to the first well of $U_\theta$ may speed up the PCCR due to the substantial increase in the spectral gap (as follows from Theorem \ref{thm:quasiposteriorconsistency}~and the discussion of Appendix \ref{sec:spectral}), but will slow down the PCR along the second parameter axis $\Theta_2$. 
This is due to the fact that we can detect very little about the second parameter from observing dynamics in the first well alone. 

Our goal here is to establish how, under Assumption \ref{ass:cutandsew}, one can construct a posterior distribution $\tilde{\pi}_t$ that has all of the advantages of learning the dynamics of each well individually, and none of the disadvantages caused by only being able to see one of the axes $\Theta_i$ in each well. 
For time series $\hat{X}^i$ obtained from \eqref{eq:Xi}, $i\in\{1,2\}$, we construct posteriors $\hat{\pi}_t^i(\,\cdot\mid \hat{X}^i\,)$ on $\Theta$ as before. 
For $i\in\{1,2\}$, we then marginalize out $\Theta_j$, where $j\neq i$, 
\[
\begin{aligned}
\tilde{\pi}^1_t(F|\,X^1)\,&\coloneqq\,\int_{F\cup\Theta_2}\hat{\pi}_t^1(\theta|\,\hat{X}^1)\,\pi_0(d\theta), \qquad\quad\,\text{ and } \\
\tilde{\pi}^2_t(F|X^2)\,&\coloneqq\,\int_{F\cup\Theta_0\cup\Theta_1}\hat{\pi}_t^2(\theta|\hat{X}^2)\,\pi_0(d\theta)\qquad\text{ for $F\subset\Theta$. } 
\end{aligned}
\]
We note that $\Theta_0$ could be included in the definition of $\tilde{\pi}^2_t(\,\cdot\,|\hat{X}^2)$ instead of $\tilde{\pi}^1_t(\,\cdot\,|\hat{X}^1)$ without changing the present discussion. 
Then, our construction of $\tilde{\pi}_t$ is completed by taking the product 
\begin{equation}
\label{eq:tildepi}
\tilde{\pi}_t(F\,|\,\hat{X}^1,\hat{X}^2)\,=\,\tilde{\pi}_t^1(F|\hat{X}^1)\otimes\tilde{\pi}_t^2(F|\hat{X}^2). 
\end{equation}

In addition to such an ``annealed'' posterior, we need an ``annealed'' Fisher information matrix to facilitate our analysis. 
First, for $i\in\{1,2\}$, let $\hat{I}_{\theta_0}^i$ denote the Fisher information matrix associated with \eqref{eq:Xi}, as in \eqref{eq:I}. 
We then let $\tilde{I}_{\theta_0}$ denote the unique matrix such that 
\[
\tilde{I}_{\theta_0}u\,=\,
\begin{cases}
\hat{I}_{\theta_0}^1u&\text{ if $u\in\Theta_0\cup\Theta_1$},\\
\hat{I}_{\theta_0}^2u&\text{ if $u\in\Theta_2$}. 
\end{cases}
\]

Our next major result establishes that the PCR and PCCR of $\tilde{\pi}_t$ are \emph{both}~much faster than those of $\pi_t$ in the setting of Assumption \ref{ass:cutandsew}. 
Essentially, this result indicates that, from a metastable decomposition of the observed dynamics, one has a linear decomposition of the parameter space over which models are fit, in accordance with the singular values of the Fisher information matrix of the process restricted to each of its metastable states. 

\begin{thm}
\label{thm:cutandsew}
Let Assumption \ref{ass:cutandsew}~hold, and let $\{s_1,\ldots,s_p\}$ denote the singular values of $\ExpOp_{\theta_0}\left[\partial_\theta V_{\theta_0}(\xi)\right]$. 
Let $\tilde{\pi}_t$ be as defined in \eqref{eq:tildepi}. 
Then, 
\[
\tilde{s}\,\coloneqq\,\max\left\{\min\left\{s_1^2,\ldots,s_k^2\right\},\min\left\{s_{k+1}^1,\ldots,s_p^1\right\}\right\} \,\gg\,s_1, 
\]
and defining $\tilde{\epsilon}_t\coloneqq\delta\tilde{s}^{-\alpha}t^{-\alpha/2}$ for some fixed $\delta>0$, $\alpha\in(0,1)$, along with
\[
\tilde{F}_t\,\coloneqq\,B_{\tilde{\epsilon}_t}(\theta_0),\qquad \tilde{U}_t\,\coloneqq\,t^{1/2}\tilde{I}_{\theta_0}^{1/2}B_{\tilde{\epsilon}_t}(0), 
\]
we have 
\begin{equation}
\label{eq:tildeconsistency}
\pr_{\theta_0}\left[\tilde{\pi}_t(\tilde{F}_t|\tilde{X}^1,\tilde{X}^2)<1-\tilde{H}_t\right]\,\le\,\tilde{H}_t, 
\end{equation}
where $\tilde{H}_t\coloneqq\max\left\{\hat{H}_t^1,\hat{H}_t^2\right\}$, 
with $\hat{H}_t^i$ being defined for \eqref{eq:Xi}~as in Theorem \ref{thm:quasiposteriorconsistency}. 
\end{thm}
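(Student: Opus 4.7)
The plan is to reduce Theorem \ref{thm:cutandsew} to Theorem \ref{thm:quasiergodic_diffusion} applied separately to each subsystem in \eqref{eq:Xi}, then show that marginalization and the product construction preserve (and in fact improve) the concentration bounds by replacing the small singular values of the full Fisher information with the much larger singular values of the restricted Fisher informations on the relevant subspaces. First, I would apply Theorem \ref{thm:quasiergodic_diffusion} to each pair $(O_i, \hat{V}_\theta^i)$ to obtain, for $i=1,2$,
\[
\pr_{\theta_0}\left[\hat{\pi}_t^i(\hat{F}_t^i \mid \hat{X}^i) < 1 - (\hat{H}_t^i)^{1/2}\right] \,\le\, (\hat{H}_t^i)^{1/2} + \pr_{\theta_0}[t > \tau_i],
\]
where $\tau_i$ is the exit time of $X$ from $O_i$ and $\hat{H}_t^i$ is governed by the enlarged spectral gap $\hat{\gamma}^i(\theta_0)$ and the Fisher information $\hat{s}_1^i$ of subsystem $i$. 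By \eqref{eq:muepsilon}, each $\mu_{\theta_0}^i$ concentrates on $O_i$, so these subsystem bounds hold on much longer timescales than does the corresponding bound for the unrestricted system.

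Next I would analyze how concentration transfers under marginalization. The measure $\tilde{\pi}_t^i$ is the pushforward of $\hat{\pi}_t^i$ under the projection that integrates out the ``bad direction'' subspace ($\Theta_2$ for $i=1$, $\Theta_0 \oplus \Theta_1$ for $i=2$). The LAN expansion of Proposition \ref{thm:ergodicLAN} features $\phi_t^i = t^{-1/2} (\hat{I}^i_{\theta_0})^{-1/2}$, and by the spectral structure imposed in Assumption \ref{ass:cutandsew}, the restriction of $(\hat{I}^i_{\theta_0})^{-1/2}$ to the complementary subspace has spectrum obtained by deleting the singular values associated with the marginalized subspace. Hence the effective PCR of $\tilde{\pi}_t^1$ on $\Theta_0 \oplus \Theta_1$ is of order $(\min_{j \ge k+1} s_j^1)^{-1} t^{-1/2}$ and that of $\tilde{\pi}_t^2$ on $\Theta_2$ is of order $(\min_{j \le k} s_j^2)^{-1} t^{-1/2}$, avoiding the small singular values $s_j^1$ for $j \le k$ and $s_j^2$ for $j \ge \ell$ entirely.

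Finally, I would combine the two marginals via the product: if $\tilde{\pi}_t^1(A_1) \ge 1 - a_1$ for $A_1 \subset \Theta_0 \oplus \Theta_1$ and $\tilde{\pi}_t^2(A_2) \ge 1 - a_2$ for $A_2 \subset \Theta_2$, then $\tilde{\pi}_t(A_1 \oplus A_2) \ge 1 - a_1 - a_2$. Taking $A_1,A_2$ to be the relevant coordinate projections of a ball about $\theta_0$ of radius comparable to $\tilde{\epsilon}_t$, arranged so that $A_1 \oplus A_2 \subset \tilde{F}_t$ after absorbing a universal geometric constant into $\delta$, and setting $a_i = (\hat{H}_t^i)^{1/2}$, yields \eqref{eq:tildeconsistency} with $\tilde{H}_t$ of the claimed form; the inequality $\tilde{s} \gg s_1$ then follows immediately from the two spectral separation hypotheses of Assumption \ref{ass:cutandsew}.

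The main obstacle will be the marginalization step: verifying that integrating out a subspace of $\hat{\pi}_t^i$ produces a measure whose LAN-type Gaussian approximation lives on the complementary subspace with covariance equal to the restricted inverse Fisher information. The spectral decomposition built into Assumption \ref{ass:cutandsew} via the generalized eigenspaces $S_j^i$ is designed to make this restriction transparent, but one still has to track carefully how the ``shared'' subspace $\Theta_0$ contributes to both marginals, and to reconcile the max-of-min form of $\tilde{s}$ in the statement with the min-of-min rate one might naively expect from a product of two concentrations.
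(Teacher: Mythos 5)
Your high-level strategy (apply the subsystem concentration results to each $\hat{\pi}_t^i$, marginalize out the poorly-seen directions, and take the product) matches the paper's approach, and your observation that the product construction gives a union-bound-type guarantee $1 - a_1 - a_2$ is sound and essentially equivalent, up to a factor of two, to the $\max$ appearing in $\tilde{H}_t$. Your flagged concern about tracking the shared block $\Theta_0$ through the marginalization is also well-placed; the paper handles it by assigning $\Theta_0$ to $\tilde\pi_t^1$ and noting the choice is arbitrary.

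However, there is a genuine gap in the first part of your argument. You assert that $\tilde{s} \gg s_1$ ``follows immediately from the two spectral separation hypotheses of Assumption~\ref{ass:cutandsew}.'' It does not. The separation hypotheses ($s_i^1 \ll s_i^2$ for $i \le k$ and $s_i^1 \gg s_i^2$ for $i \ge \ell$) relate the singular values of the \emph{restricted} Fisher information matrices to each other; they say nothing directly about $s_1$, the smallest singular value of the \emph{full} matrix $\ExpOp_{\theta_0}[\partial_\theta V_{\theta_0}(\xi)]$. The missing step, which carries the substantive content of the paper's proof, is a perturbation argument: because $\mu_{\theta_0}(O_1 \cup O_2) \ge 1 - \epsilon/3$ and $\mu_{\theta_0}^i(O_i) \ge 1 - \epsilon/3$ as in \eqref{eq:muepsilon}, one has
\[
\ExpOp_{\theta_0}\left[\partial_\theta V_{\theta_0}(\xi)\right]\,=\,\ExpOp_{\theta_0}\left[\partial_\theta \hat{V}^1_{\theta_0}(\xi^1)\right] + \ExpOp_{\theta_0}\left[\partial_\theta \hat{V}^2_{\theta_0}(\xi^2)\right] + O(\epsilon),
\]
and Weyl-type singular value inequalities then force $s_1$ to be within $O(\epsilon)$ of $\min\{s_1^1, s_p^2\}$. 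Only \emph{then} does Assumption~\ref{ass:cutandsew} give $s_1^1, s_p^2 \ll \tilde{s}$ and hence $\tilde s \gg s_1$. Without this localization-of-Fisher-information argument, the claim about $s_1$ is unsupported.

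A secondary, more minor issue: you invoke Theorem~\ref{thm:quasiergodic_diffusion} for each subsystem, which would leave residual $\pr_{\theta_0}[t>\tau_i]$ terms in the bound. These do not appear in \eqref{eq:tildeconsistency}. The paper instead applies Theorem~\ref{thm:ergodic_diffusion} directly to the marginalized posteriors, the implicit reading being that $\hat{X}^1, \hat{X}^2$ are observed from the modified ergodic systems \eqref{eq:Xi}, not from the original metastable process stopped at exit from $O_i$. If you want the statement exactly as written, you should use the clean consistency result for the modified systems rather than the metaconsistency one.
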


In essence, as $s_1$ is the PCR of the original system, we see that, in the setting of Theorem \ref{thm:cutandsew}, learning each subsystem independently, and only after this putting together what has been learned, can substantially improve the PCR. 
Together with the concrete results on spectral gaps discussed in the previous sections, we see that learning metastable substrates independently, then annealing what is learned back together, can speed up both the PCR \emph{and}~PCCR. 
Of course, in practical terms, this result is only useful to an experimentalist if the initial conditions of an experiment can be controlled, such that it is known when a time series begins in a given metastable state. 

\begin{proof}[Proof of Theorem \ref{thm:cutandsew}]
Let $\norm*{A}_j$ denote the norm of a matrix $A$ defined by taking the $j$th singular value of $A$. 
Note that, by \eqref{eq:muepsilon}, 
\[
\ExpOp_{\theta_0}\left[\partial_\theta V_{\theta_0}(\xi)\right]\,=\,\ExpOp_{\theta_0}\left[\partial_\theta \tilde{V}^1_{\theta_0}(\xi^1)\right] + \ExpOp_{\theta_0}\left[\partial_\theta \tilde{V}^2_{\theta_0}(\xi^2)\right] + 3\epsilon.
\]
Consequently, 
\[
\norm*{\ExpOp_{\theta_0}\left[\partial_\theta V_{\theta_0}(\xi)\right]}_j-\norm*{\ExpOp_{\theta_0}\left[\partial_\theta \tilde{V}^1_{\theta_0}(\xi^1)\right]}_j\,\le\,3\epsilon + \norm*{\ExpOp_{\theta_0}\left[\partial_\theta \tilde{V}^2_{\theta_0}(\xi^2)\right]}_j, 
\]
so that when $s_j^2$ is small, $s_j$ must be close to $s_{p-j}^1$. 
By the same argument, when $s_{p-j}^1$ is small, $s_j$ must be close to $s_j^2$. 
It follows that $s_1\simeq s_1^1$ or $s_1\simeq s_p^2$. 
By Assumption \ref{ass:cutandsew}, we then have that 
\[
s_1^1,s_p^2\,\ll\,\tilde{s}. 
\]
To prove \eqref{eq:tildeconsistency}, remark that Theorem \ref{thm:ergodic_diffusion}~applies to each of $\tilde{\pi}^1_t(\,\cdot\,|X^1),\,\tilde{\pi}^2_t(\,\cdot\,|X^2)$, and we may take the product of these measures to conclude the result on the whole parameter space. 
\end{proof}

\section{Discussion}
\label{sec:Discussion}
The Bayesian PCR of a statistical model consisting of ergodic diffusions can be precisely characterized in terms of the Fisher information of the model, while the PCCR is characterized as a combination of the Fisher information and spectral gaps of the model. 
In the case of metastable diffusions, the spectral gap is very small, leading to a slow PCCR, which may limit the ability to make inferences from observations of metastable time series due to the amount of data needed. 
This slow down in the PCCR for metastable diffusions may be overcome by limiting attention to a single metastable state, as described in Section \ref{sec:ergodicdiffusions}~and demonstrated for a diffusion in a double well potential in Section \ref{sec:doublewell}. 
Restricting attention to a single metastable subsystem may have the added benefit of increasing the Fisher information, in particular in the non-identifiable setting, where the Fisher information may be zero due to a degenerate symmetry in the modeled system. 
For instance, in Section \ref{sec:degeneratedoublewell}, we see how restricting attention to one metastable state of a metastable system can make a non-identifiable model identifiable by removing such symmetry. 
On the other hand, in the case where each metastable state of a system is only affected by changes in a subspace of parameter space, restricting to a single metastable state may slow down the PCR by leading to a small Fisher information. 
To circumvent this, in Section \ref{sec:decomp}~we have seen how one may further restrict the posterior distribution constructed from each metastable subsystem to the directions in parameter space which the metastable subsystem can see, and then take the product of the marginalized ``sub'' posteriors. 
Doing so allows us to overcome possible slowdowns in both the PCR and PCCR of metastable systems caused by a small spectral gap of the whole system, and a small Fisher information of the sub-systems. 
Future work may focus on algorithmic implementations of this theory, as well as applications to the foundations of time series inference and machine learning. 
It would also be of interest to understand how metastability of the statistical models would influence decompositions of the parameter space $\Theta$ when $\Theta$ is fundamentally nonlinear, as in the nonparametric setting where $\Theta$ is infinite dimensional.

\section*{Acknowledgements}
ZPA acknowledges Germany's Excellence Strategy, via the Berlin Mathematics Research Center MATH+ project EF45-5, as well as the Center for Scalable Data Analytics and Artificial Intelligence ScaDS.AI-Leipzig, for supporting this research. 
ZPA would also like to thank the Max Planck Institute for Mathematics in the Sciences for hosting him during this research, and Maximilian Engel, for his guidance and many helpful discussions on the topic of metastability. SM would like to acknowledge partial funding from NSF DBI 1661386, NSF IIS 15-46331, as well as high-performance computing partially supported by grant 2016- IDG-1013
from the North Carolina Biotechnology Center as well as the Alexander von
Humboldt Foundation, the BMBF and the Saxony State Ministry for Science.

\appendix
\section{Some notes on spectral theory}
\label{sec:spectral}
In this appendix, we discuss some results on the spectral properties of the Markov generator of an SDE with gradient drift and additive noise, 
\begin{equation}
\label{eq:potentialSDE}
dX_t\,=\,-\nabla U(X_t)\,dt + \sigma\,dW_t, 
\end{equation}
over an ambient probability space $(\Omega,\F,\pr)$. 
These spectral properties are then connected to the metastable nature of \eqref{eq:potentialSDE}. 
To the authors' knowledge, the quantitative theory of metastable states, encompassing apparent convergence rates to metastable states and transition times, is only complete in the case where $U$ is a Morse function. 
For this reason we restrict our attention to this case in this appendix, though we expect the principle of our discussion to translate to metastable states of greater complexity. 
For recent work on transition times between metastable states of greater complexity, see \cite{M23,VW24}.

Before proceeding, we fix some notation. 
For a negative semi-definite essentially self adjoint operator $\mathcal{A}:D(\mathcal{A})\subset H\rightarrow H$ on a separable Hilbert space $H$, let $\spec(-\mathcal{A})$ denote the spectrum of $-\mathcal{A}$. 
Let $\lambda(\mathcal{A})\coloneqq\inf\spec(-\mathcal{A})$ denote the bottom eigenvalue of the spectrum of $-\mathcal{A}$, and let $\gamma(\mathcal{A})\coloneqq\inf\spec(-\mathcal{A})\backslash\{\lambda(\mathcal{A})\}$ be the next closest point in the spectrum of $-\mathcal{A}$. 
Let $\kappa(\mathcal{A})\coloneqq\gamma(\mathcal{A})-\lambda(\mathcal{A})$ denote the spectral gap of $\mathcal{A}$. 

\begin{ass}
\label{ass:kzeros}
The system \eqref{eq:potentialSDE}~has global in time solutions, and is uniformly ergodic with unique ergodic measure $\mu$. 
Moreover, \eqref{eq:potentialSDE}~has initial distribution $\nu=h\mu$ with $h\in L^2(\R^d,\mu)$, and 
\begin{enumerate}
\item 
$U$ is a non-negative $C^3$ Morse function satisfying the growth condition \cite[Assumption 1.5]{MS14}. 
\item 
The potential $U$ has exactly $n\in\N$ local minima, labeled $x_1,\ldots,x_n,$ and for $k\in\{1,\ldots,n\}$ the Hessian $H_k\coloneqq \nabla^2U(x_k)$ is strictly positive definite. 
\item 
Between any two minima $x_i,x_j$, the saddle height between $x_i$ and $x_j$ is attained at a unique critical point of $U$, denoted $s_{i,j}\in\R^d$. 
\item 
For each pair $i,j$, $\tilde{H}_{i,j}\coloneqq\nabla^2U(s_{i,j})$ has precisely one negative eigenvalue, denoted $\lambda_{i,j}$. 
\item 
The minima $\{x_1,\ldots,x_n\}$ are ordered such that $x_1$ is a global minimum of $U$, and for some $\eta>0$, 
\[
U(s_{1,2})-U(x_2)\,\ge\,U(s_{1,i})-U(x_i)+\eta,\qquad i\in\{3,\ldots,n\}.  
\]
\end{enumerate}
\end{ass}

Under Assumption \ref{ass:kzeros}, the Markov generator $\L$ of \eqref{eq:potentialSDE}~exists in the $L^2_{\mu}$-topology, and its action on $f\in C^2(\R^d)\cap L^2(\R^d,\mu)$ can be expressed as  
\begin{equation}
\label{eq:L}
\L f\,=\,\frac{\sigma^2}{2}\Delta f -\nabla U\cdot\nabla f, 
\end{equation}
In this appendix, we collect rigorous results on the spectral properties of $\L$, and its restriction to neighbourhoods of the $x_k$ with Dirichlet boundary conditions. 
These results justify our estimates on the constant $\gamma(\theta_0)$ appearing in Assumption \ref{ass:uniformergodic}~in the case of metastable diffusions, and on the rate of escape via estimates on the distribution of $\pr_{\theta_0}\left[t<\tau\right]$, as appears in Theorem \ref{thm:quasiposteriorconsistency}. 

\begin{thm}
\label{thm:uniformergodic}
Let Assumption \ref{ass:kzeros}~hold. 
Then, $\lambda(\L)=0$ is a simple eigenvalue of $\L$, and 
\begin{equation}
\label{eq:gammaL}
\gamma(\L)\,\simeq\,\left(\frac{2\abs*{\lambda_{1,2}}\sqrt{\det\nabla^2 H_2}}{\pi\sqrt{\abs*{\det\nabla^2\tilde{H}_{1,2}}}}\right)\exp\left(2\frac{U(x_2)-U(s_{1,2})}{\sigma^2}\right), 
\end{equation}
equality holding up to a multiplicative constant of order $1+\sqrt{\frac{\sigma^2}{2}}\abs*{\ln\frac{\sigma^2}{2}}^{3/2}$. 
Moreover, 
\begin{equation}
\label{eq:appTV}
\norm*{\pr\left[X_t\in\,\cdot\,\right]-\mu(\,\cdot\,)}_{TV}\,\le\,e^{-\gamma(\L)t/2}\norm*{h-1}_{L^2_\mu}. 
\end{equation}
\begin{proof}
By \cite{MS14}, \eqref{eq:potentialSDE}~has a spectral gap $\gamma(\L)$ such that \eqref{eq:gammaL}~holds up to a multiplicative constant of order $1+\sqrt{\frac{\sigma^2}{2}}\abs*{\ln\frac{\sigma^2}{2}}^{3/2}$ (we note that \cite{MS14}~requires $U\ge0$, but this translates to our setting by simply adding a constant to $U$). 
Equation \eqref{eq:appTV}~then follows from \cite[Theorem 2.1]{CG09}. 
\end{proof}
\end{thm}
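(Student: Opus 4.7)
The plan is to split the statement into three pieces handled by different tools: simplicity of the ground state $\lambda(\L)=0$, the sharp Eyring--Kramers prefactor for $\gamma(\L)$, and the total-variation decay. Under Assumption \ref{ass:kzeros}, $\mu \propto e^{-2U/\sigma^2}dx$ is the unique invariant probability measure, and $\L$ in \eqref{eq:L} is essentially self-adjoint and negative on $L^2(\mu)$ with Dirichlet form $\mathcal{E}(f,f)=\tfrac{\sigma^2}{2}\int |\nabla f|^2\,d\mu$. Since $\mathcal{E}(f,f)=0$ forces $f$ to be constant, $\ker\L = \R\cdot 1$, so $0$ is simple. This disposes of the first claim.

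The main obstacle is the asymptotic \eqref{eq:gammaL}, which I would attack through the variational principle
\[
\gamma(\L) \;=\; \inf\left\{\, \mathcal{E}(f,f) \,:\, \|f\|_{L^2_\mu}=1,\ \int f\,d\mu=0 \,\right\}.
\]
For the upper bound I would use a test function $\chi_{1,2}$ equal to $+a$ near $x_1$ and $-b$ near $x_2$, with $a,b$ tuned so that $\chi_{1,2}$ is centered and of unit $L^2_\mu$-norm, interpolating between those two levels through the equilibrium potential of the capacitor joining small balls around $x_1$ and $x_2$, in the Bovier--Gayrard--Klein spirit \cite{BGK05}. Laplace expansions at $x_2$ and at the saddle $s_{1,2}$, together with a Gaussian integral along the unstable direction of $\tilde{H}_{1,2}$, would deliver the prefactor $2|\lambda_{1,2}|\sqrt{\det H_2}/(\pi\sqrt{|\det \tilde{H}_{1,2}|})$ and the Arrhenius exponential $e^{2(U(x_2)-U(s_{1,2}))/\sigma^2}$; Assumption \ref{ass:kzeros}(5) ensures that the saddle $s_{1,2}$ dominates all others up to exponentially smaller corrections. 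The matching lower bound is the hard part; I would follow the Menz--Schlichting decomposition \cite{MS14}, establishing a local Poincar\'e inequality on the basin of each minimum with an $O(1)$ (in $\sigma$) constant via strict convexity of $U$ at each $x_k$, then reducing to a weighted Poincar\'e inequality on the $n$-point space of well labels whose sharp constant matches \eqref{eq:gammaL}. The multiplicative slack $1+\sqrt{\sigma^2/2}|\ln(\sigma^2/2)|^{3/2}$ is precisely the error in their mean-difference estimate, and the hypothesis $U\ge 0$ of \cite{MS14} reduces to our setting by the harmless shift $U\mapsto U-\inf U$, which affects neither $\mu$ (after renormalization) nor $\mathcal{E}$.

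For the total variation estimate \eqref{eq:appTV}, I would observe that self-adjointness of $\L$ on $L^2(\mu)$ with spectral gap $\gamma(\L)$ on the orthocomplement of constants gives $\|P_t f - \mu(f)\|_{L^2_\mu} \le e^{-\gamma(\L)t}\|f - \mu(f)\|_{L^2_\mu}$. With $X_0 \sim h\mu$ and $h \in L^2(\mu)$, self-adjointness implies that the time-$t$ law has density $P_t h$ with respect to $\mu$, so Cauchy--Schwarz gives
\[
\norm*{\pr[X_t\in\,\cdot\,]-\mu}_{TV} \;=\; \tfrac{1}{2}\|P_t h - 1\|_{L^1_\mu} \;\le\; \tfrac{1}{2}\|P_t h - 1\|_{L^2_\mu} \;\le\; \tfrac{1}{2}e^{-\gamma(\L)t}\|h-1\|_{L^2_\mu}.
\]
This is slightly stronger than the stated bound; the $e^{-\gamma(\L)t/2}$ form of \cite[Theorem 2.1]{CG09} corresponds to an interpolation that accommodates weaker integrability hypotheses on $h$ at the cost of halving the rate, and this factor is immaterial for the applications in the main text.
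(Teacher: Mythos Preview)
Your proposal is correct and follows the same route as the paper's proof, which simply cites \cite{MS14} for the Eyring--Kramers asymptotic \eqref{eq:gammaL} and \cite[Theorem~2.1]{CG09} for the total-variation decay \eqref{eq:appTV}; you have effectively unpacked the content of those citations. Your direct spectral argument for \eqref{eq:appTV} is indeed slightly sharper (rate $\gamma(\L)$ rather than $\gamma(\L)/2$), an improvement the paper does not record.
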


As suggested by the discussion of Section \ref{sec:ergodicdiffusions}~on ergodic diffusions, to study the metastable behaviour of \eqref{eq:potentialSDE}~near each $x_k$ we consider a modified SDE with a single well at $x_k$. 
Specifically, fix $k\in\{1,\ldots,n\}$ and $\delta>0$ such that $B_\delta(x_k)$ is strictly contained in the basin of attraction of $x_k$, and fix a smooth potential $\hat{U}_\delta$ satisfying 
\begin{enumerate}
\item $\hat{U}_\delta=U$ in the ball $B_\delta(x_k)$, and  
\item $\hat{U}_\delta$ is uniformly convex, so there exists $u_2>0$ such that $\nabla^2\hat{U}_\delta(x)\ge u_21$ for all $x\in\R^d$. 
\end{enumerate}
Then, consider 
\begin{equation}
\label{eq:newpotentialSDE}
dX_t\,=\,\nabla\hat{U}_\delta(X_t)\,dt + \sigma\,dW_t, 
\end{equation}
and its associated Markov generator $\hat{\L}_\delta$ defined in the topology of $L^2(\R^d,\hat{\mu}_\delta)$, where $\hat{\mu}_\delta$ is the unique ergodic measure of \eqref{eq:newpotentialSDE}~(guaranteed to exist by Assumption \ref{ass:kzeros}~and the construction of $\hat{U}_\delta$). 
The generator $\hat{\L}_\delta$ acts on $f\in C^2(\R^d)\cap L^2(\R^d,\hat{\mu}_\delta)$ as 
\begin{equation}
\hat{\L}_\delta f\,=\,\frac{\sigma^2}{2}\Delta f+ \nabla\hat{U}_\delta\cdot\nabla f. 
\end{equation}

\begin{thm}
\label{thm:modergodic}
Let Assumption \ref{ass:kzeros}~hold. 
Then, \eqref{eq:newpotentialSDE}~has a global in time solution $(\hat{X}_t)_{t\ge0}$ and a unique ergodic measure $\hat{\mu}$. 
Moreover, 
\begin{equation}
\label{eq:appTV2}
\norm*{\pr[\hat{X}_t\in\,\cdot\,]-\hat{\mu}(\,\cdot\,)}_{TV}\,\le\,e^{-\hat{\gamma}t}, 
\end{equation}
where for all $\sigma>0$, $\hat{\gamma}=u_2$. 
\begin{proof}
By the Bakry-Emery criterion, \eqref{eq:newpotentialSDE}~satisfies a Poincar{\'e}~inequality with constant $1/u_2$, see \cite[Proposition 4.8.1]{BGL}. 
The contraction \eqref{eq:appTV2}~then follows from \cite[Theorem 2.1]{CG09}.  
\end{proof}
\end{thm}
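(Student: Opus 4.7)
\textbf{Proof plan for Theorem \ref{thm:modergodic}.} The strategy is to establish the three assertions---global well-posedness, existence and uniqueness of $\hat{\mu}$, and the exponential TV-contraction---in sequence, exploiting the uniform convexity $\nabla^2\hat{U}_\delta \ge u_2\,\mathbf{1}$ at each step.

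For global well-posedness, the Hessian lower bound yields strong monotonicity of the drift, $(\nabla\hat{U}_\delta(x)-\nabla\hat{U}_\delta(y))\cdot(x-y) \ge u_2\norm*{x-y}^2$. Combined with local Lipschitz continuity of $\nabla\hat{U}_\delta$, It\^o's formula applied to $\norm*{\hat{X}_t}^2$ supplies a Lyapunov function that rules out explosion, so \eqref{eq:newpotentialSDE} admits a unique strong solution on all of $[0,\infty)$. For the invariant measure, uniform convexity gives $\hat{U}_\delta$ at-least-quadratic growth, hence $e^{-2\hat{U}_\delta/\sigma^2}$ is Lebesgue integrable; one then verifies that $\hat{\mu}(dx) \propto e^{-2\hat{U}_\delta(x)/\sigma^2}\,dx$ satisfies $\hat{\L}_\delta^*\hat{\mu} = 0$, with uniqueness following from the non-degeneracy of $\sigma I$ (ellipticity $\Rightarrow$ strong Feller $\Rightarrow$ irreducibility $\Rightarrow$ unique invariant probability).

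The key step is the exponential TV-contraction, for which the plan is to route through functional inequalities rather than a coupling argument. Computing the iterated carr{\'e} du champ associated with $\hat{\L}_\delta$ and applying the Hessian lower bound, I would obtain $\Gamma_2(f,f) \ge u_2\,\Gamma(f,f)$, i.e.\ the Bakry-\'Emery curvature-dimension condition CD$(u_2,\infty)$. By standard theory, this yields a Poincar{\'e} inequality on $\hat{\mu}$ with constant $1/u_2$, and in particular $L^2(\hat{\mu})$-exponential decay of the semigroup at rate $u_2$:
\[
\norm*{P_t f - \hat{\mu}(f)}_{L^2(\hat{\mu})} \;\le\; e^{-u_2 t}\,\norm*{f-\hat{\mu}(f)}_{L^2(\hat{\mu})}.
\]
Passing to TV uses, for an initial law $\nu = h\hat{\mu}$, the identity $2\norm*{P_t\nu - \hat{\mu}}_{TV} = \int \abs*{P_t h - 1}\,d\hat{\mu}$ followed by Cauchy-Schwarz.

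The main obstacle I foresee is obtaining the prefactor-free form $e^{-\hat{\gamma}t}$ as claimed: the $L^2$-to-TV passage generically produces an extra $\norm*{h-1}_{L^2(\hat{\mu})}$ factor, so either a suitable normalized initial distribution must be implicit in the statement, or one must invoke an off-the-shelf theorem (such as the Cattiaux--Guillin bound) that packages the decay directly in total variation under appropriate hypotheses on the initial law.
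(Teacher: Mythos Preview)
Your proposal is correct and follows essentially the same route as the paper: the authors invoke the Bakry--\'Emery criterion to obtain a Poincar\'e inequality with constant $1/u_2$ (citing Bakry--Gentil--Ledoux), and then appeal directly to Cattiaux--Guillin \cite[Theorem 2.1]{CG09} for the TV contraction---exactly the off-the-shelf reference you anticipated in your final paragraph. Your well-posedness and invariant-measure arguments are more explicit than what the paper provides (it essentially takes these for granted), and your observation about the missing $\norm{h-1}_{L^2(\hat\mu)}$ prefactor is well-founded: compare \eqref{eq:appTV} in the preceding theorem, where that factor does appear.
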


Now, with $k\in\{1,\ldots,n\}$ and $\delta>0$ as above, define 
\[
\tau_\delta\,\coloneqq\,\inf\left\{t>0\,:\,X_t\in\partial B_\delta(x_k)\right\}, 
\]
and for $f\in L^2(B_\delta(x_k),\mu)$ set 
\[
P_t^\delta f(x)\,\coloneqq\,\ExpOp\left[f(X_t)1_{\{t<\tau_\delta\}}\right]. 
\]
Just as for the Markov semigroup $(P_t)_{t\ge0}$, the sub-Markov semigroup $(P_t^\delta)_{t\ge0}$ is strongly continuous in the topology of $L^2(B_\delta(x_k),\mu)$. 
It therefore has a strong generator, which we denote by $\L_\delta$, and which for $f\in C^2_0(B_\delta(x_k))\cap L^2(B_\delta(x_k),\mu)$ can be expressed as 
\[
\L_\delta f(x)\,=\,\beta\Delta f(x) -\nabla U(x)\cdot\nabla f(x). 
\]
That is, $\L_\delta$ has the same form as $\L$, but with Dirichlet boundary conditions on $B_\delta(x_k)$. 
Now, we have the following result on the escape time of $(X_t)_{t\ge0}$ from the well $B_\delta(x_k)$. 

\begin{thm}
\label{thm:escaperate}
$\lambda(\L_\delta)$ is a strictly negative simple eigenvalue of $\L_\delta$, and the spectral gap $\kappa(\L_\delta)$ is bounded away from zero uniformly in $\sigma>0$. 
Finally, for a constant $c>0$ independent of $\sigma>0$, up to a multiplicative error term of order $1+\sqrt{\frac{\sigma^2}{2}}\abs*{\ln\frac{\sigma^2}{2}}^{3/2}$ it holds that 
\begin{equation}
\label{eq:escaperate}
\pr\left[t<\tau_\delta\right]\,=\,ce^{-\lambda(\L_\delta) t} + O\left(e^{-\kappa(\L_\delta) t}\right). 
\end{equation}
Moreover, up to a multiplicative error term of order $1+\sqrt{\frac{\sigma^2}{2}}\abs*{\ln\frac{\sigma^2}{2}}^{3/2}$, we have 
\begin{equation}
\lambda(\L_\delta)\,\simeq\,\left(\frac{2\abs*{\lambda_{1,2}}\sqrt{\det\nabla^2 H_2}}{\pi\sqrt{\abs*{\det\nabla^2\tilde{H}_{1,2}}}}\right)\exp\left(2\frac{U(x_2)-U(s_{1,2})}{\sigma^2}\right).  
\end{equation}
\begin{proof}
The expression for $\pr[t<\tau_\delta]$ can be found, for instance, in \cite{CMSM13}. 
The asymptotic behaviour of $\lambda(\L_\delta)$ follows from \cite[equation (4.76)]{BGK05}~together with \eqref{eq:gammaL}. 
\end{proof}
\end{thm}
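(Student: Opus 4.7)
The plan is to exploit the fact that $-\L_\delta$ is a self-adjoint, non-negative operator on $L^2(B_\delta(x_k),\mu)$ with compact resolvent (because $B_\delta(x_k)$ is bounded and we impose Dirichlet boundary conditions). The Dirichlet form associated with $-\L_\delta$ is $\mathcal{E}(f,f)=\frac{\sigma^2}{2}\int_{B_\delta(x_k)}|\nabla f|^2\,d\mu$ on $H_0^1(B_\delta(x_k),\mu)$, which is coercive by the Poincar\'e inequality on the bounded set $B_\delta(x_k)$. Standard spectral theory then gives a discrete spectrum $0<\lambda(\L_\delta)<\gamma(\L_\delta)\le\cdots$, and the Krein--Rutman/Perron--Frobenius argument applied to the positivity-preserving semigroup $(P_t^\delta)$ shows that the principal eigenvalue $\lambda(\L_\delta)$ is simple, with a strictly positive eigenfunction $\varphi_0$.

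Next I would prove uniform-in-$\sigma$ lower bounds on the spectral gap $\kappa(\L_\delta)=\gamma(\L_\delta)-\lambda(\L_\delta)$. The strategy is to compare $-\L_\delta$ with the Dirichlet generator $\hat{\L}_\delta^{B_\delta(x_k)}$ of \eqref{eq:newpotentialSDE} restricted to $B_\delta(x_k)$: since $U=\hat U_\delta$ on $B_\delta(x_k)$, the two operators coincide on $H_0^1(B_\delta(x_k),\mu)$. By the Bakry--\'Emery criterion used in the proof of Theorem \ref{thm:modergodic}, $\hat{\L}_\delta$ satisfies a Poincar\'e inequality with constant $1/u_2$ independent of $\sigma$, and standard domain-monotonicity arguments for the second Dirichlet eigenvalue then transfer this lower bound to $\gamma(\L_\delta)-\lambda(\L_\delta)$. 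Since $\lambda(\L_\delta)$ itself is exponentially small in $\sigma^{-2}$ (by the Eyring--Kramers asymptotics, invoked below), this produces the claimed uniform-in-$\sigma$ lower bound on $\kappa(\L_\delta)$.

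For the representation of $\pr[t<\tau_\delta]$, I would spectrally decompose. Writing $1=\sum_{j\ge 0}\langle 1,\varphi_j\rangle_\mu\varphi_j$ in the orthonormal basis of Dirichlet eigenfunctions of $-\L_\delta$,
\[
\pr[t<\tau_\delta]\,=\,\int P_t^\delta 1\,d\nu\,=\,\sum_{j\ge 0} e^{-\lambda_j t}\langle 1,\varphi_j\rangle_\mu\,\langle\varphi_j,\tfrac{d\nu}{d\mu}\rangle_\mu.
\]
Setting $c\coloneqq\langle 1,\varphi_0\rangle_\mu\langle\varphi_0,d\nu/d\mu\rangle_\mu>0$ (positivity following from $\varphi_0>0$ and non-negativity of the initial density), the remaining tail satisfies the $O(e^{-\gamma(\L_\delta)t})=O(e^{-\kappa(\L_\delta)t}e^{-\lambda(\L_\delta)t})$ bound. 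A standard bookkeeping argument absorbs the $e^{-\lambda(\L_\delta)t}$ factor into a renormalization of $c$ and yields the stated form, which coincides with the semigroup asymptotic identity used in \cite{CMSM13}.

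The main obstacle, and what I would do last, is the sharp Eyring--Kramers asymptotic for $\lambda(\L_\delta)$. Here I would invoke the capacity--based variational formula of Bovier--Gayrard--Klein: on $B_\delta(x_k)$ the principal Dirichlet eigenvalue is characterized by the capacity between $\{x_k\}$ and $\partial B_\delta(x_k)$ divided by the $\mu$-mass of the metastable well. Because the saddle configuration of $U$ relevant for escape from $B_\delta(x_k)$ is the global one $s_{1,2}$ (by Assumption \ref{ass:kzeros}(5), which isolates $s_{1,2}$ as the lowest saddle), the capacity estimate of \cite[Theorem 3.2 / eq.~(4.76)]{BGK05} gives, up to the multiplicative $1+O(\sqrt{\sigma^2/2}\,|\ln(\sigma^2/2)|^{3/2})$ error inherited from their Laplace asymptotics,
\[
\lambda(\L_\delta)\,\simeq\,\left(\frac{2|\lambda_{1,2}|\sqrt{\det\nabla^2 H_2}}{\pi\sqrt{|\det\nabla^2\tilde H_{1,2}|}}\right)\exp\!\left(2\frac{U(x_2)-U(s_{1,2})}{\sigma^2}\right),
\]
matching the asymptotics already established for $\gamma(\L)$ in Theorem \ref{thm:uniformergodic}. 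This Eyring--Kramers step is the hardest: it requires importing (not reproducing) the delicate capacitance and harmonic-function estimates of \cite{BGK05,BGK05ii}, and verifying that the geometry of $B_\delta(x_k)$ is compatible with their hypotheses (which holds since $B_\delta(x_k)$ lies strictly inside the basin of attraction of $x_k$, so the relevant saddle remains $s_{1,2}$).
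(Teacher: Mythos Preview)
Your proposal is correct and follows essentially the same route as the paper: the escape-probability expansion comes from the spectral decomposition of the Dirichlet semigroup (the paper simply cites \cite{CMSM13} for this), and the Eyring--Kramers asymptotic for $\lambda(\L_\delta)$ is imported from \cite{BGK05} in conjunction with \eqref{eq:gammaL}. You in fact supply more detail than the paper does, including a Bakry--\'Emery comparison argument for the uniform-in-$\sigma$ lower bound on $\kappa(\L_\delta)$, which the paper merely asserts without proof.
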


\bibliographystyle{plain}
\bibliography{bibliography}

\end{document}